\theoremstyle{plain}
\newtheorem{theorem}{Theorem}[section]
\newtheorem{proposition}[theorem]{Proposition}
\newtheorem{lemma}[theorem]{Lemma}
\newtheorem{corollary}[theorem]{Corollary}
\newtheorem{assumption}{Assumption}
\theoremstyle{plain}
\newtheorem{definition}[theorem]{Definition}
\theoremstyle{definition}
\newtheorem{remark}[theorem]{Remark}
\newcommand{\dummy}{\mathord{\color{black!33}\bullet}}%
\providecommand{\bbN}{\mathbb{N}}
\providecommand{\bbR}{\mathbb{R}}
\providecommand{\bbE}{\mathbb{E}}
\providecommand{\bbP}{\mathbb{P}}
\providecommand{\CE}{\mathcal{E}}
\providecommand{\CN}{\mathcal{N}}
\providecommand{\CC}{\mathcal{C}}
\providecommand{\CO}{\mathcal{O}}
\providecommand{\CP}{\mathcal{P}}
\providecommand{\CM}{\mathcal{M}}
\providecommand{\CF}{\mathcal{F}}
\providecommand{\CF}{\mathcal{F}}
\providecommand{\CI}{\mathcal{I}}
\providecommand{\CO}{\mathcal{O}}
\providecommand{\Id}{\mathrm{Id}}
\providecommand{\argmin}{\operatorname*{\arg\min}}
\providecommand{\supp}{\operatorname{supp}}
\providecommand*{\abs}[1]{\left|{#1}\right|} 
\providecommand*{\N}[1]{\left\|{#1}\right\|} 
\providecommand*{\Nnormal}[1]{\|{#1}\|} 
\providecommand*{\Nbig}[1]{\big\|{#1}\big\|} 
\newcommand{\overbar}[1]{\makebox[0pt]{$\phantom{#1}\mkern 1.5mu\overline{\mkern-1.5mu\phantom{#1}\mkern-1.5mu}\mkern 1.5mu$}#1}
\renewcommand{\underbar}[1]{\makebox[0pt]{$\phantom{#1}\mkern 1.5mu\underline{\mkern-1.5mu\phantom{#1}\mkern-1.5mu}\mkern 1.5mu$}#1}
\newcommand{\overbarscript}[1]{\mkern 1.5mu\overline{\mkern-1.5mu#1\mkern-1.5mu}\mkern 0mu}
\newcommand{\underbarscript}[1]{\mkern 1.5mu\underline{\mkern-1.5mu#1\mkern-1.5mu}\mkern 1.5mu}
\newcommand{\divergence}{\textrm{div}}
\newcommand{\globmin}{{x^*}}
\newcommand{\minobj}{\underbar \CE}
\newcommand{\omegaa}[0]{\omega_{\alpha}^\CE}
\newcommand{\conspoint}[1]{x_{\alpha}^\CE({#1})}
\newcommand{\conspointE}[2]{x^{{#1}}_{\alpha}({#2})}
\newcommand{\conspointnoarg}{x_{\alpha}^\CE}
\newcommand{\conspointnoargE}[1]{x_{\alpha}^{#1}}
\newcommand{\empmeasure}[1]{\widehat\rho_{#1}^N}
\newcommand{\empmeasuretilde}[1]{\widetilde\rho_{#1}^N}
\newcommand{\indivmeasure}[0]{\varrho} 
\title{\usefont{OT1}{bch}{b}{n}
	{\huge Gradient is All You Need?}\\ \Large How Consensus-Based Optimization can be Interpreted as a Stochastic Relaxation of Gradient Descent \\
}
\date{}
\author[1]{Konstantin Riedl\thanks{Email: \texttt{konstantin.riedl@maths.ox.ac.uk}}}
\affil[1]{University of Oxford, Mathematical Institute}
\author[2]{Timo Klock\thanks{Email: \texttt{tmklock@googlemail.com}}}
\affil[2]{Deeptech Consulting}
\author[3]{Carina Geldhauser\thanks{Email: \texttt{carina.geldhauser@math.ethz.ch}}}
\affil[3]{ETH Zurich, Department of Mathematics}
\author[4,5,6]{Massimo Fornasier\thanks{Email: \texttt{massimo.fornasier@ma.tum.de}}}
\affil[4]{Technical University of Munich, School of Computation, Information and Technology, Department of Mathematics}
\affil[5]{Munich Center for Machine Learning}
\affil[6]{Munich Data Science Institute}
\begin{document}
\maketitle
\vspace{-1em}
\begin{abstract}
\noindent
	In this paper, we provide a novel analytical perspective on the theoretical understanding of gradient-based learning algorithms by interpreting consensus-based optimization~(CBO), a recently proposed multi-particle derivative-free optimization method, as a stochastic relaxation of gradient descent. 
	Remarkably, we observe that through communication of the particles, CBO exhibits a stochastic gradient descent~(SGD)-like behavior
	despite solely relying on evaluations of the objective function.
    The fundamental value of such link between CBO and SGD lies in the fact that CBO is provably globally convergent to global minimizers for ample classes of nonsmooth and nonconvex objective functions.
    Hence, on the one side, we offer a novel explanation for the success of stochastic relaxations of gradient descent by furnishing useful and precise insights that explain how problem-tailored stochastic perturbations of gradient descent (like the ones induced by CBO) overcome energy barriers and reach deep levels of nonconvex functions.
    On the other side, and contrary to the conventional wisdom for which derivative-free methods ought to be inefficient or not to possess generalization abilities, our results unveil an intrinsic gradient descent nature of heuristics. 
	Instructive numerical illustrations support the theoretical insights.
\end{abstract}

{\noindent\small{\textbf{Keywords:} stochastic relaxations of gradient descent, consensus-based optimization, stochastic methods, global optimization, nonconvex optimization, gradient-based learning, stochastic gradient descent}}\\

{\noindent\small{\textbf{AMS subject classifications:} 65K10, 90C26, 90C56, 35Q90, 35Q84}}

\section{Introduction}

Gradient-based learning algorithms,
such as stochastic gradient descent~(SGD), AdaGrad~\cite{duchi2011adagrad}, \mbox{RMSProp} and Adam~\cite{adam2015}, just to name a few of the most known and advocated,
have undoubtedly been one of the cornerstones of the astounding successes of machine learning~\cite{collobert2008unified,graves2013speech,krizhevsky2017imagenet} in the last decades.
In particular, the efficient computation of gradients through backpropagation~\cite{rumelhart1986learning} and automatic differentiation~\cite{baydin2018automatic} has allowed practitioners to leverage nowadays enormous amounts of data to train huge models~\cite{lecun2015deep}.
Despite an ever-growing relevance of advancing our mathematical understanding concerning the behavior of gradient-based learning algorithms when employed to train neural networks,
the fundamental reasons behind their empirical successes largely remain elusive~\cite{zhang2021understanding} and defy our theoretical understanding~\cite{mei2018mean}.
Yet, over the last years, several studies have started shedding light on the peculiarities of neural network loss functions as well as the training dynamics of SGD and its variants, see, e.g., \cite{mei2018mean,chizat2018global,rotskoff2018trainability,sirignano2020mean,choromanska2015loss,soudry2016no,kawaguchi2016deep,nguyen2017loss,safran2018spurious,soltanolkotabi2018theoretical,du2018power,du2019gradient,oymak2019overparameterized,fehrman2020convergence} and references therein.
While shallow neural networks are prone to spurious local minima~\cite{safran2018spurious}, which render the optimization NP-hard in general~\cite{blum1988training}, overparameterization is widely believed to be responsible for well-behaved loss function landscapes, allowing gradient-based learning algorithms to find parameters that generalize for a variety of architectures~\cite{choromanska2015loss,kawaguchi2016deep,soudry2016no,nguyen2017loss,du2018power,soltanolkotabi2018theoretical,du2019gradient,arora2019implicit}.

In this work, we consider the more generic, ubiquitous problem of finding a global minimizer of a potentially nonsmooth and nonconvex objective function~$\CE:\bbR^d\rightarrow\bbR$, i.e., solving
$\globmin\in \argmin_{x\in\bbR^d} \CE(x)$.
Supported by illustrative numerical experiments (see Figure~\ref{fig:intuitionGiAyN}), we provide a novel analytical perspective on gradient-based learning algorithms for general global optimization problems. Specifically, we interpret the recently proposed multi-particle metaheuristic derivative-free optimization method, known as consensus-based optimization (CBO)~\cite{pinnau2017consensus}, as a stochastic relaxation of gradient descent (GD). 
The formulation of CBO is recalled below in Equation \eqref{eq:CBO_dynamics}.
The main result of this paper that formally clarifies the stochastic approximation of GD by CBO  is stated in Theorem~\ref{thm:main_informal}.
The key benefit of and the key motivation for establishing a link between CBO and (S)GD lies in the proven ability of CBO~\cite{carrillo2018analytical,carrillo2019consensus,ha2021convergence,fornasier2020consensus_sphere_convergence,fornasier2021consensus,fornasier2021convergence,fornasier2023consensus,riedl2024perspective} (see Section~\ref{sec:CBO_convergence_results} for a review of \cite{fornasier2021consensus,fornasier2021convergence,riedl2024perspective}) to achieve global convergence to global minimizers for broad classes of nonsmooth and nonconvex objective functions,
which includes in particular the setting of high-dimensional problems coming from data analysis and signal processing~\cite{fornasier2020consensus_sphere_convergence,riedl2022leveraging}, as well as machine learning~\cite{carrillo2019consensus,fornasier2021convergence,fornasier2020consensus_sphere_convergence,riedl2022leveraging,trillos2023FedCBO,trillos2024CB2O,trillos2024attack}.%
\footnote{\cite[Section~2.4]{fornasier2020consensus_sphere_convergence} applies CBO for a phase retrieval problem, robust subspace detection, and the robust computation of eigenfaces; \cite[Section~4.4]{riedl2022leveraging} solves a compressed sensing task; \cite[Section~4.3]{carrillo2019consensus}, \cite[Section~4]{fornasier2021convergence}, and \cite[Section~4.3]{riedl2022leveraging} train neural networks for image classification; \cite[Section~5.3]{trillos2024CB2O} tackles a sparse representation learning problem; \cite[Section~2.4]{trillos2023FedCBO} and \cite[Section~3]{trillos2024attack} devise FedCBO and FedCB$^2$O, respectively, to solve clustered federated learning problems while ensuring maximal data privacy in both attack-free and adversarial environments; \cite[Section~IV]{CBOrobotics} uses CBO to design optimal trajectories and policies
for robotic systems.\label{footnote:CBO_applications}}

This previously unexplored connection between mathematically explainable derivative-free optimization methods and gradient-based learning algorithms provides, on the one hand, a novel and complementary perspective on the success of stochastic relaxations of GD, and, on the other hand, unveils the intrinsic GD-like nature of heuristic methods.

\paragraph{Contributions}


To our knowledge, for the first time in the literature of CBO and related multi-particle-based heuristics, such as particle swarm optimization \cite{kirkpatrick1983optimization,grassi2021particle}, we demonstrate that, under appropriate parameter scalings, CBO\,---\,despite being a derivative-free (zero-order) optimization method\,---\,naturally approximates stochastic gradient flow dynamics and thus implicitly behaves like a gradient-based (first-order) method (see Theorem~\ref{thm:main_informal} and Figures~\ref{fig:intuitionGiAyN} and~\ref{fig:approximation}). 
To establish this connection, we employ a fully nonsmooth analysis that combines a recently developed quantitative version of the Laplace principle~\cite{fornasier2021consensus} (log-sum-exp trick) with the minimizing movement scheme~\cite{de1993new} (proximal iteration~\cite{parikh2014proximal}), a well-known tool from gradient flow theory~\cite{santambrogio2017euclideanmetricandwassersteingradientflows}. 
Our results shed light on the {\it nonlocal} mechanisms through which stochastic perturbations in GD overcome energy barriers, unlocking deeper levels of nonconvex objective functions and enabling global optimization. To the best of our knowledge, this insight is unprecedented in the literature, which has traditionally focused on interpreting (S)GD dynamics solely from a {\it local} perspective.
Moreover, while the standard global analysis of (stochastic) GD typically requires the loss function to be $L$-smooth and to satisfy the Polyak-{\L}ojasiewicz condition, the global convergence of CBO only necessitates local Lipschitz continuity and a specific growth condition near the global minimizer~\cite{fornasier2021consensus,fornasier2021convergence}.
By establishing such a link between stochastic GD on the one hand and metaheuristic black-box optimization algorithms such as CBO on the other,
we not just allow for complementing our theoretical understanding of successfully deployed optimization algorithms in machine learning and beyond,
but we also widen the scope of applications of methods which\,---\,in one way or another, be it explicitly or implicitly\,---\,estimate and exploit gradients.

\paragraph{Organization}
Section~\ref{sec:Assumptions} summarizes the assumptions under which the results of this work are valid.
In Section~\ref{sec:CBO}, after introducing CBO and describing the mechanisms behind its functioning, we present, discuss, and numerically verify the main theoretical results of this work.
Section~\ref{sec:CBO_convergence_results} recapitulates state-of-the-art global convergence results for CBO in the setting of potentially nonsmooth and nonconvex objective functions~$\CE$.
Section~\ref{sec:CBO_stochastic_relaxation_GD} is dedicated to presenting the technical details behind the main theoretical findings of this work.
We first sketch how to interpret CBO as a stochastic relaxation of GD by introducing what we call the {\it consensus hopping scheme},
which interconnects by sampling the derivative-free with the gradient-based approach to optimization.
It further highlights a connection between sampling and optimization.
Afterwards, the proof of our main result, Theorem~\ref{thm:main_informal}, is provided in Section~\ref{sec:proof:thm:main_informal} with the central technical tools being collected in Section~\ref{sec:appendix:proof_details}.
Section~\ref{sec:conclusions} eventually concludes the paper by discussing future perspectives.
In the 
\href{https://github.com/KonstantinRiedl/CBOGlobalConvergenceAnalysis}{GitHub repository}
we provide the implementation of the algorithms analyzed in this work and the code used to create the visualizations.
Python and Julia code for CBO are available in the \href{https://github.com/PdIPS}{GitHub repository}~\cite{bailo2024cbx}.

\paragraph{Notation}
We write $\CC(X)$ and $\CC^k(X)$ for the spaces of continuous and $k$-times continuously differentiable functions~$f:X\rightarrow \bbR$, respectively.
With $\nabla f$ we denote the gradient of a differentiable function~$f$.
$\CP(\bbR^d)$, respectively $\CP_p(\bbR^d)$, is the set containing all probability measures over $\bbR^d$ (with finite $p$-th moment).
$\CP_p(\bbR^d)$ is metrized by the \mbox{Wasserstein-$p$} distance~$W_p$, see, e.g., \cite{savare2008gradientflows,villani20090oldandnew}.
$\CN(m,\Sigma)$ denotes a Gaussian distribution with mean $m$ and covariance matrix~$\Sigma$.

\section{Characterization of the class of objective functions}
    \label{sec:Assumptions}

The theoretical findings of this work hold for objectives satisfying the following conditions,
which are complementary to classical assumptions under which the convergence of stochastic relaxations of gradient descent has been studied, see, e.g., \cite{karimi2016linear,chizat2018global}.

\begin{assumption}
    \label{asm:objective}
	Throughout we consider objective functions $\CE \in \CC(\bbR^d)$, 
	\begin{enumerate}[label=A\arabic*,labelsep=10pt,leftmargin=35pt]
		\item\label{asm:zero_global} for which there exists $\globmin\in\bbR^d$ such that $\CE(\globmin)=\inf_{x\in\bbR^d} \CE(x) =: \underbar\CE$,
        \item\label{asm:local_Lipschitz_quadratic_growth} for which there exist $C_1, C_2 > 0$ such that
		\begin{align}
			\abs{\CE(x)-\CE(x')}
			&\leq C_1(1 + \N{x}_2 + \Nnormal{x'}_2)\Nnormal{x-x'}_2
			\quad \text{ for all } x, x' \in \bbR^d, \label{asm:local_Lipschitz_quadratic_growth:1}\\
			\abs{\CE(x)-\underbar\CE}
			&\leq C_2(1 + \N{x}_2^2)
			\quad \text{ for all } x \in \bbR^d, \label{asm:local_Lipschitz_quadratic_growth:2}
		\end{align}
		\item\label{asm:quadratic_growth} for which either $\overbar\CE := \sup_{x \in \bbR^d}\CE(x) < \infty$, or for which there exist $C_3,C_4 > 0$ such that
		\begin{align} \label{asm:quadratic_growth:2}
			\CE(x) - \minobj
			\geq C_3\N{x}_2^2
			\quad \text{ for all } x \in \bbR^d \text{ with } \N{x}_2 \geq C_4,
		\end{align} 
	\item\label{asm:lambda-convex} which are semi-convex ($\Lambda$-convex for some $\Lambda\in\bbR$), i.e., $\CE(\!\;\dummy\!\;)-\frac{\Lambda}{2}\N{\!\;\dummy\!\;}_2^2$ is convex.
	\end{enumerate}
\end{assumption}

Assumption~\ref{asm:zero_global} requires that the continuous objective function~$\CE$ attains its globally minimal value~$\underbar\CE$ at some $\globmin\in\bbR^d$.
This does not exclude objectives with multiple global minimizers.

\begin{remark}
    \label{rem:ICP}
    For the global convergence results~\cite{fornasier2021consensus,fornasier2021convergence} of CBO (which we recapitulate in Section~\ref{sec:CBO_convergence_results}), uniqueness of the global minimizer~$\globmin$ is required
    and implied by an additional local coercivity condition of the form $\N{x-\globmin}_\infty \leq (\CE(x)-\underbar\CE)^\nu/\eta$ for all $x\in B^\infty_{R_0}(\globmin)$ and $\CE(x)-\underbar\CE>\CE_\infty$ outside of $B^\infty_{R_0}(\globmin)$,
    where $\eta,\nu,\CE_\infty,R_0>0$ characterize the objective.
    It can be regarded as a tractability condition of the energy landscape of $\CE$
    and is also known as the inverse continuity property from~\cite{fornasier2020consensus_sphere_convergence} or as the error bound condition from \cite{anitescu2000degenerate,xu2017adaptive,bolte2017error,necoara2019linear}.

    To deploy CBO also in the setting of objective functions with several global minima,
\cite{bungert2022polarized,fornasiersun2024} propose a polarized variant of CBO, which localizes the dynamics by integrating a kernel in the computation of the consensus point~\eqref{eq:consensus_point}.
    This ensures that each particle is primarily influenced by particles close to it, allowing for the creation of clusters. We do not explore this more general setting in the present paper.
\end{remark}

Assumptions~\ref{asm:local_Lipschitz_quadratic_growth} and~\ref{asm:quadratic_growth} can be regarded as regularity conditions on the objective landscape of $\CE$.
The first part of \ref{asm:local_Lipschitz_quadratic_growth} is a local Lipschitz condition, which ensures that the objective function does not change too quickly, assuring that the information obtained when evaluating the function is informative within a region around the point of evaluation.
The second part of \ref{asm:local_Lipschitz_quadratic_growth} controls and limits the growth of the objective in the farfield.
In combination with the second option in \ref{asm:quadratic_growth} this forces the objective to grow quadratically in the farfield.
However, one can always redefine the objective outside a sufficiently large ball such that both conditions are met while the other assumptions are preserved.
Alternatively, the first option in \ref{asm:quadratic_growth} allows for bounded functions.
\ref{asm:local_Lipschitz_quadratic_growth} and~\ref{asm:quadratic_growth} are necessary for well-posedness of the CBO dynamics.

Assumption~\ref{asm:lambda-convex} requires the objective~$\CE$ to be semi-convex with parameter $\Lambda\in\bbR$.
For $\Lambda>0$, $\Lambda$-convexity is stronger than convexity (strong convexity with parameter~$\Lambda$).
For $\Lambda<0$, semi-convexity is weaker, i.e., potentially nonconvex functions~$\CE$ are included in the definition.
In particular, on a bounded set, all
smooth ($\CC^2$ is sufficient) functions are $\Lambda$-convex for a suitable $\Lambda<0$.
The semi-convexity assumption of objective functions is quite standard in the literature of gradient flows, since their general theory extends from the convex to this more general setting~\cite{savare2008gradientflows,santambrogio2017euclideanmetricandwassersteingradientflows}, which covers many of the relevant cases.
One useful property, which we shall exploit in this work, is that for semi-convex functions the time discretization of a gradient flow, potentially for a small step size, defined through an iterated scheme, the so-called {\it minimizing movement scheme}~\cite{de1993new}, is well-defined.
However, while semi-convexity is useful to ensure the well-posedness of gradient flows, it is not sufficient to obtain convergence to global minimizers.
Other properties such as the Polyak-{\L}ojasiewicz condition~\cite{karimi2016linear} or the log-Sobolev inequalities governing the flow of the Langevin dynamics~\cite{chizat2018global} may be necessary.

The class of objective functions~$\CE$ captured by Assumptions~\ref{asm:zero_global}--\ref{asm:lambda-convex} is quite broad and includes typical loss functions in  signal processing as well as machine learning.
Examples are the objectives of
lasso and ridge regression,
or empirical risk functions with for instance the least squares loss and weight decay.
Moreover, several standard benchmark functions in optimization~\cite{JamilYang2013}, such as the nonconvex Rastrigin or Ackley function, obey \ref{asm:zero_global}--\ref{asm:lambda-convex} as well.

\section{Consensus-based optimization and the main result} \label{sec:CBO}

Inspired by particle swarm optimization~(PSO)~\cite{kennedy1995particle,grassi2021particle}, CBO methods employ an interacting stochastic system of $N$ particles $X^1,\dots,X^N$ to explore the domain and to form consensus about the global minimizer~$\globmin$ over time.
More concretely, given an arbitrary finite number of time steps~$K$, a discrete time step size~$\Delta t>0$ and denoting the position of the $i$-th particle at time step~$k\in\{0,\dots,K\}$ by $X_{k}^i$, this position is computed for user-specified parameters~$\alpha,\lambda,\sigma>0$ according to the iterative update rule
\begin{align} \label{eq:CBO_dynamics}
\begin{aligned}
	X_{k}^i
	= X_{k-1}^i
    - \Delta t\lambda \left(X_{k-1}^i - \conspoint{\empmeasure{k-1}}\right) 
    + \sigma D\!\left(X_{k-1}^i-\conspoint{\empmeasure{k-1}}\right) B_{k}^i,
\end{aligned}
\end{align}
where $\empmeasure{k}$ denotes the empirical measure of the particles at time step $k$, i.e., $\empmeasure{k} = \frac{1}{N}\sum_{i=1}^N \delta_{X_k^i}$.
In the spirit of the exploration-exploitation philosophy of evolutionary computation techniques \cite{holland1992adaptation,back1997handbook,fogel2006evolutionary}, the dynamics~\eqref{eq:CBO_dynamics} of each particle is governed by two competing terms, one being stochastic, the other deterministic in nature.
The first of the two terms on the right-hand side of~\eqref{eq:CBO_dynamics} imposes a deterministic drift towards the so-called consensus point~$\conspointnoarg$, which 
is defined for a measure~$\varrho\in\CP(\bbR^d)$ by
\begin{equation}
    \label{eq:consensus_point}
	\conspoint{\varrho}
	:= \int x \frac{\omegaa(x)}{\N{\omegaa}_{L^1(\varrho)}} d\varrho(x),
\end{equation}
with $\omegaa(x) := \exp(-\alpha\CE(x))$.
Notice that in the case $\varrho=\empmeasure{k}$, Formula~\eqref{eq:consensus_point} is just a weighted (exploiting the particles' knowledge of their objective function values) convex combination of the positions~$X_k^i$.
To be precise, owed to the particular choice of Gibbs weights~$\omegaa$, larger mass is attributed to particles with comparably low objective value, whereas only little mass is given to particles whose value is undesirably high.
This facilitates the interpretation that $\conspoint{\empmeasure{k}}$ is an approximation to $\argmin_{i=1,\dots,N} \CE(X_k^i)$, which improves as $\alpha\rightarrow\infty$ and which can be regarded as a proxy for the global minimizer~$\globmin$, based on the information currently available to the particles.
Theoretically, this is justified by the log-sum-exp trick or the Laplace principle~\cite{dembo2009large,miller2006applied}.
Let us further remark that the particles communicate and exchange information amongst each other exclusively through sharing the consensus point~$\conspointnoarg$.
The other term in~\eqref{eq:CBO_dynamics} is a stochastic diffusion injecting randomness into the dynamics, thereby encoding its explorative nature.
Given i.i.d.\@ Gaussian random vectors~$B_{k}^i$ in~$\bbR^d$ with zero mean and covariance matrix $\Delta t\Id$, 
each particle is subject to anisotropic noise, i.e., $D(\,\dummy\,) = \mathrm{diag}(\,\dummy\,)$,\footnote{$\mathrm{diag}:\bbR^{d}\rightarrow\bbR^{d\times d}$ denotes the operator mapping a vector to a diagonal matrix with the vector as diagonal.} which favors exploration the farther a particle is away from the consensus point in a certain direction.
In particular, the diffusive character of the dynamics vanishes over time as consensus is reached.
The described exploration-exploitation mechanism can be seen as a multi-particle reincarnation of similar ones executed by 
simulated annealing (SA)~\cite{kirkpatrick1983optimization,geman1986diffusions,holley1988simulated} and the annealed Langevin dynamics~\cite{gelfand1991recursive}.
System~\eqref{eq:CBO_dynamics} is complemented with independent initial data~$x_{0}^i$ distributed according to a common probability measure~$\rho_0\in\CP(\bbR^d)$, i.e., $X_{0}^i = x_{0}^i \sim \rho_0$.

Hence, CBO distills fundamental principles from other popular and successful metaheuristics, in particular PSO and SA, yet it comes with two fundamental advantages compared to these algorithms.
Firstly, it outperforms such well-established methods in experiments over challenging benchmarks~\cite{grassi2021particle,grassi2021mean,huang2022global}.
Secondly, and remarkably, it comes with  theoretical guarantees of global convergence to global minimizers and the quantification of the convergence rate ~\cite{carrillo2018analytical,carrillo2019consensus,ha2021convergence,fornasier2020consensus_sphere_convergence,fornasier2021consensus,fornasier2021convergence,riedl2024perspective}.
For these reasons, it has to be considered as a baseline for understanding heuristics.

\definecolor{blue_intuition}{RGB}{0,114,189}
\definecolor{red_intuition}{RGB}{217,83,25}
\begin{figure*}[ht]
	\centering
	\subcaptionbox{A noisy Canyon function~$\CE$ with a valley shaped as a third degree polynomial. \label{fig:GrandCanyon3noisy}}{\includegraphics[trim=91 251 79 250,clip,width=0.42\textwidth]{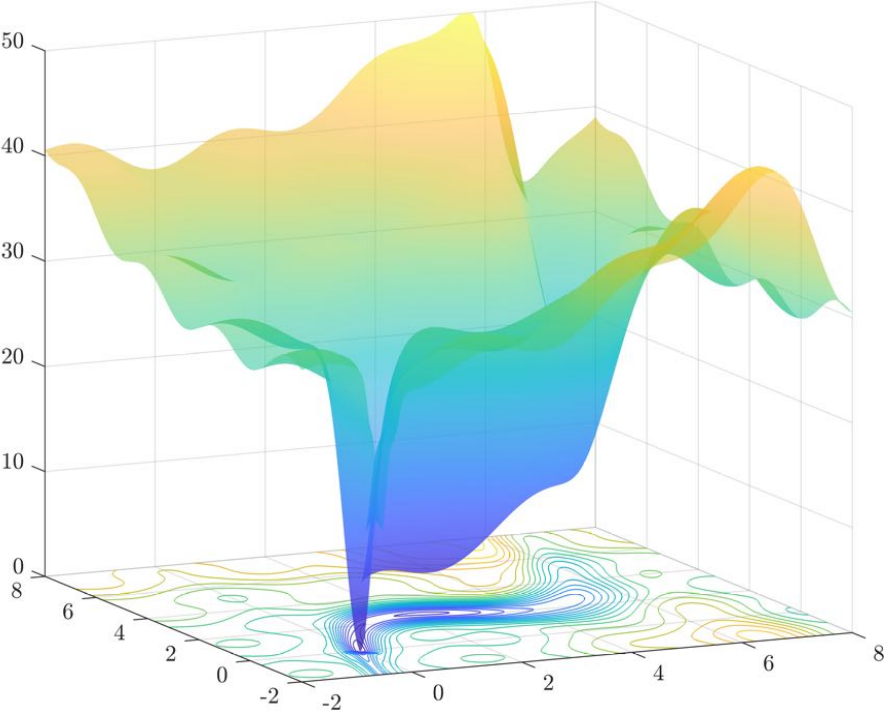}}
	\hspace{4em}
	\subcaptionbox{The CBO scheme~\eqref{eq:CBO} (sampled over several runs) follows on average the valley of $\CE$ while passing over local minima. \label{fig:CBO_GrandCanyon3noisy}}{\includegraphics[trim=28 209 31 200,clip,width=0.42\textwidth]{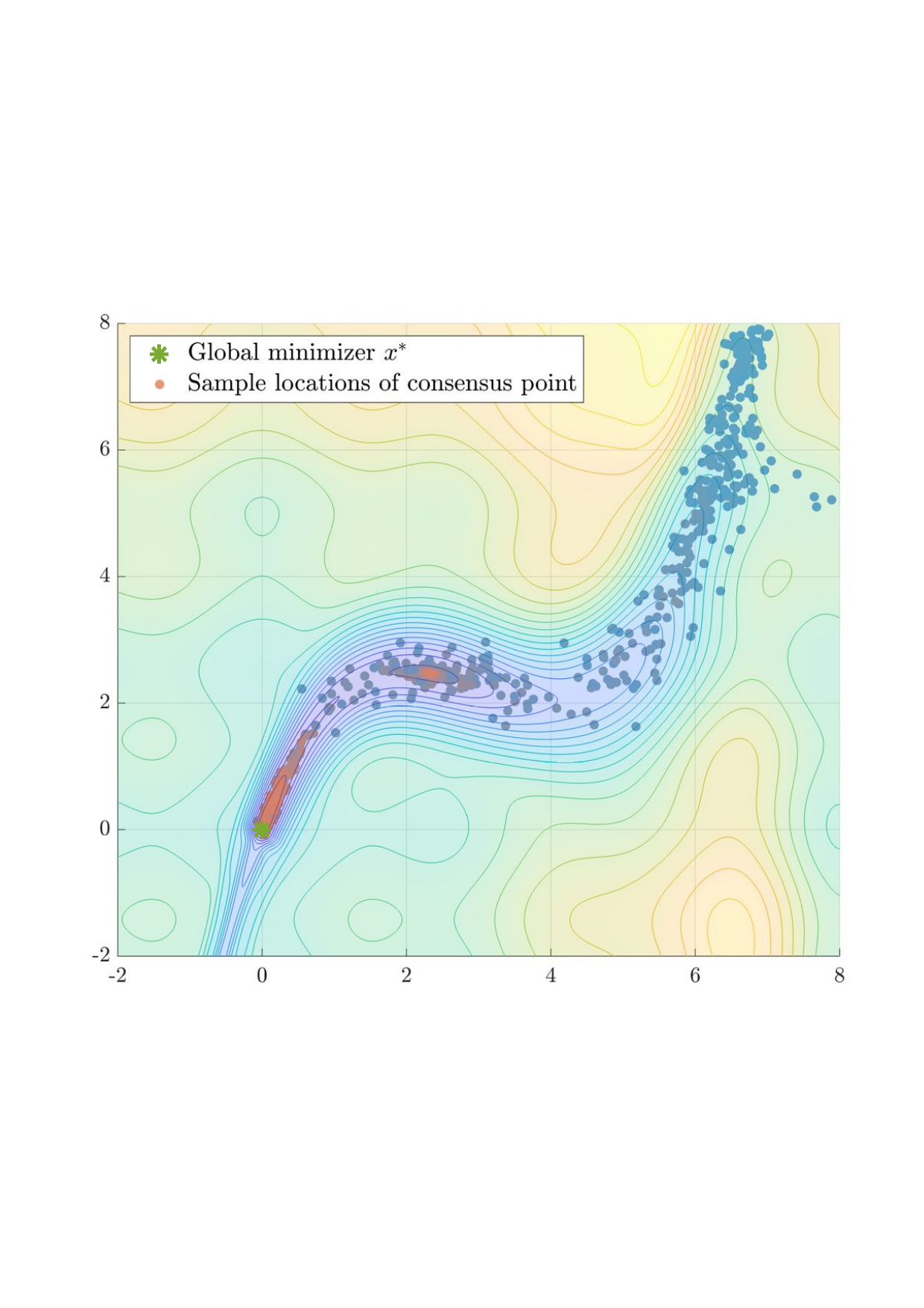}}
	\caption{An illustration of the intuition that the CBO scheme~\eqref{eq:CBO} can be regarded as a stochastic derivative-free (zero-order) relaxation of GD.
	To find the global minimizer~$\globmin$ of the nonconvex objective function~$\CE$ depicted in (a), 
	we run the CBO algorithm~\eqref{eq:CBO_dynamics} for  $K=250$ iterations with parameters $\Delta t=0.01$, $\alpha = 100$, $\lambda = 1$ and $\sigma = 1.6$, and $N=200$~particles, initialized i.i.d.\@~according to~$\rho_0 = \CN\big((8,8), 0.5\Id\big)$.
	This experiment is performed $50$ times.
	For each run we depict in (b) the positions of the consensus points computed during the CBO algorithm~\eqref{eq:CBO_dynamics}, i.e., the iterates of the CBO scheme~\eqref{eq:CBO} for $k=1,\dots,K$.
	The color of the individual points corresponds to time, i.e., iterates at the beginning of the scheme are plotted in {\color{blue_intuition}blue}, whereas later iterates are colored {\color{red_intuition}orange}.
	We observe that, after starting close to the initial position, the trajectories of the consensus points follow the path of the valley leading to the global minimizer~$\globmin$, until it is reached. 
	In particular, unlike GD (cf.\@~Figure~\ref{fig:GD_GrandCanyon3noisy}), the scheme~\eqref{eq:CBO} has the capability of jumping over locally deeper passages.
	Such desirable behavior is observed also for the Langevin dynamics (see Figure~\ref{fig:Langevin_GrandCanyon3noisy}), which can be regarded as a stochastic (noisy) version of GD.}
	\label{fig:intuitionGiAyN}
\end{figure*}
While in the recent established literature on the convergence analysis of CBO, the dynamics of the particle system \eqref{eq:CBO_dynamics} is analyzed in its ensemble nature,  a novel and insightful theoretical understanding of the behavior of CBO can be gained, as we are about to show, by tracking the dynamics of the consensus point~$\conspointnoarg$ of the CBO algorithm~\eqref{eq:CBO_dynamics}.
For this purpose, let us introduce the {\it CBO scheme} as the iterates~$(x^{\mathrm{CBO}}_{k})_{k=0,\dots,K}$ defined according to 
\begin{align}    \label{eq:CBO}
\begin{aligned}
	x^{\mathrm{CBO}}_{k}
	&= \conspoint{\empmeasure{k}},
	\quad\text{ with }\quad
    \empmeasure{k} = \frac{1}{N}\sum_{i=1}^N \delta_{X_{k}^i}, \\
	x^{\mathrm{CBO}}_{0}
	&=x_0\sim\rho_0,
\end{aligned}
\end{align}  
where the particles' positions~$X_{k}^i$ are given by Equation~\eqref{eq:CBO_dynamics}.
The main theoretical finding of this work is concerned with the observation that the iterates of the CBO scheme~\eqref{eq:CBO}, i.e., the trajectory of the consensus point~$\conspointnoarg$, follow, with high probability, a stochastically perturbed GD.
This is illustrated in Figure~\ref{fig:intuitionGiAyN} and made rigorous in the subsequent Theorem~\ref{thm:main_informal}, whose proof is deferred to Section~\ref{sec:proof:thm:main_informal}.
\begin{theorem}[CBO is a stochastic relaxation of GD (main result)]
    \label{thm:main_informal}
    Let $\CE\in\CC^{1}(\bbR^d)$ be $L$-smooth\footnote{A function $f\in\CC^{1}(\bbR^d)$ is $L$-smooth if $\N{\nabla f(x)-\nabla f(x')} _2 \leq L\N{x-x'}_2$ for all $x,x'\in \bbR^d$.} and satisfy minimal assumptions (summarized in Assumption~\ref{asm:objective} above).
    Then, for $\tau>0$ (satisfying $\tau<1/(-2\Lambda)$ if $\Lambda<0$) and with parameters~$\alpha,\lambda,\sigma,\Delta t>0$ such that $\alpha\gtrsim\frac{1}{\tau}d\log d$,
    the iterates~$(x^{\mathrm{CBO}}_{k})_{k=0,\dots,K}$ of the CBO scheme~\eqref{eq:CBO} follow a stochastically perturbed GD,
    i.e., they obey
    \begin{align} \label{eq:thm:main_informal}
        x^{\mathrm{CBO}}_{k}
        = x^{\mathrm{CBO}}_{k-1} - \tau \nabla \CE(x^{\mathrm{CBO}}_{k-1}) + g_k,
    \end{align}
    where $g_k$ is stochastic noise fulfilling for each $k=1,\dots,K$ with high probability the quantitative estimate
    $\N{g_k}_2 = \CO\big(\!\abs{\lambda-1/\Delta t} + \sigma\sqrt{\Delta t} + \sqrt{\tau/\alpha} + N^{-1/2}\big) + \CO(\tau)$
    The hidden constants depend on the parameters of the objective function~$\CE$ collected in \ref{asm:zero_global}\;\!--\,\ref{asm:quadratic_growth} and \ref{asm:zero_global}\;\!--\,\ref{asm:lambda-convex}, respectively.
\end{theorem}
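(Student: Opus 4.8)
The plan is to establish the update~\eqref{eq:thm:main_informal} one step at a time, conditionally on the configuration at step $k-1$, by comparing the genuine consensus point $\conspoint{\empmeasure{k}}$ against the consensus point of an idealized \emph{reference Gaussian} $\mu_k := \CN\big(x^{\mathrm{CBO}}_{k-1}, \tfrac{\tau}{\alpha}\Id\big)$ centred at the previous iterate. The two conceptual pillars are: (i) a \emph{quantitative Laplace principle}, showing that $\conspoint{\mu_k}$ is sharply localized at the minimizer of the Moreau--Yosida-regularized objective $V_k(x) := \CE(x) + \tfrac{1}{2\tau}\N{x - x^{\mathrm{CBO}}_{k-1}}_2^2$ (indeed $\conspoint{\mu_k}$ is exactly the Gibbs average at inverse temperature $\alpha$ of the potential $V_k$); and (ii) the \emph{minimizing-movement/proximal} identity $\mathrm{prox}_{\tau\CE}(m) = m - \tau\nabla\CE(\mathrm{prox}_{\tau\CE}(m))$, which for $L$-smooth $\CE$ turns that minimizer into an explicit gradient step. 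Writing $g_k := \conspoint{\empmeasure{k}} - \big(x^{\mathrm{CBO}}_{k-1} - \tau\nabla\CE(x^{\mathrm{CBO}}_{k-1})\big)$, I would decompose
\[
g_k = \underbrace{\conspoint{\empmeasure{k}} - \conspoint{\mu_k}}_{(\mathrm{A})} + \underbrace{\conspoint{\mu_k} - \mathrm{prox}_{\tau\CE}(x^{\mathrm{CBO}}_{k-1})}_{(\mathrm{B})} + \underbrace{\mathrm{prox}_{\tau\CE}(x^{\mathrm{CBO}}_{k-1}) - x^{\mathrm{CBO}}_{k-1} + \tau\nabla\CE(x^{\mathrm{CBO}}_{k-1})}_{(\mathrm{C})},
\]
and bound the three contributions separately.

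For term $(\mathrm{C})$, the $\Lambda$-semiconvexity of $\CE$ (part of Assumption~\ref{asm:objective}) together with the hypothesis $\tau<1/(-2\Lambda)$ when $\Lambda<0$ guarantees that $V_k$ is strongly convex, so $\mathrm{prox}_{\tau\CE}$ is single-valued and the proximal identity above applies; combined with $L$-smoothness this yields the separate additive proximal-to-gradient discretization error of order $\CO(\tau)$ appearing in the statement. For term $(\mathrm{B})$ I would develop the quantitative Laplace principle: since $V_k$ inherits curvature at least $1/\tau + \Lambda > 0$ near its minimizer, the Gibbs measure $\propto e^{-\alpha V_k}$ concentrates on a ball of radius $\sim\sqrt{\tau/\alpha}$, and integrating the coordinate map $x$ against it reproduces $\mathrm{prox}_{\tau\CE}(x^{\mathrm{CBO}}_{k-1})$ up to $\CO(\sqrt{\tau/\alpha})$. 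Controlling the tails of this $d$-dimensional integral, i.e.\ forcing the mass outside the concentration ball to be negligible uniformly across the $d$ coordinate directions, is exactly where the threshold $\alpha\gtrsim\tfrac1\tau d\log d$ enters.

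Term $(\mathrm{A})$ is where the CBO dynamics must be matched to the reference measure. Substituting~\eqref{eq:CBO_dynamics}, the displacement $Z^i := X_k^i - x^{\mathrm{CBO}}_{k-1} = (1-\lambda\Delta t)\,(X_{k-1}^i - x^{\mathrm{CBO}}_{k-1}) + \sigma D(X_{k-1}^i - x^{\mathrm{CBO}}_{k-1})\,B_k^i$ reveals the mechanism: the drift contracts all particles onto $x^{\mathrm{CBO}}_{k-1}$ up to the residual factor $(1-\lambda\Delta t)$, while the anisotropic noise spreads them into an approximately Gaussian cloud whose per-coordinate covariance scales like $\sigma^2\Delta t$ times the squared spread, so that the $e^{-\alpha\CE}$-reweighted average acquires a drift $\approx -\alpha\,\widehat\Sigma\,\nabla\CE(x^{\mathrm{CBO}}_{k-1})$ with $\widehat\Sigma$ the empirical covariance. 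I would then show that $\conspoint{\empmeasure{k}}$ equals $\conspoint{\mu_k}$ up to: an $\CO(\abs{\lambda - 1/\Delta t})$ term from the incomplete contraction; an $\CO(\sigma\sqrt{\Delta t})$ term capturing the scale and anisotropy mismatch between $\widehat\Sigma$ and the isotropic $\tfrac\tau\alpha\Id$ that yields the clean coefficient $\tau$; and an $\CO(N^{-1/2})$ Monte-Carlo fluctuation of the empirical weighted average around its conditional mean, the latter holding with high probability by a concentration (Hoeffding/Bernstein-type) argument over the $N$ independent increments $B_k^i$.

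The main obstacle is term $(\mathrm{A})$ in tandem with the high-dimensional estimate in $(\mathrm{B})$: one must make the Laplace principle quantitative with \emph{explicit} dimensional constants (producing the $d\log d$ requirement) while simultaneously reconciling the genuinely anisotropic, finite, and history-dependent particle cloud generated by~\eqref{eq:CBO_dynamics} with the clean isotropic reference $\CN(\cdot,\tfrac\tau\alpha\Id)$ whose consensus point is a pure gradient step. Because the nonlinear reweighting by $e^{-\alpha\CE}$ couples the localization radius $\sqrt{\tau/\alpha}$, the noise scale $\sigma\sqrt{\Delta t}$, and the effective step $\tau$, the bounds for $(\mathrm{A})$ and $(\mathrm{B})$ cannot be decoupled naively and must be carried out with matched constants so that the leading-order gradient term emerges with coefficient exactly $\tau$ and all remaining discrepancies collapse into the stated error $\CO\big(\abs{\lambda-1/\Delta t} + \sigma\sqrt{\Delta t} + \sqrt{\tau/\alpha} + N^{-1/2}\big) + \CO(\tau)$.
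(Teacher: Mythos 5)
Your plan is correct and rests on the same three pillars as the paper's argument: a quantitative Laplace principle that localizes the consensus point of a Gaussian reference measure at the minimizer of the Moreau--Yosida-regularized objective, the minimizing-movement/proximal identity that turns this minimizer into an implicit gradient step (your term~(C) is exactly the paper's $g_k^2=\tau(\nabla\CE(\widetilde{x}^{\mathrm{CH}}_{k})-\nabla\CE(x^{\mathrm{CBO}}_{k-1}))$, your term~(B) is Proposition~\ref{prop:relaxation_CH_GF}), and a stability-plus-concentration argument matching the CBO particle cloud to the Gaussian reference (your term~(A) is the content of Theorem~\ref{thm:relaxation_CBO_CH}, with the same three error sources $\abs{\lambda-1/\Delta t}$, $\sigma\sqrt{\Delta t}$ and $N^{-1/2}$). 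The one genuine structural difference is the centering of the reference Gaussian: you place it at the previous \emph{CBO} iterate $x^{\mathrm{CBO}}_{k-1}$, so each step is self-contained, whereas the paper runs the consensus hopping recursion~\eqref{eq:CH} centered at its own iterate $x^{\mathrm{CH}}_{k-1}$ and therefore carries an extra term $g_k^1=x^{\mathrm{CH}}_{k-1}-x^{\mathrm{CBO}}_{k-1}$ that must be propagated by a Gr\"onwall argument over all $k$ steps (whence the $K$-dependence of the constant in~\eqref{eq:bound_CBO_CH}). Your one-step centering is a mild simplification for the per-iteration statement~\eqref{eq:thm:main_informal}; what the recursive formulation buys in exchange is the standalone comparison of CH with the minimizing movement scheme (Theorem~\ref{thm:relaxation_CH_GF}), which your decomposition would not yield. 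Two points to make explicit when filling in details: all high-probability claims and the Lipschitz stability of $\varrho\mapsto\conspoint{\varrho}$ hinge on uniform-in-$k$ fourth-moment bounds for the particle system (the paper's Lemmas~\ref{lem:boundedness_CBOdynamics}--\ref{lem:boundedness_auxiliaryMMS}), and your heuristic expansion of the reweighted average as a drift $-\alpha\widehat\Sigma\nabla\CE(x^{\mathrm{CBO}}_{k-1})$ is unnecessary for term~(A) --- a $W_2$-coupling of the two empirical measures combined with the consensus-point stability estimate (Lemma~\ref{lem:stability_consensus_points}) suffices and is how the paper proceeds.
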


Let us now comment on the technical aspects of Theorem~\ref{thm:main_informal}, describe its interpretation, and discuss its implications.
\\
Concerning the assumptions, it shall be mentioned that, compared to Polyak-{\L}ojasiewicz-like conditions~\cite{karimi2016linear} or certain families of log-Sobolev inequalities~\cite{chizat2018global} that are required to analyze the dynamics of gradient-based methods such as (S)GD or the Langevin dynamics, the assumptions under which our statement holds are rather weak and complementary.
Combined with similar assumptions being sufficient to prove global convergence of CBO (as stated in Theorem~\ref{thm:Global_CBO_convergence}), Theorem \ref{thm:main_informal} extends the class of functions, for which SGD-like methods are successful in global optimization.
\\
 Indeed, the statement of Theorem~\ref{thm:main_informal} has to be read with a twofold interpretation. 
First, in view of the capability of CBO to converge to global minimizers for rich classes of nonsmooth and nonconvex objective functions 
(see Theorem~\ref{thm:Global_CBO_convergence}), 
Theorem~\ref{thm:main_informal} states that there exist stochastic relaxations of GD that are provably able to robustly and reliably overcome energy barriers and reach deep levels of nonconvex functions.
Such relaxations may even be derivative-free and do not require smoothness of the objective, as is the case with CBO.
Second, and conversely, against the common wisdom that derivative-free optimization heuristics search the domain mainly by random exploration and therefore ought to be inefficient, we provide evidence that such heuristics in fact work successfully in finding benign optima~\cite{duchi2015optimal,nesterov2017random,chen2017zoo,nikolakakis2022black,chiang2023loss,engquist2023adaptive,heaton2023global}, because they are suitable stochastic relaxations of gradient-based methods.
\\
The interpretation of the CBO scheme~\eqref{eq:CBO} as a stochastic relaxation of GD is substantiated visually, analytically, and numerically.
While the trajectories of \eqref{eq:CBO} are to be seen in Figure~\ref{fig:CBO_GrandCanyon3noisy}, we depict for comparison in Figure~\ref{fig:Langevin_GrandCanyon3noisy} the discretized annealed Langevin dynamics~\cite{chiang1987diffusion,roberts1996exponential,durmus2017nonasymptotic}, $dX_t = -\nabla\CE(X_t)\,dt + \sqrt{2\smash[b]{\beta_t^{-1}}}\,dB_t$.
Both stochastic methods are capable of global minimization while overcoming energy barriers and escaping local minima.
For analyses of the (annealed) Langevin dynamics we refer to \cite{gelfand1991recursive,marquez1997convergence,pelletier1998weak,xu2018global,chizat2022meanfield}.

\begin{figure*}[ht]
	\centering
	\subcaptionbox{\label{fig:approximation_a}$\alpha=100$}{\includegraphics[trim=50 254 48 270,clip,width=0.42\columnwidth]{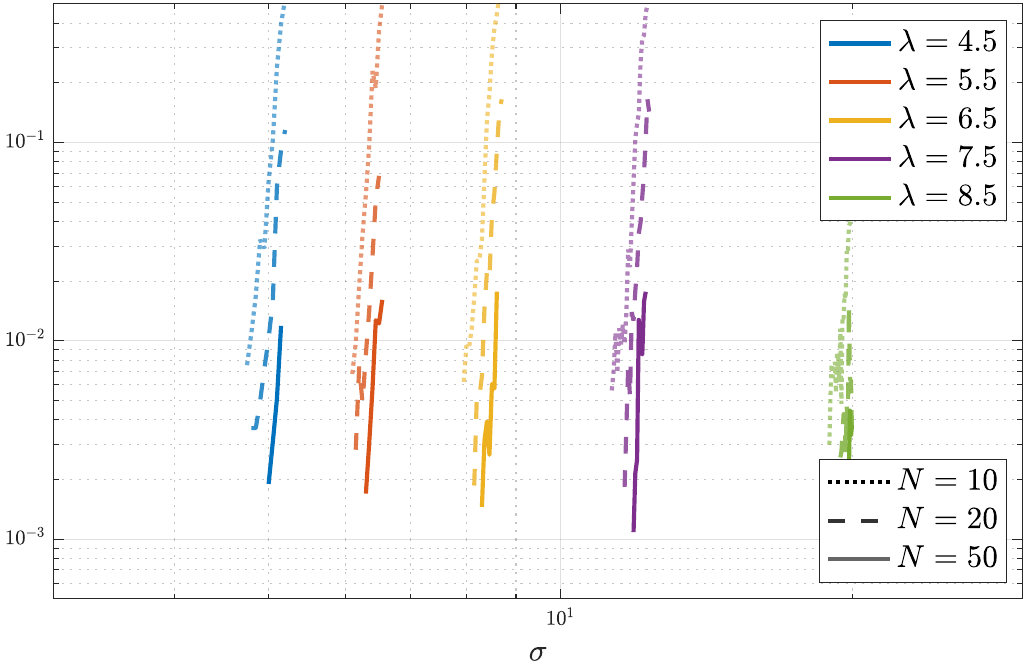}}
	\hspace{0.8em}
	\subcaptionbox{\label{fig:approximation_b}$\alpha=10^{16}$}{\includegraphics[trim=50 254 48 270,clip,width=0.42\columnwidth]{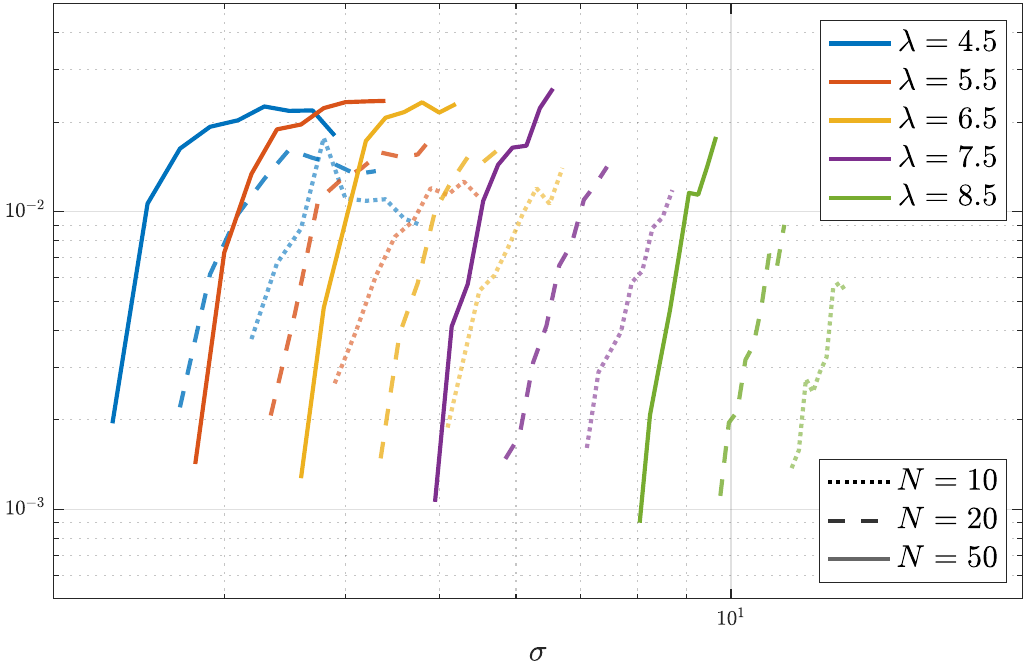}}
	\caption{Quantitative numerical analysis of the approximation error between the trajectories of the CBO scheme~\eqref{eq:CBO} and GD, i.e., the scaling of the stochastic noise~$g_k$ in \eqref{eq:thm:main_informal}. In the setting of the Canyon function~$\CE$ from Figure~\ref{fig:GrandCanyon3noisy} but without a local minimum in the valley,{\protect\footnotemark} \space we measure the distance between the two trajectories and plot the resulting approximation error for different values of $\alpha$ ((a) and (b)), different values of $\lambda$ (different colors), $\sigma$ (horizontal axis), and $N$ (different line styles).
    The other parameters of the CBO scheme~\eqref{eq:CBO} are $K=1000$ and $\Delta t=0.1$ with the remaining setting being as in Figure~\ref{fig:intuitionGiAyN}.\\
    The results validate the theoretical scalings on $\N{g_k}_2$ predicted by Theorem~\ref{thm:main_informal}.}
	\label{fig:approximation}
\end{figure*}

The stochastic perturbations~$g_k$ in \eqref{eq:thm:main_informal} are meaningful and not generic as they obey precise scalings thanks to the established bound in Theorem~\ref{thm:main_informal}.
As reflected by the first term of the bound on the error $\N{g_k}_2$,
the magnitude of this term become smaller as soon as
the discrete CBO time step size~$\Delta t\ll1$,
the drift parameter~$\lambda \approx 1/\Delta t$,
the noise parameter~$\sigma$ becomes smaller,
the weight parameter $\alpha$ is sufficiently large,
and the number of employed particles~$N$ becomes larger.
This behavior is confirmed numerically in Figure~\ref{fig:approximation} by measuring the closeness between the trajectories of the CBO scheme~\eqref{eq:CBO} and GD.
More precisely, better approximation is achieved for the values of $\lambda$ closer to $1/\Delta t$ (compare lines with different colors but same line style, and notice that smaller error can be obtained for larger $\lambda$), larger choices of $N$ (compare different line styles within a color), $\sigma$ as small as possible (each line decreases as $\sigma$ decreases), and larger values of $\alpha$ (compare the two subplots and notice the scaling of the approximation error).
For fixed $\lambda$ and $N$, however, $\sigma$ needs to be sufficiently large (in particular in case of a fixed number of iterates~$K$) to allow the CBO scheme~\eqref{eq:CBO} to iteratively explore the energy landscape within the given time horizon.
As visible from Figure~\ref{fig:approximation}, a larger number of particles~$N$ is needed to pass to smaller $\sigma$ and thus better approximation.
The second term of the bound on the error $\N{g_k}_2$ is likely an artifact of our proving technique. Indeed, we conjecture a potential amelioration of the estimate by refining the quantitative Laplace principle from \cite{fornasier2021consensus} involved in the proof of Proposition~\ref{prop:relaxation_CH_GF}, which would allow to remove the order $\CO(\tau)$ dependence of the bound.
Yet, as it stands, this term is about a {\it deterministic bounded} perturbation of the gradient, which is possibly of smaller magnitude than the gradient.
In fact, such bounded perturbation alone does not allow to explain the ability of the method to overcome local energy barriers in general (just think of a local minimizer, around which the magnitude of gradients grows faster than the displacement: in this case, any movement from the minimizer ought necessarily to get reverted).
Hence, it is the {\it stochastic} part of the perturbation that enables the convergence to global minimizers.
In fact, for
a moderate time step size $\Delta t>0$,
a drift parameter $\lambda>0$ relatively small compared to $1/\Delta t$,
a non-insignificant noise parameter $\sigma>0$,
a moderate value of the weight parameter $\alpha>0$
and
a modest number~$N$ of particles,
CBO is a stochastic relaxation of GD with strong noise.
\footnotetext{Otherwise, GD will necessarily get stuck in this local minimum located in the valley.}

\begin{remark}[Stochastic relaxations of GD]
    While the stochastic perturbation induced by CBO has the properties described above and is motivated by CBO's capability to provably converge to global minimizers of nonsmooth and nonconvex functions,
    there exist other stochastic relaxations of GD, which lead to different noise characteristics.
    Examples include the overdamped Langevin dynamics, the annealed Langevin dynamics, SGD, as well as mini-batch SGD.

    In the overdamped Langevin dynamics, the stochastic perturbation $g_k^{\text{LD}}$ is Brownian noise with zero mean and constant variance.
    In contrast, in the annealed Langevin dynamics, the stochastic perturbation $g_k^{\text{aLD}}$ is Brownian noise, which is damped out over time, i.e., $\Nnormal{g_k^{\text{aLD}}}_2\rightarrow0$ as $k\rightarrow\infty$.
    If one wishes to explicitly mimic such behavior in the noise induced by CBO, one could choose hyperparameters~$\lambda_k\rightarrow1/\Delta t$, $\sigma_k\rightarrow0$, $\alpha_k\rightarrow\infty$, and $N_k\rightarrow\infty$ that change during training as $k\rightarrow\infty$ enabling $\Nnormal{g_k}_2\rightarrow0$.

    For SGD and mini-batch SGD, on the other hand, the stochastic perturbations $g_k^{\text{SGD}}$ are data-dependent and depend, as is the case for the noise induced by CBO, non-trivially on the objective function~$\CE$. 
\end{remark}

Apart from gaining primarily theoretical insights from this link, let us conclude this section by mentioning a further, more practical aspect of establishing such a connection.
In several real-world applications, including various machine learning settings, using gradients may be undesirable or even not feasible.
This can be due to the black-box nature or nonsmoothness of the objective, memory limitations constraining the use of automatic differentiation, a substantial presence of spurious local minima, or the fact that gradients carry relevant information about data, which one may wish to keep private.
In machine learning, in specific,
the problems of hyperparameter tuning~\cite{bergstra2011algorithms,rapin2018nevergrad},
convex bandits~\cite{agarwal2011stochastic,shamir2017optimal},
reinforcement learning~\cite{sutton2018reinforcement}, the training of sparse and pruned neural networks~\cite{hoefler2021sparsity},
and federated learning~\cite{shokri2015privacy,mcmahan2017communication} stimulate interest in methods alternative to gradient-based ones. 
In such situations, if one still wishes to rely on a GD-like optimization behavior, Theorem~\ref{thm:main_informal} suggests the use of CBO (or related methods such as PSO~\cite{cipriani2021zero}), which will be both reliable and efficient (also through parallelization),\footnote{If gradients are available and cheap to compute, hybrid CBO-GD methods which additionally exploit this information are expected to be more efficient and competitive, in particular when it comes to large-scale high-dimensional problems. However, incorporating a gradient drift into CBO is possible~\cite{riedl2022leveraging} and may bear advantages of theoretical and practical nature as demonstrated in~\cite{riedl2022leveraging,trillos2023FedCBO,trillos2024CB2O,trillos2024attack}.} with linear complexity in the number of deployed particles.
We report, for instance, recent ideas in the setting of clustered federated learning~\cite{trillos2023FedCBO,trillos2024attack}, where CBO is leveraged to avoid reverse engineering of private data through exchange of gradients.
While we do not empirically investigate the complexity of CBO or provide comparisons with the state of the art for different applications in this paper,
a summary of the existing literature on this matter may be found in Section~\ref{sec:CBO_convergence_results}.

\section{Consensus-based optimization converges globally} \label{sec:CBO_convergence_results}

Let us recapitulate recent global convergence results for CBO. Optimizing a nonconvex objective~$\CE$ using the CBO dynamics~\eqref{eq:CBO_dynamics} corresponds to an evolution of $N$ particles in an interaction potential generated by $\CE$.
A global convergence analysis of this algorithm on the microscopic level proves difficult as it requires to study a system of a large number of interacting stochastic processes, which are highly correlated due to the dependence injected by communication through the consensus point~$\conspointnoarg$.
However, with the particles being interchangeable by design of the method~\cite{pinnau2017consensus}, the object of analytical interest is the empirical measure~$\empmeasure{t}$, whose continuous-time dynamics can be approximated, assuming propagation of chaos~\cite{sznitman1991propagation}, in the mean-field limit (large-particle limit) 
by the solution of the nonlinear nonlocal Fokker-Planck equation
\begin{align} \label{eq:fokker_planck}
\begin{split}
    \partial_t\rho_t
	= \;&\lambda\divergence \big(\!\left(x - \conspoint{\rho_t}\right)\rho_t\big)
	+ \frac{\sigma^2}{2}\sum_{k=1}^d \partial_{kk} \left(D\left(x-\conspoint{\rho_t}\right)_{kk}^2\rho_t\right).
\end{split}
\end{align}
This perspective enables the use of powerful deterministic calculus tools for analysis~\cite{carrillo2018analytical}.
\cite{fornasier2021consensus,fornasier2021convergence} proved that, in the mean-field limit, CBO performs a gradient descent of the Wasserstein-$2$ distance~$W_2$ to a Dirac measure located at the global minimizer~$\globmin$ with exponential rate.
Their results are valid for large classes of optimization problems under minimal assumptions about the initialization and are in particular generic in the sense that the convergence of $\rho_{t}$ is independent of the original hardness of the underlying optimization problem.

\begin{theorem}[CBO asymptotically convexifies nonconvex problems, {\cite[Theorem~2]{fornasier2021convergence}}] \label{thm:global_convergence_mfl}
    Fix $\varepsilon>0$.
    Let $\CE\in\CC(\bbR^d)$ satisfy \ref{asm:zero_global},
    and assume that for some constants~$\eta,\nu,\CE_\infty,R_0>0$ it holds the inverse continuity condition
    $\N{x-\globmin}_\infty \leq (\CE(x)-\underbar\CE)^\nu/\eta$ for all $x\in B^\infty_{R_0}(\globmin)$ and $\CE(x)-\underbar\CE>\CE_\infty$ for all $x\in (B^\infty_{R_0}(\globmin))^c$.
    Moreover, let $\rho_0\in\CP_4(\bbR^d)$ with $\globmin\in\supp\rho_0$.
    Then, for any $\vartheta\in(0,1)$ and parameters~$\lambda$, $\sigma>0$ with $2\lambda>\sigma^2$, there exists $\alpha_0=\alpha_0(\varepsilon,\vartheta,\lambda,\sigma,d,\nu,\eta,\rho_0)$ such that for all $\alpha\geq\alpha_0$ a weak solution~$(\rho_t)_{t\in[0,T^*]}$ to~\eqref{eq:fokker_planck} satisfies $W_2^2(\rho_T,\delta_\globmin) = \varepsilon$ with $T\in[\frac{1-\vartheta}{1+\vartheta/2}T^*,T^*]$, where $T^* := \frac{1}{(1-\vartheta)(2\lambda-\sigma^2)}\log\left(W_2^2(\rho_0,\delta_\globmin)/\varepsilon\right)$.
    Furthermore, on the time interval $[0,T]$, $W_2^2(\rho_t,\delta_\globmin)$ decays at least exponentially fast with rate $(1-\vartheta)(2\lambda-\sigma^2)$.
\end{theorem}
While Theorem~\ref{thm:global_convergence_mfl} captures a canonical convexification of a large class of nonconvex optimization problems as the number of optimizing particles of CBO approaches infinity,
it fails to explain empirically observed successes of the method using just few particles for high-dimensional problems coming from data analysis, signal processing, and machine learning~\cite{fornasier2020consensus_sphere_convergence,riedl2022leveraging,carrillo2019consensus,fornasier2021convergence,fornasier2020consensus_sphere_convergence,riedl2022leveraging,trillos2023FedCBO,trillos2024CB2O,trillos2024attack}.%
\footref{footnote:CBO_applications}
However, by ensuring that propagation of chaos~\cite{sznitman1991propagation} holds, \cite{fornasier2021consensus} quantify that the fluctuations of the empirical measure~$\empmeasure{t}$ around $\rho_{t}$ are of order $\CO(N^{-1/2})$ for any finite time horizon.
This allows to obtain probabilistic global convergence guarantees of the CBO dynamics~\eqref{eq:CBO_dynamics}.

\begin{theorem}[Global CBO convergence, {\cite[Theorem~3.8]{fornasier2021consensus}}]
    \label{thm:Global_CBO_convergence}
    Let $\varepsilon_{\mathrm{total}}>0$ and $\delta\in(0,1/2)$.
    Let $\CE\in\CC(\bbR^d)$ satisfy \ref{asm:zero_global}\;\!--\,\ref{asm:quadratic_growth} and consider valid the assumptions of Theorem~\ref{thm:global_convergence_mfl}.
    Then, the final iterations $(X_{K}^i)_{i=1,\dots,N}$ of \eqref{eq:CBO_dynamics} fulfill
    \begin{equation}
        \label{eq:thm:Global_CBO_convergence}
        \N{\frac{1}{N}\sum_{i=1}^N X_{K}^i-\globmin}_2^2 \leq \varepsilon_{\mathrm{total}}
    \end{equation}
    with probability larger than $1 - \left(\delta + \varepsilon_{\mathrm{total}}^{-1}(C_{\mathrm{D}}\Delta t + C_{\mathrm{MFA}}N^{-1} + \varepsilon)\right)$,
    where, besides problem-dependent constants, it hold $C_{\mathrm{D}}\!=\!C_{\mathrm{D}}(d,N,T^*,\delta^{-1})$~and~\mbox{$C_{\mathrm{MFA}}\!=\!C_{\mathrm{MFA}}(\alpha,T^*,\delta^{-1})$}.
\end{theorem}

Theorem~\ref{thm:Global_CBO_convergence} offers guidelines on the choice of the hyperparameters~$\Delta t$, $N$, and $K\propto T^*/\Delta t$ required to achieve the $\varepsilon_{\mathrm{total}}$ approximation guarantee with high probability.
For more details, we refer to \cite[Remark~3.9]{fornasier2021consensus}.

Despite the results of this section requiring the global minimizer~$\globmin$ to be unique as per inverse continuity condition, there exists a polarized CBO variant~\cite{bungert2022polarized,fornasiersun2024} capable of finding multiple global minimizers at once.


\section{Consensus-based optimization is a stochastic relaxation of gradient descent}
    \label{sec:CBO_stochastic_relaxation_GD}

In this section we present the technical details behind the main theoretical result of this work, Theorem~\ref{thm:main_informal},
i.e., we explain how to establish a connection between the CBO scheme~\eqref{eq:CBO}, which captures the flow of the derivative-free CBO dynamics~\eqref{eq:CBO_dynamics}, and GD.
\begin{figure*}[ht]
	\centering
	\subcaptionbox{The CH scheme~\eqref{eq:CH} (sampled over several runs) follows on average the valley of $\CE$ and can occasionally escape local minima. \label{fig:CH_GrandCanyon3noisy}}{\includegraphics[trim=28 209 31 200,clip,width=0.29\textwidth]{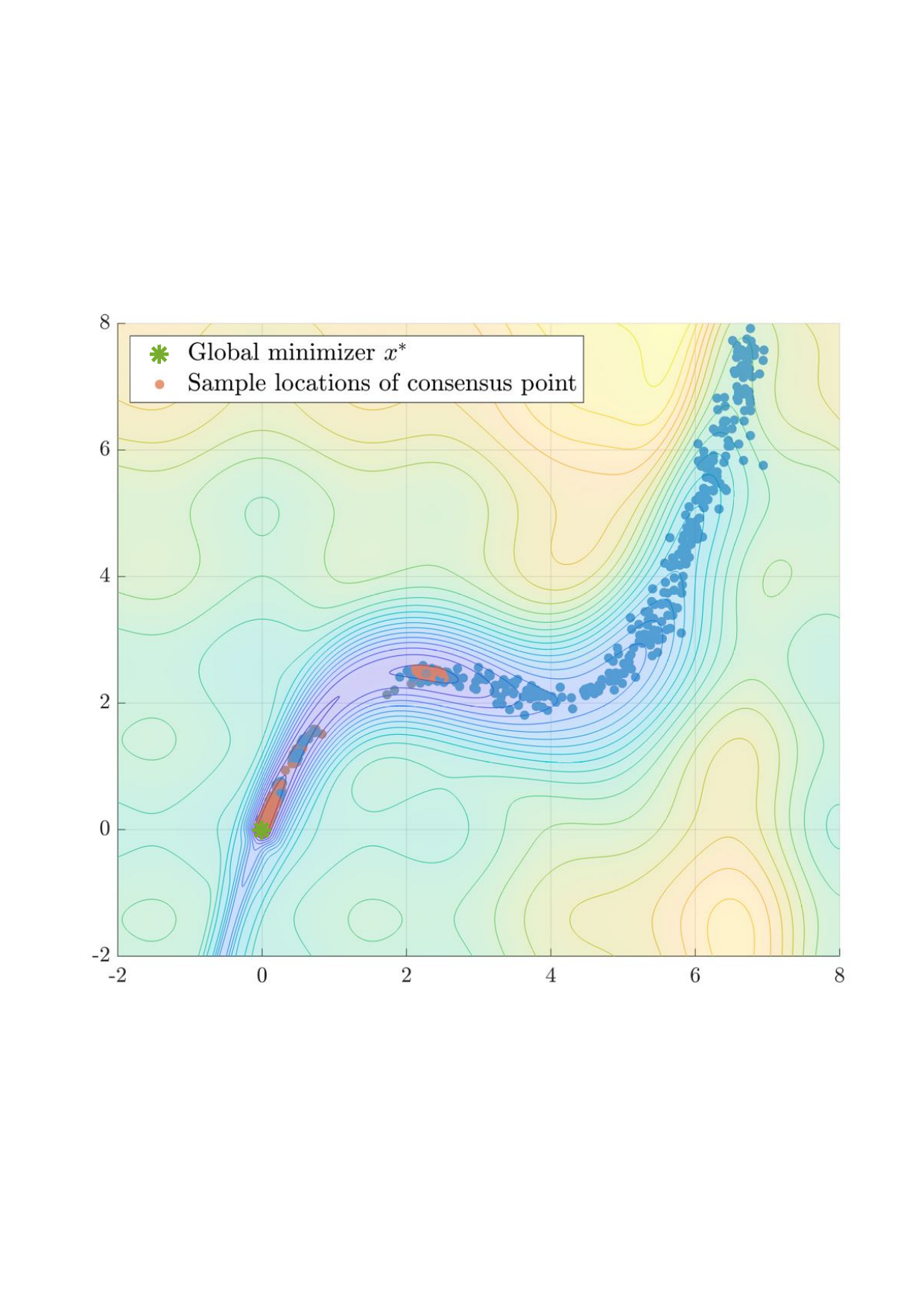}}
	\hspace{0.8em}
	\subcaptionbox{GD gets stuck in a local minimum of $\CE$. \label{fig:GD_GrandCanyon3noisy}}{\includegraphics[trim=28 209 31 200,clip,width=0.29\textwidth]{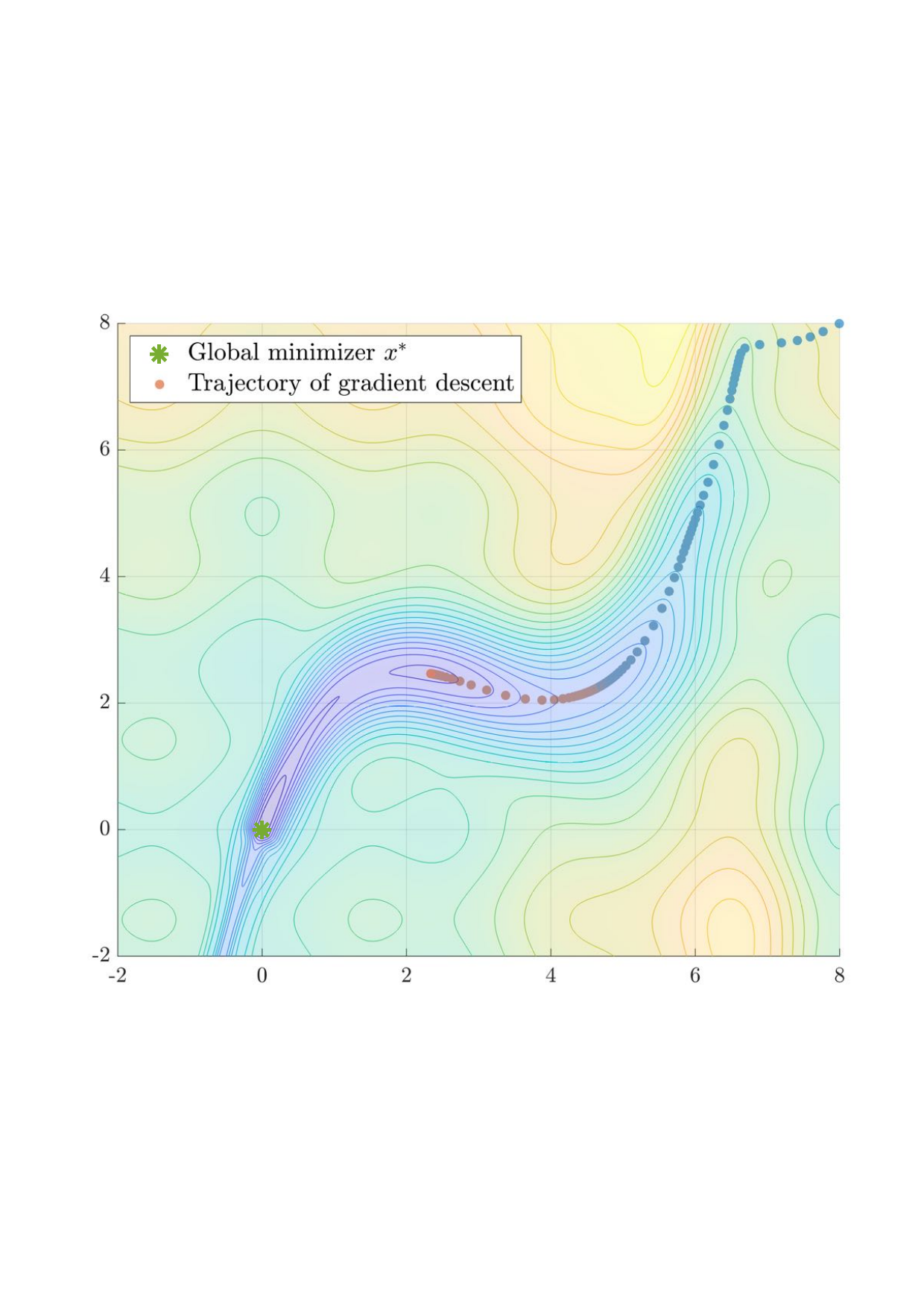}}
	\hspace{0.8em}
	\subcaptionbox{The Langevin dynamics (sampled over several runs) follows on average the valley of $\CE$ and escapes local minima. \label{fig:Langevin_GrandCanyon3noisy}}{\includegraphics[trim=28 209 31 200,clip,width=0.29\textwidth]{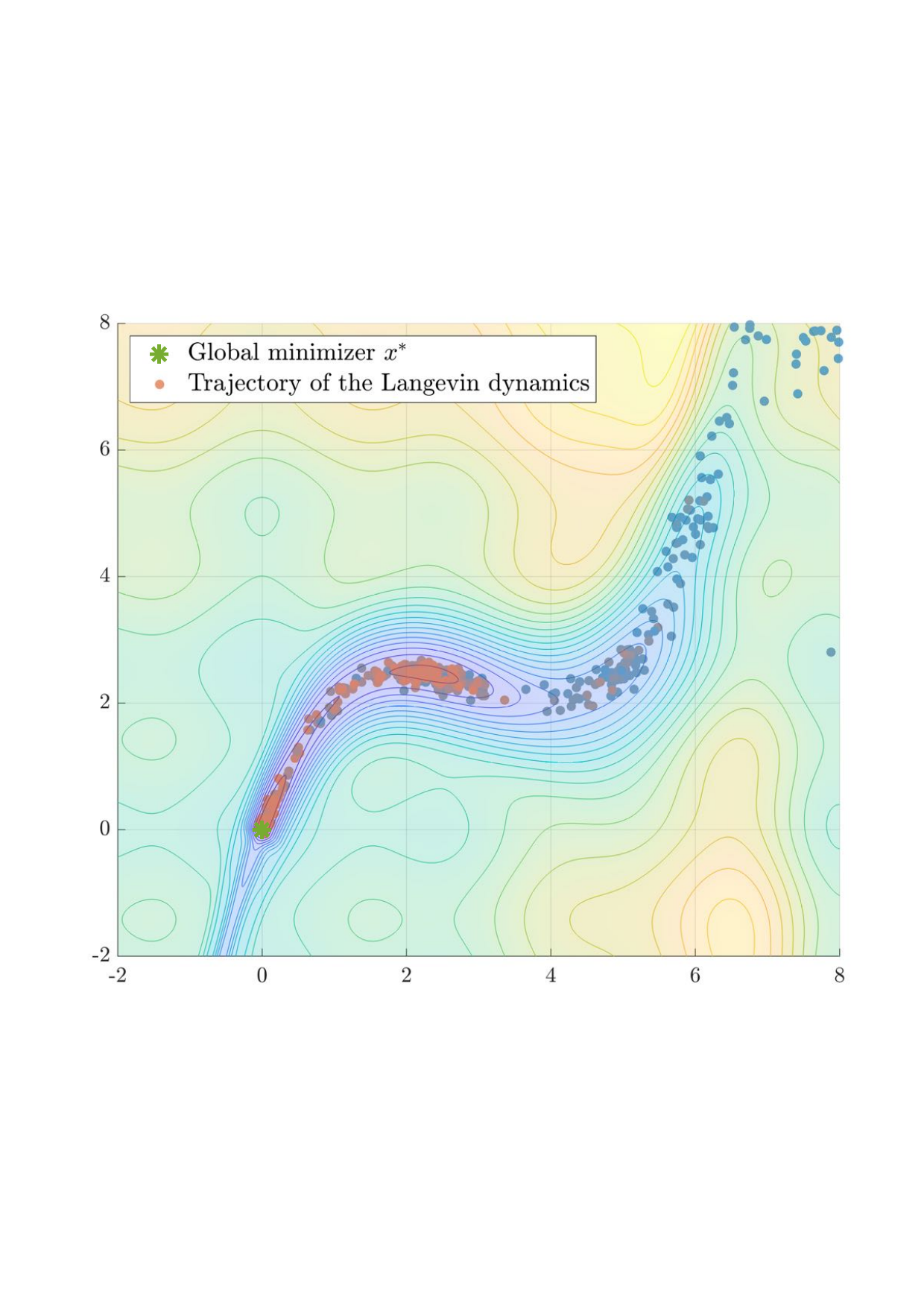}}
	\caption{An illustrative comparison between the algorithms discussed in this work.
	While GD (obtained as an explicit Euler time discretization of $\frac{d}{dt}x(t) = -\nabla\CE(x(t))$ with time step size $\Delta t=0.01$ and ran for $K=10^4$ iterations) gets stuck in a local minimum along the valley of $\CE$ (see (b)), the stochastic algorithms in (a) and (c) as well as Figure~\ref{fig:CBO_GrandCanyon3noisy} have the capability of escaping local minima.
	In (a) we depict the positions of the consensus hopping scheme~\eqref{eq:CH} for $K=250$ iterations with parameters $\alpha = 100$ and $\widetilde\sigma = 0.6$, and where we approximate the underlying measure~$\mu_k$ at each step~$k$ using $200$ samples.
	The ability of the CH scheme to escape local minima improves with larger $\widetilde\sigma$, see Figure~\ref{fig:comparison_CH} in Appendix~\ref{sec:appendix:additional_numerics}.
    In (c) we depict the trajectory of the annealed Langevin dynamics with $\beta_t=0.02\log(t+1)$ (obtained as an Euler-Maruyama time discretization with time step size $\Delta t=0.001$ and ran for $K=10^4$ iterations).
	The remaining setting is as in Figure~\ref{fig:intuitionGiAyN}, in particular, $50$ individual runs of the experiment are plotted in (a) and (c).}
	\label{fig:comparison_algorithms}
\end{figure*}

\textbf{From CBO to consensus hopping.}
Let us envision for the moment the movement of the particles during the CBO dynamics~\eqref{eq:CBO_dynamics}.
At every time step~$k$, after having computed $\conspoint{\empmeasure{k-1}}$, each particle moves a $\Delta t \lambda$ fraction of its distance towards this consensus point,
before being perturbed by stochastic noise.
As we let $\lambda\rightarrow1/\Delta t$, the particles' velocities increase,
until, in the case $\lambda=1/\Delta t$, each of them hops
directly to the previously computed consensus point, 
followed by a random fluctuation.
Put differently, we are left with a numerical scheme, which, at time step~$k$, samples $N$ particles around the old iterate in order to subsequently compute as new iterate the consensus point~\eqref{eq:consensus_point} of the empirical measure of the samples.
Such algorithm is precisely a Monte Carlo approximation of the {\it consensus hopping~(CH) scheme} with iterates $(x^{\mathrm{CH}}_{k})_{k=0,\dots,K}$ defined by
\begin{align}   \label{eq:CH}
\begin{aligned}
    x^{\mathrm{CH}}_{k}
	&= \conspoint{\mu_{k}},
	\quad\text{ with }\quad
    \mu_{k} = \CN\!\left(x^{\mathrm{CH}}_{k-1},\widetilde{\sigma}^2\Id\right)\!, \\
	x^{\mathrm{CH}}_{0}
	&=x_0.
\end{aligned}
\end{align}  

It resembles local search algorithms such as the Metropolis-Hastings algorithm~\cite{metropolis1953equation}, see, e.g., \cite[Section~2]{delahaye2019simulated},
as well as the covariance matrix adaptation evolution strategy (CMA-ES)~\cite{hansen2001completely}.

Theorem~\ref{thm:relaxation_CBO_CH} in Section~\ref{sec:appendix:proof_details} makes this intuition rigorous by quantifying the approximation quality between the CBO and the CH scheme in terms of the parameters of the two schemes.
Sample trajectories of the CH scheme are depicted in Figure~\ref{fig:CH_GrandCanyon3noisy}.

\textbf{From CH to GD.}
With the sampling measure~$\mu_k$ assigning (in particular for small $\widetilde\sigma$) most mass to the region close to the old iterate, the CH scheme~\eqref{eq:CH} improves at every time step $k$ its objective function value while staying near the previous iterate.
A conceptually analogous behavior to such localized sampling can be achieved through penalizing the length of the step taken at time step $k$.
This gives rise to an {\it implicit version of the CH scheme} with iterates $(\widetilde{x}^{\mathrm{CH}}_{k})_{k=0,\dots,K}$ given as
\begin{align}   \label{eq:CH_aux}
\begin{aligned}
    \widetilde{x}^{\mathrm{CH}}_{k}
    &= \argmin_{x\in\bbR^d} \; \widetilde\CE_k(x),
    \quad\text{ with }
    \quad\widetilde\CE_k(x) := \frac{1}{2\tau} \N{x^{\mathrm{CH}}_{k-1} - x}_2^2 + \CE(x), \\
    \widetilde{x}^{\text{CH}}_{0}
    &=x_0.
\end{aligned}
\end{align}  
The modulated objective $\widetilde\CE_k$ defined in \eqref{eq:CH_aux} naturally appears when writing out the expression of $\conspoint{\mu_{k}}$ from \eqref{eq:CH} using that $\mu_k$ is a Gaussian.
Formally, with details provided in \eqref{eq:proof:thm:error_decomposition_MMS_CH:21},
$\conspoint{\mu_{k}} \!=\! \fint\! x \exp(-\alpha\CE(x)) \exp\!\big(\!-\frac{1}{2\widetilde{\sigma}^2}\Nnormal{x\!-\!x^{\mathrm{CH}}_{k-\!1}}_2^2\big) d\lambda(x) \!=\! \fint\! x \exp(-\alpha\widetilde\CE_{k}(x)) d\lambda(x) \!=\! \conspointE{\widetilde\CE_{k}}{\lambda} \!\approx\! \widetilde{x}^{\mathrm{CH}}_{k}$, where we indicate by $\fint$ that we did not include the normalization.
This creates a link between the sampling width $\widetilde\sigma$ and the step size $\tau$.
The fact that the parameter~$\tau$ can be seen as the step size of \eqref{eq:CH_aux} becomes apparent when observing that the optimality condition of the $k$-th iterate of \eqref{eq:CH_aux} reads $\widetilde{x}^{\mathrm{CH}}_{k} = x^{\mathrm{CH}}_{k-1} - \tau\nabla\CE(\widetilde{x}^{\mathrm{CH}}_{k})$, which is an implicit gradient step.
Proposition~\ref{prop:relaxation_CH_GF} in Section~\ref{sec:appendix:proof_details} estimates the discrepancy between $x^{\mathrm{CH}}_{k}$ and $\widetilde{x}^{\mathrm{CH}}_{k}$ employing the quantitative Laplace principle~\cite[Proposition~4.5]{fornasier2021consensus}.

Let us conclude this discussion by remarking that the scheme~\eqref{eq:CH_aux} itself is not self-consistent but requires the computation of the iterates of the CH scheme~\eqref{eq:CH}.
For this reason we introduce the {\it minimizing movement scheme (MMS)~\cite{de1993new}} as the iterates $(x^{\mathrm{MMS}}_{k})_{k=0,\dots,K}$ given according to
\begin{align}   \label{eq:MMS}
\begin{aligned}
	x^{\mathrm{MMS}}_{k}
	&= \argmin_{x\in\bbR^d} \; \CE_k(x),
    \quad\text{ with }
    \quad\CE_k(x) := \frac{1}{2\tau} \N{x^{\mathrm{MMS}}_{k-1} - x}_2^2 + \CE(x), \\
    x^{\mathrm{MMS}}_{0}
	&=x_0,
\end{aligned}
\end{align}  
which is the discrete-time implicit Euler of the gradient flow $\frac{d}{dt} x(t) = -\nabla\CE(x(t))$~\cite{santambrogio2017euclideanmetricandwassersteingradientflows}.

\subsection{Proof of the main result, Theorem~\ref{thm:main_informal}}
    \label{sec:proof:thm:main_informal}


Let us provide upfront for the reader's convenience a schematic diagram that gives an overview of the different numerical schemes appearing throughout the paper and required in the proof of Theorem~\ref{thm:main_informal}.
\vspace{0.5em}

\definecolor{blue_intuition}{RGB}{0,114,189}
\definecolor{red_intuition}{RGB}{217,83,25}

\begin{center}
\begin{tikzpicture}
\def\width{4.5}
\def\widthrel{0.67} 
\def\gap{2.5}

\filldraw[fill=blue_intuition!40, draw=none] (0,5+\gap) rectangle (\width,2.5+\gap); %
\node[draw=none, align=center] at (\width/2,3.75+\gap) {\textbf{CBO scheme}~\eqref{eq:CBO}\\[1ex]
{\small$x^{\mathrm{CBO}}_{k}=\conspoint{\empmeasure{k}},$}\\[0.4ex]
{\small$\empmeasure{k} = \frac{1}{N}\sum_{i=1}^N \delta_{X_{k}^i}$}};
\node (A) at (\width/2-0.5, 2.5+\gap) {};
\node (B1) at (\width - 0.41-0.5, 2.5) {};
\node (B2) at (\width - 0.41 + 0.5, 2.5) {};
\node (C1) at (1.5*\width + 0.5 - 0.5, 2.5+\gap) {};
\node (C2) at (1.5*\width + 0.5 + 0.5, 2.5+\gap) {};
\node (D1) at (2*\width + 1.29 - 0.5, 2.5) {};
\node (D2) at (2*\width + 1.29+0.5, 2.5) {};
\node (E) at (2.5*\width + 1+0.5, 2.5+\gap) {};
\draw[align=center] [-] (A) -- (B1) node[midway,fill=white] {Thm.\@~\ref{thm:relaxation_CBO_CH},\\\eqref{proof:thm:relaxation_CBO_CH:7}, \eqref{proof:thm:relaxation_CBO_CH:9}};
\draw[align=center] [-] (B2) -- (C1) node[midway,fill=white] {Thm.\@~\ref{thm:relaxation_CBO_CH},\\\eqref{eq:proof:thm:relaxation_CBO_CH:43}};
\draw[align=center] [-] (C2) -- (D1) node[midway,fill=white] {Prop.\@~\ref{prop:relaxation_CH_GF}\\\eqref{eq:bound_CH_CH_aux}};
\draw[align=center] [-] (D2) -- (E) node[midway,fill=white] {Thm.\@~\ref{thm:relaxation_CH_GF},\\\eqref{eq:proof:thm:error_decomposition_MMS_CH:9}};

\filldraw[fill=blue_intuition!40, draw=none] (\width+0.5,5+\gap) rectangle (2*\width+0.5,2.5+\gap); %
\node[draw=none, align=center] at (1.5*\width+0.5,3.75+\gap) {\textbf{CH scheme}~\eqref{eq:CH}\\[1ex]
{\small$x^{\mathrm{CH}}_{k}=\conspoint{\mu_{k}},$}\\[0.4ex]
{\small$\mu_{k} = \CN\!\left(x^{\mathrm{CH}}_{k-1},\widetilde{\sigma}^2\Id\right)$}};
\filldraw[fill=blue_intuition!40, draw=none] (2*\width+1,5+\gap) rectangle (3*\width+1,2.5+\gap); %
\node[draw=none, align=center] at (2.5*\width+1,3.75+\gap) {\textbf{MMS}~\eqref{eq:MMS}\\[1ex]{\small$x^{\mathrm{MMS}}_{k}=\argmin_{x} \; \CE_k(x),$}\\[0.4ex]
{\small$\CE_k(x) \!:=\! \frac{\Nnormal{x^{\mathrm{MMS}}_{k-1} - x}_2^2 }{2\tau}\!+\!\CE(x)$}};
\filldraw[fill=gray!10, draw=none] (0+\width/2-0.29-0.2,2.5) rectangle (\width+\width/2-0.29,0); %
\node[draw=none, align=center] at (\width/2+\width/2-0.29-0.1,1.25) {{aux.\@ CBO scheme}~\eqref{eq:CBO_aux}\\[1ex]{\small$\widetilde{x}^{\mathrm{CBO}}_{k}=\conspoint{\empmeasuretilde{k}},$}\\[0.4ex]
{\small$\empmeasuretilde{k}=\frac{1}{N}\sum_{i=1}^N \delta_{\widetilde{X}_{k}^i\sim\CN(\widetilde{x}^{\mathrm{CBO}}_{k-1},\dots)}$}};
\filldraw[fill=gray!10, draw=none] (3*\width+1-\width/2-\width+0.29,2.5) rectangle (3*\width+1-\width/2+0.29+0.2,0); %
\node[draw=none, align=center] at (1.5*\width+0.5+\width/2+0.5+0.29+0.1,1.25) {{implicit CH scheme}~\eqref{eq:CH_aux}\\[1ex]{\small$\widetilde{x}^{\mathrm{CH}}_{k}=\argmin_{x} \; \widetilde\CE_k(x),$}\\[0.4ex]
{\small$\widetilde\CE_k(x)\!=\!\frac{\Nnormal{x^{\mathrm{CH}}_{k-1}-x}_2^2}{2\tau}\!+\!\CE(x)$}};
\end{tikzpicture}
\end{center}

The first (from left to right) three connections are relevant for the subsequent proof, the last connection constitutes an additional result that we present at the end of this paper.

\begin{proof}[Proof of Theorem~\ref{thm:main_informal}]
    From the optimality condition of the scheme $(\widetilde{x}^{\mathrm{CH}}_{k})_{k=1,\dots,K}$ in \eqref{eq:CH_aux} and with $(x^{\mathrm{CH}}_{k})_{k=1,\dots,K}$ as in \eqref{eq:CH}, we get $\left(\widetilde{x}^{\mathrm{CH}}_{k}\!-\!x^{\mathrm{CH}}_{k-1}\right) \!+\! \tau\nabla\CE(\widetilde{x}^{\mathrm{CH}}_{k}) \!=\! 0$.
    We now decompose
    \begin{align*}
    \begin{aligned}
        x^{\mathrm{CBO}}_{k}
        &= \widetilde{x}^{\mathrm{CH}}_{k} + \left(x^{\mathrm{CBO}}_{k}-\widetilde{x}^{\mathrm{CH}}_{k}\right)
        = x^{\mathrm{CH}}_{k-1} - \tau\nabla\CE(\widetilde{x}^{\mathrm{CH}}_{k}) + \left(x^{\mathrm{CBO}}_{k}-\widetilde{x}^{\mathrm{CH}}_{k}\right)\!.
    \end{aligned}
    \end{align*}
    Since $x^{\mathrm{CH}}_{k-1} = x^{\mathrm{CBO}}_{k-1} + \left(x^{\mathrm{CH}}_{k-1}-x^{\mathrm{CBO}}_{k-1}\right)$ and $ \nabla\CE(\widetilde{x}^{\mathrm{CH}}_{k}) = \nabla\CE(x^{\mathrm{CBO}}_{k-1}) + \left(\nabla\CE(\widetilde{x}^{\mathrm{CH}}_{k})-\nabla\CE(x^{\mathrm{CBO}}_{k-1})\right)$
    we can continue the former to obtain
    \begin{align*}
    \begin{aligned}
        x^{\mathrm{CBO}}_{k}
        = x^{\mathrm{CBO}}_{k-1} 
        &- \tau\nabla\CE(x^{\mathrm{CBO}}_{k-1}) + \left(x^{\mathrm{CH}}_{k-1}-x^{\mathrm{CBO}}_{k-1}\right) 
        - \tau\!\left(\nabla\CE(\widetilde{x}^{\mathrm{CH}}_{k})\!-\!\nabla\CE(x^{\mathrm{CBO}}_{k-1})\right) 
        + \left(x^{\mathrm{CBO}}_{k}\!-\!\widetilde{x}^{\mathrm{CH}}_{k}\right)\!,
    \end{aligned}
    \end{align*}
    where it remains to control the stochastic error term~$g_k$ from \eqref{eq:thm:main_informal}, which is comprised of $g_k^1:=x^{\mathrm{CH}}_{k-1}-x^{\mathrm{CBO}}_{k-1}$, $g_k^2:=\tau\!\left(\nabla\CE(\widetilde{x}^{\mathrm{CH}}_{k})-\nabla\CE(x^{\mathrm{CBO}}_{k-1})\right)$ and $g_k^3:=x^{\mathrm{CBO}}_{k}-\widetilde{x}^{\mathrm{CH}}_{k}$.
    By Theorem~\ref{thm:relaxation_CBO_CH},
    \begin{align*}
        \N{g_k^1}_2
        &= \N{x^{\mathrm{CH}}_{k-1}-x^{\mathrm{CBO}}_{k-1}}_2 
        = \CO\big(\!\abs{\lambda-1/\Delta t} + \sigma\sqrt{\Delta t} + \widetilde\sigma + N^{-1/2}\big)
    \end{align*}
    with high probability.
    For $g_k^2$, first notice that $\frac{1}{2\tau}\N{\widetilde{x}^{\mathrm{CH}}_{k}-x^{\mathrm{CH}}_{k-1}}_2^2 + \CE(\widetilde{x}^{\mathrm{CH}}_{k}) \leq \CE(x^{\mathrm{CH}}_{k-1})$ by definition of $\widetilde{x}^{\mathrm{CH}}_{k}$, which facilitates a bound on $\N{\widetilde{x}^{\mathrm{CH}}_{k}-x^{\mathrm{CH}}_{k-1}}_2$ of order $\CO(\tau)$ with high probability under \ref{asm:local_Lipschitz_quadratic_growth} and by means of Remark~\ref{rem:boundedness}.
    Since $\CE$ is $L$-smooth, with the latter and Theorem~\ref{thm:relaxation_CBO_CH},
    \begin{align*}
    \begin{aligned}
        \N{g_k^2}_2
        &= \tau\N{\nabla\CE(\widetilde{x}^{\mathrm{CH}}_{k})-\nabla\CE(x^{\mathrm{CBO}}_{k-1})}_2 
        \leq \tau L\N{\widetilde{x}^{\mathrm{CH}}_{k}-x^{\mathrm{CBO}}_{k-1}}_2 \\
        &\leq \tau L\left(\N{\widetilde{x}^{\mathrm{CH}}_{k}-x^{\mathrm{CH}}_{k-1}}_2+\N{x^{\mathrm{CH}}_{k-1}-x^{\mathrm{CBO}}_{k-1}}_2\right)\\
        &= \CO(\tau^2) \!+\! \CO\big(\tau\big(\!\abs{\lambda\!-\!1/\Delta t} \!+\! \sigma\sqrt{\Delta t} \!+\! \widetilde\sigma \!+\! N^{-1/2}\big)\big)
    \end{aligned}
    \end{align*}
    with high probability.
    By Theorem~\ref{thm:relaxation_CBO_CH} and Proposition~\ref{prop:relaxation_CH_GF} (quantitative Laplace principle~\cite[Proposition~4.5]{fornasier2021consensus}, see Proposition~\ref{prop:LaplacePrinciple}), it holds for a sufficiently large choice of $\alpha$ that
    \begin{align*}
    \begin{aligned}
        \N{g_k^3}_2
        &= \N{x^{\mathrm{CBO}}_{k}-\widetilde{x}^{\mathrm{CH}}_{k}}_2 
        \leq \N{x^{\mathrm{CBO}}_{k}-x^{\mathrm{CH}}_{k}}_2 + \N{x^{\mathrm{CH}}_{k}-\widetilde{x}^{\mathrm{CH}}_{k}}_2 \\
        &= \CO\big(\!\abs{\lambda-1/\Delta t} + \sigma\sqrt{\Delta t} + \widetilde\sigma + N^{-1/2}\big) + \CO(\tau)
    \end{aligned}
    \end{align*}
    with high probability concluding the proof after recalling $\widetilde\sigma^2=\tau/(2\alpha)$ as of Proposition~\ref{prop:relaxation_CH_GF}.
\end{proof}

\subsection{Technical details connecting CBO with GD via the CH scheme~\eqref{eq:CH}} \label{sec:appendix:proof_details}

We now make rigorous the central technical tools that we utilized to prove Theorem~\ref{thm:main_informal} in Section~\ref{sec:proof:thm:main_informal}, which were described colloquially at the beginning of Section~\ref{sec:CBO_stochastic_relaxation_GD}.
$\CM$ is the moment bound from Remark~\ref{rem:boundedness}.

\textbf{CBO is a stochastic relaxation of CH.}
Theorem~\ref{thm:relaxation_CBO_CH} explains how the CBO scheme~\eqref{eq:CBO} can be interpreted as a stochastic relaxation of the CH scheme~\eqref{eq:CH}.
\begin{theorem}[CBO relaxes CH] \label{thm:relaxation_CBO_CH}
	Fix $\varepsilon>0$ and $\delta\in(0,1/2)$.
	Let $\CE\in\CC(\bbR^d)$ satisfy \ref{asm:zero_global}\;\!--\,\ref{asm:quadratic_growth}.
	We denote by $(x^{\mathrm{CBO}}_{k})_{k=0,\dots,K}$ the iterates of the CBO scheme~\eqref{eq:CBO} and by $(x^{\mathrm{CH}}_{k})_{k=0,\dots,K}$ the ones of the CH scheme~\eqref{eq:CH}.
	Then, with probability larger than $1 - \left(\delta+\varepsilon\right)$, it holds for all $k=1,\dots,K$ that 
	\begin{equation} \label{eq:bound_CBO_CH}
		\N{x^{\mathrm{CBO}}_{k}\!-\!x^{\mathrm{CH}}_{k}}_2^2
		\leq \varepsilon^{-1} C \big(\!\abs{\lambda\!-\!1/\Delta t}^2 + \sigma^2\Delta t + \widetilde\sigma^2 + N^{-1}\big)
	\end{equation}
	with $C=C(\delta^{-1}, \Delta t, d, \alpha,\lambda,\sigma,b_1,b_2,C_1,C_2,K,\CM)$.
\end{theorem}

Auxiliary tools for the proof of Theorem~\ref{thm:relaxation_CBO_CH} are provided in Appendix~\ref{sec:appendix:thm:relaxation_CBO_CH}.

\begin{proof}[Proof of Theorem~\ref{thm:relaxation_CBO_CH}]
	We notice that for the choice~$\lambda=1/\Delta t$ the iterative update rule of the particles of the CBO dynamics~\eqref{eq:CBO_dynamics} becomes
	\begin{align} \label{proof:thm:relaxation_CBO_CH:1}
		\widetilde{X}_{k}^i
		= \conspoint{\empmeasuretilde{k-1}} + \sigma D\big(\widetilde{X}_{k-1}^i-\conspoint{\empmeasuretilde{k-1}}\big)\,B_{k}^i,
	\end{align}
	where $\empmeasuretilde{k} = \frac{1}{N}\sum_{i=1}^N \delta_{\widetilde{X}_{k}^i}$.
	In this case, the associated CBO scheme~\eqref{eq:CBO} reads
	\begin{align} \label{eq:CBO_aux}
	\begin{aligned}
		\widetilde{x}^{\mathrm{CBO}}_{k}
		&= \conspoint{\empmeasuretilde{k}}
		\quad\text{ with }
		\empmeasuretilde{k} = \frac{1}{N}\sum_{i=1}^N \delta_{\widetilde{X}_{k}^i},
		\text{ where }
		\widetilde{X}_{k}^i \sim \CN\!\left(\widetilde{x}^{\mathrm{CBO}}_{k-1},\Delta t\sigma^2D\big(\widetilde{X}_{k-1}^i-\widetilde{x}^{\mathrm{CBO}}_{k-1}\big)^2\right), \\
		\widetilde{x}^{\mathrm{CBO}}_{0}
		&=x_0,
	\end{aligned}
	\end{align}
	which resembles the CH dynamics~\eqref{eq:CH} with the difference in the underlying measure on which basis the consensus point~\eqref{eq:consensus_point} is computed.
	Let us further denote by $\widehat\mu^N_{k}$ the empirical measure $\widehat\mu^N_{k} = \frac{1}{N}\sum_{i=1}^N \delta_{Y_k^i}$, where $Y_k^i \sim \mu_k = \CN\!\left(x^{\mathrm{CH}}_{k-1},\widetilde{\sigma}^2\Id\right)$ for $i=1,\dots,N$, i.e., $Y_k^i = x^{\mathrm{CH}}_{k-1} + \widetilde\sigma B_{Y,k}^i$ with $B_{Y,k}^i$ being a standard Gaussian random vector.
	
	To obtain the probabilistic formulation of the statement, let us denote the underlying probability space over which all considered random variables get their realizations by $(\Omega,\CF,\bbP)$ and introduce the subset~$\Omega_M$ of $\Omega$ of suitably bounded random variables according to
	\begin{align*}
		\Omega_M
		:= \left\{\omega\in\Omega : \!\!\max_{k=0,\dots,K} \max\left\{\int\!\N{\;\!\dummy\;\!}_2^4 d\empmeasure{k}, \int\!\N{\;\!\dummy\;\!}_2^4 d\empmeasuretilde{k}, \int\!\N{\;\!\dummy\;\!}_2^4 d\mu_{k}, \int\!\N{\;\!\dummy\;\!}_2^4 d\widehat\mu^N_{k}\right\} \leq M^4 \right\}.
	\end{align*}	
	For the associated cutoff function (random variable) we write $\mathbbm{1}_{\Omega_M}$.
	Moreover, let us define the cutoff functions
	\begin{align}
		\CI_{M,k} =
		\begin{cases}
			1, & \text{ if } \max\left\{\int\N{\;\!\dummy\;\!}_2^4 d\empmeasure{k}, \int\N{\;\!\dummy\;\!}_2^4 d\empmeasuretilde{k}, \int\N{\;\!\dummy\;\!}_2^4 d\mu_{k}, \int\N{\;\!\dummy\;\!}_2^4 d\widehat\mu^N_{k}\right\} \leq M^4 \text{ for all } \ell\leq k,\\
			0, & \text{ else},
		\end{cases}
	\end{align}
	which are adapted to the natural filtration and satisfy $\mathbbm{1}_{\Omega_M}\leq\CI_{M,k}$ as well as $\CI_{M,k} = \CI_{M,k}\CI_{M,\ell}$ for all $\ell\leq k$.
	
	We can decompose the expected squared discrepancy $\bbE\N{x^{\mathrm{CBO}}_{k} - x^{\mathrm{CH}}_{k}}_2^2\mathbbm{1}_{\Omega_M}$ between the CBO scheme~\eqref{eq:CBO} and the CH scheme~\eqref{eq:CH} as
	\begin{align} \label{eq:proof:thm:relaxation_CBO_CH:3}
	\begin{aligned}
		\bbE\N{x^{\mathrm{CBO}}_{k} - x^{\mathrm{CH}}_{k}}_2^2\CI_{M,k}
		&\leq
		 2\bbE\N{x^{\mathrm{CBO}}_{k} - \widetilde{x}^{\mathrm{CBO}}_{k}}_2^2\CI_{M,k} + 2\bbE\N{\widetilde{x}^{\mathrm{CBO}}_{k} - x^{\mathrm{CH}}_{k}}_2^2\CI_{M,k}.
	\end{aligned}
	\end{align}
	In what follows we individually bound the two terms on the right-hand side of~\eqref{eq:proof:thm:relaxation_CBO_CH:3}.
	
	\textbf{First term:}
	Let us start with the term $\bbE\N{x^{\mathrm{CBO}}_{k} - \widetilde{x}^{\mathrm{CBO}}_{k}}_2^2\CI_{M,k}$, which we bound by combining the stability estimate for the consensus point, Lemma~\ref{lem:stability_consensus_points}, with Lemma~\ref{lem:stability_CBO_dynamics}, a stability estimate for the underlying CBO dynamics~\eqref{eq:CBO_dynamics} w.r.t.\@ its parameters~$\lambda$ and $\sigma$.
	Denoting the auxiliary cutoff function defined in~\eqref{eq:lem:stability_CBO_dynamics:cutoff1} in the setting~$\widehat\rho^{N,1}_{k}=\empmeasure{k}$ and $\widehat\rho^{N,2}_{k}=\empmeasuretilde{k}$ by $\overbar{\CI}^1_{M,k}$, we have due to Lemma~\ref{lem:stability_consensus_points} the estimate 
	\begin{align} \label{proof:thm:relaxation_CBO_CH:7}
	\begin{aligned}
		\bbE\N{x^{\mathrm{CBO}}_{k} - \widetilde{x}^{\mathrm{CBO}}_{k}}_2^2\CI_{M,k}
		&=\bbE\N{\conspoint{\empmeasure{k}} - \conspoint{\empmeasuretilde{k}}}_2^2\CI_{M,k}\\
		&\leq \bbE\N{\conspoint{\empmeasure{k}} - \conspoint{\empmeasuretilde{k}}}_2^2\overbar{\CI}^1_{M,k}
		\leq c_0 \bbE W_2^2(\empmeasure{k},\empmeasuretilde{k})\,\overbar{\CI}^1_{M,k}
	\end{aligned}
	\end{align}
	with a constant $c_0=c_0(\alpha,C_1,C_2,M)>0$.
	In the first inequality of~\eqref{proof:thm:relaxation_CBO_CH:7} we exploited $\CI_{M,k}\leq\overbar{\CI}^1_{M,k}$.
	The Wasserstein distance appearing on the right-hand side of~\eqref{proof:thm:relaxation_CBO_CH:7} can be upper bounded by choosing $\pi=\frac{1}{N}\sum_{i=1}^N\delta_{X_{k}^i}\otimes\delta_{\widetilde{X}_{k}^i}$ as viable transportation plan in Definition~\eqref{def:wassersteindistance}.
	This constitutes the first inequality in the estimate
	\begin{align} \label{proof:thm:relaxation_CBO_CH:9}
	\begin{aligned}
		\bbE W_2^2(\empmeasure{k},\empmeasuretilde{k})\,\overbar{\CI}^1_{M,k}
		&\leq \frac{1}{N}\sum_{i=1}^N\bbE\,\Nbig{X_{k}^i - \widetilde{X}_{k}^i}_2^2\,\overbar{\CI}^1_{M,k} \\
		&\leq c_1 \left(\abs{\lambda_1-\lambda_2}^2+\abs{\sigma_1-\sigma_2}^2\right) e^{c_2(k-1)} 
		\leq c_1 \abs{\lambda-\frac{1}{\Delta t}}^2 e^{c_2(k-1)},
	\end{aligned}
	\end{align}
	whereas the second step is a consequence of Lemma~\ref{lem:stability_CBO_dynamics} applied with $\lambda_1=\lambda, \sigma_1=\sigma$ and $\lambda_2=1/\Delta t, \sigma_2=\sigma$ as exploited in the third step.
	Hence, the constants are $c_1=c_1(\Delta t, d,b_1,b_2,M)>0$ and $c_2=c_2(\Delta t, d,\alpha,\lambda,\sigma,C_1,C_2,M)>0$.
	
	\textbf{Second term:}
	To control the term~$\bbE\N{\widetilde{x}^{\mathrm{CBO}}_{k} - x^{\mathrm{CH}}_{k}}_2^2\CI_{M,k}$ we start by decomposing it according to
	\begin{align} \label{proof:thm:relaxation_CBO_CH:21}
	\begin{aligned}
		\bbE\N{\widetilde{x}^{\mathrm{CBO}}_{k} - x^{\mathrm{CH}}_{k}}_2^2\CI_{M,k}
		\leq 2\bbE\N{\widetilde{x}^{\mathrm{CBO}}_{k} - \conspoint{\widehat\mu^N_{k}}}_2^2\CI_{M,k} + 2\bbE\N{\conspoint{\widehat\mu^N_{k}} - x^{\mathrm{CH}}_{k}}_2^2\CI_{M,k},
	\end{aligned}
	\end{align}
	where $\widehat\mu^N_{k}$ is as introduced at the beginning of the proof.
	For the first summand in~\eqref{proof:thm:relaxation_CBO_CH:21} the stability estimate for the consensus point, Lemma~\ref{lem:stability_consensus_points}, gives
	\begin{align} \label{proof:thm:relaxation_CBO_CH:23}
	\begin{aligned}
		\bbE\N{\widetilde{x}^{\mathrm{CBO}}_{k} - \conspoint{\widehat\mu^N_{k}}}_2^2\CI_{M,k}
		= \bbE\N{\conspoint{\empmeasuretilde{k}} - \conspoint{\widehat\mu^N_{k}}}_2^2\CI_{M,k}
		\leq c_0 \bbE W_2^2(\empmeasuretilde{k},\widehat\mu^N_{k})\,\CI_{M,k} 
	\end{aligned}
	\end{align}
	with a constant $c_0=c_0(\alpha,C_1,C_2,M)>0$.
	By choosing $\pi=\frac{1}{N}\sum_{i=1}^N\delta_{\widetilde{X}_{k}^i}\otimes\delta_{Y_{k}^i}$ as viable transportation plan in Definition~\eqref{def:wassersteindistance}, we can further bound
	\begin{align} \label{proof:thm:relaxation_CBO_CH:25}
	\begin{aligned}
		\bbE W_2^2(\empmeasuretilde{k},\widehat\mu^N_{k})\,\CI_{M,k} 
		\leq \frac{1}{N}\sum_{i=1}^N \bbE\Nbig{\widetilde{X}_{k}^i - Y_k^i}_2^2\,\CI_{M,k}
	\end{aligned}
	\end{align}
	and since $\widetilde{X}_{k}^i \sim \CN\big(\widetilde{x}^{\mathrm{CBO}}_{k-1},\Delta t\sigma^2D(\widetilde{X}_{k-1}^i-\widetilde{x}^{\mathrm{CBO}}_{k-1})^2\big)$ and $Y_k^i\sim \CN\!\left(x^{\mathrm{CH}}_{k-1},\widetilde{\sigma}^2\Id\right)$ we have
	\begin{align} \label{proof:thm:relaxation_CBO_CH:27}
	\begin{aligned}
		\frac{1}{N}\sum_{i=1}^N \bbE\Nbig{\widetilde{X}_{k}^i - Y_k^i}_2^2\,\CI_{M,k}
		&\leq 2\bbE\Nbig{\widetilde{x}^{\mathrm{CBO}}_{k-1}-x^{\mathrm{CH}}_{k-1}}_2^2\,\CI_{M,k-1}\\
		&\quad\;\! + \frac{4}{N}\sum_{i=1}^N \left(\sigma^2\bbE\Nbig{D\big(\widetilde{X}_{k-1}^i-\widetilde{x}^{\mathrm{CBO}}_{k-1}\big)B_{k}^i}_2^2\,\CI_{M,k-1} + \widetilde\sigma^2\bbE\Nbig{B_{Y,k}^i}_2^2\right) \\
		&\leq 2\bbE\Nbig{\widetilde{x}^{\mathrm{CBO}}_{k-1}-x^{\mathrm{CH}}_{k-1}}_2^2\,\CI_{M,k-1} + 8\sigma^2\Delta t \left(b_1 + (1+b_2) M^2\right) + 4\widetilde\sigma^2.
	\end{aligned}
	\end{align}
	Note that in the last step we exploited the definition of the cutoff function~$\CI_{M,k}$, which allowed to derive the bound
	\begin{align*}
	\begin{aligned}
		\frac{1}{N}\sum_{i=1}^N \bbE\Nbig{D\big(\widetilde{X}_{k-1}^i-\widetilde{x}^{\mathrm{CBO}}_{k-1}\big)B_{k}^i}_2^2\,\CI_{M,k-1}
		&\leq \frac{2}{N}\sum_{i=1}^N \bbE\left(\Nbig{\widetilde{X}_{k-1}^i}_2^2+\N{\widetilde{x}^{\mathrm{CBO}}_{k-1}}_2^2\right) \Nbig{B_{k}^i}_2^2\,\CI_{M,k-1} \\
		&\leq 2\bbE\N{\widetilde{x}^{\mathrm{CBO}}_{k-1}}_2^2\CI_{M,k-1} + \frac{2}{N}\sum_{i=1}^N \bbE\Nbig{\widetilde{X}_{k-1}^i}_2^2\,\CI_{M,k-1} \\
		&\leq 2\left(b_1 + (1+b_2) M^2\right)
	\end{aligned}
	\end{align*}
	by using Lemma~\ref{lem:boundedness_consensuspoint} and the fact that $B_{k}^i\sim\CN\left(0,\Delta t\Id\right)$ is independent from $\widetilde{X}_{k-1}^i$ and $\widetilde{x}^{\mathrm{CBO}}_{k-1}$.
	Inserting~\eqref{proof:thm:relaxation_CBO_CH:27} into \eqref{proof:thm:relaxation_CBO_CH:25} and this into \eqref{proof:thm:relaxation_CBO_CH:23} afterwards, we are left with
	\begin{align} \label{proof:thm:relaxation_CBO_CH:29}
	\begin{aligned}
		\bbE\N{\widetilde{x}^{\mathrm{CBO}}_{k} - \conspoint{\widehat\mu^N_{k}}}_2^2\CI_{M,k}
		\leq c\left(\bbE\Nbig{\widetilde{x}^{\mathrm{CBO}}_{k-1}-x^{\mathrm{CH}}_{k-1}}_2^2\,\CI_{M,k-1} + \sigma^2\Delta t + \widetilde\sigma^2\right)
	\end{aligned}
	\end{align}
	with a constant $c=c(c_0,b_1,b_2,M)>0$.
	For the second summand in~\eqref{proof:thm:relaxation_CBO_CH:21} we have by Lemma~\ref{lem:lln_consensuspoint}
	\begin{align} \label{proof:thm:relaxation_CBO_CH:31}
	\begin{aligned}
		\bbE\N{\conspoint{\widehat\mu^N_{k}} - x^{\mathrm{CH}}_{k}}_2^2\CI_{M,k}
		\leq \bbE\N{\conspoint{\widehat\mu^N_{k}} - \conspoint{\mu_k}}_2^2\overbar{\CI}^2_{M,k} 
		\leq c_3N^{-1},
	\end{aligned}
	\end{align}
	with $c_3=c_3(\alpha,b_1,b_2,C_2,M)>0$ and where $\overbar{\CI}^2_{M,k}$ is an auxiliary cutoff function as defined in~\eqref{eq:lem:stability_CBO_dynamics:cutoff2}.
	Combining \eqref{proof:thm:relaxation_CBO_CH:29} with \eqref{proof:thm:relaxation_CBO_CH:31} we arrive for \eqref{proof:thm:relaxation_CBO_CH:21} at
	\begin{align} \label{proof:thm:relaxation_CBO_CH:41}
	\begin{aligned}
		\bbE\N{\widetilde{x}^{\mathrm{CBO}}_{k} - x^{\mathrm{CH}}_{k}}_2^2\CI_{M,k}
		&\leq c\bbE\Nbig{\widetilde{x}^{\mathrm{CBO}}_{k-1}-x^{\mathrm{CH}}_{k-1}}_2^2\,\CI_{M,k-1} + c\sigma^2\Delta t + c\widetilde\sigma^2 + c_3N^{-1}.
	\end{aligned}
	\end{align}
	An application of the discrete variant of Gr\"onwall's inequality~\eqref{eq:gronwall_inequality} shows that
	\begin{align} \label{eq:proof:thm:relaxation_CBO_CH:43}
	\begin{aligned}
		\bbE\N{\widetilde{x}^{\mathrm{CBO}}_{k} - x^{\mathrm{CH}}_{k}}_2^2\CI_{M,k}
		&\leq c^k\bbE\N{\widetilde{x}^{\mathrm{CBO}}_{0} - x^{\mathrm{CH}}_{0}}_2^2 + \left(c\sigma^2\Delta t + c\widetilde\sigma^2 + c_3N^{-1}\right)e^{c(k-1)},
	\end{aligned}
	\end{align}
	where the first term vanishes as both schemes are initialized with $x_0$.
	
	\textbf{Concluding step:}
	Collecting the estimates~\eqref{proof:thm:relaxation_CBO_CH:7} combined with \eqref{proof:thm:relaxation_CBO_CH:9}, and \eqref{eq:proof:thm:relaxation_CBO_CH:43} yields for \eqref{eq:proof:thm:relaxation_CBO_CH:3} the bound
	\begin{align} \label{eq:proof:thm:relaxation_CBO_CH:51}
	\begin{aligned}
		\bbE\N{x^{\mathrm{CBO}}_{k} - x^{\mathrm{CH}}_{k}}_2^2\mathbbm{1}_{\Omega_M}
		&\lesssim
		c_0c_1 \abs{\lambda-\frac{1}{\Delta t}}^2 e^{c_2(k-1)} + \left(c\sigma^2\Delta t + c\widetilde\sigma^2 + c_3N^{-1}\right)e^{c(k-1)} \\
		&\leq
		C \left(\abs{\lambda-\frac{1}{\Delta t}}^2 + \sigma^2\Delta t + \widetilde\sigma^2 + c_3N^{-1}\right),
	\end{aligned}
	\end{align}
	with a constant $C=C(\Delta t, d, \alpha,\lambda,\sigma,b_1,b_2,C_1,C_2,K,M)>0$.
	Observe that we additionally used $\mathbbm{1}_{\Omega_M}\leq\CI_{M,k}$ as observed at the beginning.
	
	\textbf{Probabilistic formulation:}
	We first note that with Markov's inequality we have the estimate
	\begin{align*}
	\begin{aligned}
		\bbP\big(\Omega_M^c\big)
		&= \bbP\left(\max_{k=0,\dots,K} \max\left\{\int\!\N{\;\!\dummy\;\!}_2^4 d\empmeasure{k}, \int\!\N{\;\!\dummy\;\!}_2^4 d\empmeasuretilde{k}, \int\!\N{\;\!\dummy\;\!}_2^4 d\mu_{k}, \int\!\N{\;\!\dummy\;\!}_2^4 d\widehat\mu^N_{k}\right\} > M^4\right) \\
		&\leq \frac{1}{M^4} \left(\bbE\max_{k=0,\dots,K} \int\!\N{\;\!\dummy\;\!}_2^4 d\empmeasure{k}+\bbE\max_{k=0,\dots,K}\int\!\N{\;\!\dummy\;\!}_2^4 d\empmeasuretilde{k}\right.\\
		&\qquad\qquad+\left.\bbE\max_{k=0,\dots,K}\int\!\N{\;\!\dummy\;\!}_2^4 d\mu_{k}+\bbE\max_{k=0,\dots,K}\int\!\N{\;\!\dummy\;\!}_2^4 d\widehat\mu^N_{k}\right) \\
		&\leq \frac{1}{M^4} \big(\CM^{\mathrm{CBO}}+\widetilde\CM^{\mathrm{CBO}}+\CM^{\mathrm{CH}}+\widehat\CM^{\mathrm{CH}}\;\!\big),
	\end{aligned}
	\end{align*}
	where the last inequality is due to Lemmas~\ref{lem:boundedness_CBOdynamics}, ~\ref{lem:boundedness_CH} and~\ref{lem:boundedness_CH_empirical}.
	Here, $\widetilde\CM^{\mathrm{CBO}}$ represents the constant $\CM^{\mathrm{CBO}}$ from Lemma~\ref{lem:boundedness_CBOdynamics} in the setting where $\lambda=1/\Delta t$, i.e., we have that $
        \widetilde\CM^{\mathrm{CBO}}=\CM^{\mathrm{CBO}}(1/\Delta t, \sigma, d, b_1, b_2, K\Delta t, K, \rho_0)$.
	Thus, for any $\delta\in(0,1/2)$, a sufficiently large choice $M = M(\delta^{-1},\CM^{\mathrm{CBO}},\widetilde\CM^{\mathrm{CBO}},\CM^{\mathrm{CH}},\widehat\CM^{\mathrm{CH}})$ allows to ensure $\bbP\big(\Omega_M^c\big)\leq \delta$.
	To conclude the proof, let us denote by $K_\varepsilon\subset\Omega$ the set, where~\eqref{eq:bound_CBO_CH} does not hold and abbreviate
	\begin{align*}
		\epsilon
		= \varepsilon^{-1} C \left(\abs{\lambda-\frac{1}{\Delta t}}^2 + \sigma^2\Delta t + \widetilde\sigma^2 + c_3N^{-1}\right).
	\end{align*}
	For the probability of this set we can estimate
	\begin{align}
	\begin{aligned}
		\bbP\big(K_\varepsilon\big)
		&= \bbP\big(K_\varepsilon\cap\Omega_M\big) + \bbP\big(K_\varepsilon\cap\Omega_M^c\big)
		\leq \bbP\big(K_\varepsilon\,\big|\,\Omega_M\big)\,\bbP\big(\Omega_M\big) + \bbP\big(\Omega_M^c\big) \\
		&\leq \bbP\big(K_\varepsilon\,\big|\,\Omega_M\big) + \delta
		\leq \epsilon^{-1}\,\bbE\left[\N{x^{\mathrm{CBO}}_{k} - x^{\mathrm{CH}}_{k}}_2^2 \,\Big|\,\Omega_M\right] + \delta,
	\end{aligned}
	\end{align}
	where the last step is due to Markov's inequality.
    By definition of the conditional expectation we further have
	\begin{align*}
	\begin{aligned}
		\bbE\left[\N{x^{\mathrm{CBO}}_{k} - x^{\mathrm{CH}}_{k}}_2^2 \,\Big|\, \Omega_M\right]
		\leq \frac{1}{\bbP\big(\Omega_M\big)} \bbE\N{x^{\mathrm{CBO}}_{k} - x^{\mathrm{CH}}_{k}}_2^2\mathbbm{1}_{\Omega_M}
		\leq 2\bbE\N{x^{\mathrm{CBO}}_{k} - x^{\mathrm{CH}}_{k}}_2^2\mathbbm{1}_{\Omega_M}.
	\end{aligned}
	\end{align*}
	Inserting now the expression from \eqref{eq:proof:thm:relaxation_CBO_CH:51} concludes the proof.
\end{proof}

\subsection{Technical details connecting CBO with GD via the CH scheme~\eqref{eq:CH}} \label{sec:appendix:proof_details}

We now make rigorous the central technical tools that we utilized to prove Theorem~\ref{thm:main_informal} in Section~\ref{sec:proof:thm:main_informal}, which were described colloquially at the beginning of Section~\ref{sec:CBO_stochastic_relaxation_GD}.
$\CM$ is the moment bound from Remark~\ref{rem:boundedness}.

\textbf{CBO is a stochastic relaxation of CH.}
Theorem~\ref{thm:relaxation_CBO_CH} explains how the CBO scheme~\eqref{eq:CBO} can be interpreted as a stochastic relaxation of the CH scheme~\eqref{eq:CH}.
\begin{theorem}[CBO relaxes CH] \label{thm:relaxation_CBO_CH}
	Fix $\varepsilon>0$ and $\delta\in(0,1/2)$.
	Let $\CE\in\CC(\bbR^d)$ satisfy \ref{asm:zero_global}\;\!--\,\ref{asm:quadratic_growth}.
	We denote by $(x^{\mathrm{CBO}}_{k})_{k=0,\dots,K}$ the iterates of the CBO scheme~\eqref{eq:CBO} and by $(x^{\mathrm{CH}}_{k})_{k=0,\dots,K}$ the ones of the CH scheme~\eqref{eq:CH}.
	Then, with probability larger than $1 - \left(\delta+\varepsilon\right)$, it holds for all $k=1,\dots,K$ that 
	\begin{equation} \label{eq:bound_CBO_CH}
		\N{x^{\mathrm{CBO}}_{k}\!-\!x^{\mathrm{CH}}_{k}}_2^2
		\leq \varepsilon^{-1} C \big(\!\abs{\lambda\!-\!1/\Delta t}^2 + \sigma^2\Delta t + \widetilde\sigma^2 + N^{-1}\big)
	\end{equation}
	with $C=C(\delta^{-1}, \Delta t, d, \alpha,\lambda,\sigma,b_1,b_2,C_1,C_2,K,\CM)$.
\end{theorem}

Auxiliary tools for the proof of Theorem~\ref{thm:relaxation_CBO_CH} are provided in Appendix~\ref{sec:appendix:thm:relaxation_CBO_CH}.

\begin{proof}[Proof of Theorem~\ref{thm:relaxation_CBO_CH}]
	We notice that for the choice~$\lambda=1/\Delta t$ the iterative update rule of the particles of the CBO dynamics~\eqref{eq:CBO_dynamics} becomes
	\begin{align} \label{proof:thm:relaxation_CBO_CH:1}
		\widetilde{X}_{k}^i
		= \conspoint{\empmeasuretilde{k-1}} + \sigma D\big(\widetilde{X}_{k-1}^i-\conspoint{\empmeasuretilde{k-1}}\big)\,B_{k}^i,
	\end{align}
	where $\empmeasuretilde{k} = \frac{1}{N}\sum_{i=1}^N \delta_{\widetilde{X}_{k}^i}$.
	In this case, the associated CBO scheme~\eqref{eq:CBO} reads
	\begin{align} \label{eq:CBO_aux}
	\begin{aligned}
		\widetilde{x}^{\mathrm{CBO}}_{k}
		&= \conspoint{\empmeasuretilde{k}}
		\quad\text{ with }
		\empmeasuretilde{k} = \frac{1}{N}\sum_{i=1}^N \delta_{\widetilde{X}_{k}^i},
		\text{ where }
		\widetilde{X}_{k}^i \sim \CN\!\left(\widetilde{x}^{\mathrm{CBO}}_{k-1},\Delta t\sigma^2D\big(\widetilde{X}_{k-1}^i-\widetilde{x}^{\mathrm{CBO}}_{k-1}\big)^2\right), \\
		\widetilde{x}^{\mathrm{CBO}}_{0}
		&=x_0,
	\end{aligned}
	\end{align}
	which resembles the CH dynamics~\eqref{eq:CH} with the difference in the underlying measure on which basis the consensus point~\eqref{eq:consensus_point} is computed.
	Let us further denote by $\widehat\mu^N_{k}$ the empirical measure $\widehat\mu^N_{k} = \frac{1}{N}\sum_{i=1}^N \delta_{Y_k^i}$, where $Y_k^i \sim \mu_k = \CN\!\left(x^{\mathrm{CH}}_{k-1},\widetilde{\sigma}^2\Id\right)$ for $i=1,\dots,N$, i.e., $Y_k^i = x^{\mathrm{CH}}_{k-1} + \widetilde\sigma B_{Y,k}^i$ with $B_{Y,k}^i$ being a standard Gaussian random vector.
	
	To obtain the probabilistic formulation of the statement, let us denote the underlying probability space over which all considered random variables get their realizations by $(\Omega,\CF,\bbP)$ and introduce the subset~$\Omega_M$ of $\Omega$ of suitably bounded random variables according to
	\begin{align*}
		\Omega_M
		:= \left\{\omega\in\Omega : \!\!\max_{k=0,\dots,K} \max\left\{\int\!\N{\;\!\dummy\;\!}_2^4 d\empmeasure{k}, \int\!\N{\;\!\dummy\;\!}_2^4 d\empmeasuretilde{k}, \int\!\N{\;\!\dummy\;\!}_2^4 d\mu_{k}, \int\!\N{\;\!\dummy\;\!}_2^4 d\widehat\mu^N_{k}\right\} \leq M^4 \right\}.
	\end{align*}	
	For the associated cutoff function (random variable) we write $\mathbbm{1}_{\Omega_M}$.
	Moreover, let us define the cutoff functions
	\begin{align}
		\CI_{M,k} =
		\begin{cases}
			1, & \text{ if } \max\left\{\int\N{\;\!\dummy\;\!}_2^4 d\empmeasure{k}, \int\N{\;\!\dummy\;\!}_2^4 d\empmeasuretilde{k}, \int\N{\;\!\dummy\;\!}_2^4 d\mu_{k}, \int\N{\;\!\dummy\;\!}_2^4 d\widehat\mu^N_{k}\right\} \leq M^4 \text{ for all } \ell\leq k,\\
			0, & \text{ else},
		\end{cases}
	\end{align}
	which are adapted to the natural filtration and satisfy $\mathbbm{1}_{\Omega_M}\leq\CI_{M,k}$ as well as $\CI_{M,k} = \CI_{M,k}\CI_{M,\ell}$ for all $\ell\leq k$.
	
	We can decompose the expected squared discrepancy $\bbE\N{x^{\mathrm{CBO}}_{k} - x^{\mathrm{CH}}_{k}}_2^2\mathbbm{1}_{\Omega_M}$ between the CBO scheme~\eqref{eq:CBO} and the CH scheme~\eqref{eq:CH} as
	\begin{align} \label{eq:proof:thm:relaxation_CBO_CH:3}
	\begin{aligned}
		\bbE\N{x^{\mathrm{CBO}}_{k} - x^{\mathrm{CH}}_{k}}_2^2\CI_{M,k}
		&\leq
		 2\bbE\N{x^{\mathrm{CBO}}_{k} - \widetilde{x}^{\mathrm{CBO}}_{k}}_2^2\CI_{M,k} + 2\bbE\N{\widetilde{x}^{\mathrm{CBO}}_{k} - x^{\mathrm{CH}}_{k}}_2^2\CI_{M,k}.
	\end{aligned}
	\end{align}
	In what follows we individually bound the two terms on the right-hand side of~\eqref{eq:proof:thm:relaxation_CBO_CH:3}.
	
	\textbf{First term:}
	Let us start with the term $\bbE\N{x^{\mathrm{CBO}}_{k} - \widetilde{x}^{\mathrm{CBO}}_{k}}_2^2\CI_{M,k}$, which we bound by combining the stability estimate for the consensus point, Lemma~\ref{lem:stability_consensus_points}, with Lemma~\ref{lem:stability_CBO_dynamics}, a stability estimate for the underlying CBO dynamics~\eqref{eq:CBO_dynamics} w.r.t.\@ its parameters~$\lambda$ and $\sigma$.
	Denoting the auxiliary cutoff function defined in~\eqref{eq:lem:stability_CBO_dynamics:cutoff1} in the setting~$\widehat\rho^{N,1}_{k}=\empmeasure{k}$ and $\widehat\rho^{N,2}_{k}=\empmeasuretilde{k}$ by $\overbar{\CI}^1_{M,k}$, we have due to Lemma~\ref{lem:stability_consensus_points} the estimate 
	\begin{align} \label{proof:thm:relaxation_CBO_CH:7}
	\begin{aligned}
		\bbE\N{x^{\mathrm{CBO}}_{k} - \widetilde{x}^{\mathrm{CBO}}_{k}}_2^2\CI_{M,k}
		&=\bbE\N{\conspoint{\empmeasure{k}} - \conspoint{\empmeasuretilde{k}}}_2^2\CI_{M,k}\\
		&\leq \bbE\N{\conspoint{\empmeasure{k}} - \conspoint{\empmeasuretilde{k}}}_2^2\overbar{\CI}^1_{M,k}
		\leq c_0 \bbE W_2^2(\empmeasure{k},\empmeasuretilde{k})\,\overbar{\CI}^1_{M,k}
	\end{aligned}
	\end{align}
	with a constant $c_0=c_0(\alpha,C_1,C_2,M)>0$.
	In the first inequality of~\eqref{proof:thm:relaxation_CBO_CH:7} we exploited $\CI_{M,k}\leq\overbar{\CI}^1_{M,k}$.
	The Wasserstein distance appearing on the right-hand side of~\eqref{proof:thm:relaxation_CBO_CH:7} can be upper bounded by choosing $\pi=\frac{1}{N}\sum_{i=1}^N\delta_{X_{k}^i}\otimes\delta_{\widetilde{X}_{k}^i}$ as viable transportation plan in Definition~\eqref{def:wassersteindistance}.
	This constitutes the first inequality in the estimate
	\begin{align} \label{proof:thm:relaxation_CBO_CH:9}
	\begin{aligned}
		\bbE W_2^2(\empmeasure{k},\empmeasuretilde{k})\,\overbar{\CI}^1_{M,k}
		&\leq \frac{1}{N}\sum_{i=1}^N\bbE\,\Nbig{X_{k}^i - \widetilde{X}_{k}^i}_2^2\,\overbar{\CI}^1_{M,k} \\
		&\leq c_1 \left(\abs{\lambda_1-\lambda_2}^2+\abs{\sigma_1-\sigma_2}^2\right) e^{c_2(k-1)} 
		\leq c_1 \abs{\lambda-\frac{1}{\Delta t}}^2 e^{c_2(k-1)},
	\end{aligned}
	\end{align}
	whereas the second step is a consequence of Lemma~\ref{lem:stability_CBO_dynamics} applied with $\lambda_1=\lambda, \sigma_1=\sigma$ and $\lambda_2=1/\Delta t, \sigma_2=\sigma$ as exploited in the third step.
	Hence, the constants are $c_1=c_1(\Delta t, d,b_1,b_2,M)>0$ and $c_2=c_2(\Delta t, d,\alpha,\lambda,\sigma,C_1,C_2,M)>0$.
	
	\textbf{Second term:}
	To control the term~$\bbE\N{\widetilde{x}^{\mathrm{CBO}}_{k} - x^{\mathrm{CH}}_{k}}_2^2\CI_{M,k}$ we start by decomposing it according to
	\begin{align} \label{proof:thm:relaxation_CBO_CH:21}
	\begin{aligned}
		\bbE\N{\widetilde{x}^{\mathrm{CBO}}_{k} - x^{\mathrm{CH}}_{k}}_2^2\CI_{M,k}
		\leq 2\bbE\N{\widetilde{x}^{\mathrm{CBO}}_{k} - \conspoint{\widehat\mu^N_{k}}}_2^2\CI_{M,k} + 2\bbE\N{\conspoint{\widehat\mu^N_{k}} - x^{\mathrm{CH}}_{k}}_2^2\CI_{M,k},
	\end{aligned}
	\end{align}
	where $\widehat\mu^N_{k}$ is as introduced at the beginning of the proof.
	For the first summand in~\eqref{proof:thm:relaxation_CBO_CH:21} the stability estimate for the consensus point, Lemma~\ref{lem:stability_consensus_points}, gives
	\begin{align} \label{proof:thm:relaxation_CBO_CH:23}
	\begin{aligned}
		\bbE\N{\widetilde{x}^{\mathrm{CBO}}_{k} - \conspoint{\widehat\mu^N_{k}}}_2^2\CI_{M,k}
		= \bbE\N{\conspoint{\empmeasuretilde{k}} - \conspoint{\widehat\mu^N_{k}}}_2^2\CI_{M,k}
		\leq c_0 \bbE W_2^2(\empmeasuretilde{k},\widehat\mu^N_{k})\,\CI_{M,k} 
	\end{aligned}
	\end{align}
	with a constant $c_0=c_0(\alpha,C_1,C_2,M)>0$.
	By choosing $\pi=\frac{1}{N}\sum_{i=1}^N\delta_{\widetilde{X}_{k}^i}\otimes\delta_{Y_{k}^i}$ as viable transportation plan in Definition~\eqref{def:wassersteindistance}, we can further bound
	\begin{align} \label{proof:thm:relaxation_CBO_CH:25}
	\begin{aligned}
		\bbE W_2^2(\empmeasuretilde{k},\widehat\mu^N_{k})\,\CI_{M,k} 
		\leq \frac{1}{N}\sum_{i=1}^N \bbE\Nbig{\widetilde{X}_{k}^i - Y_k^i}_2^2\,\CI_{M,k}
	\end{aligned}
	\end{align}
	and since $\widetilde{X}_{k}^i \sim \CN\big(\widetilde{x}^{\mathrm{CBO}}_{k-1},\Delta t\sigma^2D(\widetilde{X}_{k-1}^i-\widetilde{x}^{\mathrm{CBO}}_{k-1})^2\big)$ and $Y_k^i\sim \CN\!\left(x^{\mathrm{CH}}_{k-1},\widetilde{\sigma}^2\Id\right)$ we have
	\begin{align} \label{proof:thm:relaxation_CBO_CH:27}
	\begin{aligned}
		\frac{1}{N}\sum_{i=1}^N \bbE\Nbig{\widetilde{X}_{k}^i - Y_k^i}_2^2\,\CI_{M,k}
		&\leq 2\bbE\Nbig{\widetilde{x}^{\mathrm{CBO}}_{k-1}-x^{\mathrm{CH}}_{k-1}}_2^2\,\CI_{M,k-1}\\
		&\quad\;\! + \frac{4}{N}\sum_{i=1}^N \left(\sigma^2\bbE\Nbig{D\big(\widetilde{X}_{k-1}^i-\widetilde{x}^{\mathrm{CBO}}_{k-1}\big)B_{k}^i}_2^2\,\CI_{M,k-1} + \widetilde\sigma^2\bbE\Nbig{B_{Y,k}^i}_2^2\right) \\
		&\leq 2\bbE\Nbig{\widetilde{x}^{\mathrm{CBO}}_{k-1}-x^{\mathrm{CH}}_{k-1}}_2^2\,\CI_{M,k-1} + 8\sigma^2\Delta t \left(b_1 + (1+b_2) M^2\right) + 4\widetilde\sigma^2.
	\end{aligned}
	\end{align}
	Note that in the last step we exploited the definition of the cutoff function~$\CI_{M,k}$, which allowed to derive the bound
	\begin{align*}
	\begin{aligned}
		\frac{1}{N}\sum_{i=1}^N \bbE\Nbig{D\big(\widetilde{X}_{k-1}^i-\widetilde{x}^{\mathrm{CBO}}_{k-1}\big)B_{k}^i}_2^2\,\CI_{M,k-1}
		&\leq \frac{2}{N}\sum_{i=1}^N \bbE\left(\Nbig{\widetilde{X}_{k-1}^i}_2^2+\N{\widetilde{x}^{\mathrm{CBO}}_{k-1}}_2^2\right) \Nbig{B_{k}^i}_2^2\,\CI_{M,k-1} \\
		&\leq 2\bbE\N{\widetilde{x}^{\mathrm{CBO}}_{k-1}}_2^2\CI_{M,k-1} + \frac{2}{N}\sum_{i=1}^N \bbE\Nbig{\widetilde{X}_{k-1}^i}_2^2\,\CI_{M,k-1} \\
		&\leq 2\left(b_1 + (1+b_2) M^2\right)
	\end{aligned}
	\end{align*}
	by using Lemma~\ref{lem:boundedness_consensuspoint} and the fact that $B_{k}^i\sim\CN\left(0,\Delta t\Id\right)$ is independent from $\widetilde{X}_{k-1}^i$ and $\widetilde{x}^{\mathrm{CBO}}_{k-1}$.
	Inserting~\eqref{proof:thm:relaxation_CBO_CH:27} into \eqref{proof:thm:relaxation_CBO_CH:25} and this into \eqref{proof:thm:relaxation_CBO_CH:23} afterwards, we are left with
	\begin{align} \label{proof:thm:relaxation_CBO_CH:29}
	\begin{aligned}
		\bbE\N{\widetilde{x}^{\mathrm{CBO}}_{k} - \conspoint{\widehat\mu^N_{k}}}_2^2\CI_{M,k}
		\leq c\left(\bbE\Nbig{\widetilde{x}^{\mathrm{CBO}}_{k-1}-x^{\mathrm{CH}}_{k-1}}_2^2\,\CI_{M,k-1} + \sigma^2\Delta t + \widetilde\sigma^2\right)
	\end{aligned}
	\end{align}
	with a constant $c=c(c_0,b_1,b_2,M)>0$.
	For the second summand in~\eqref{proof:thm:relaxation_CBO_CH:21} we have by Lemma~\ref{lem:lln_consensuspoint}
	\begin{align} \label{proof:thm:relaxation_CBO_CH:31}
	\begin{aligned}
		\bbE\N{\conspoint{\widehat\mu^N_{k}} - x^{\mathrm{CH}}_{k}}_2^2\CI_{M,k}
		\leq \bbE\N{\conspoint{\widehat\mu^N_{k}} - \conspoint{\mu_k}}_2^2\overbar{\CI}^2_{M,k} 
		\leq c_3N^{-1},
	\end{aligned}
	\end{align}
	with $c_3=c_3(\alpha,b_1,b_2,C_2,M)>0$ and where $\overbar{\CI}^2_{M,k}$ is an auxiliary cutoff function as defined in~\eqref{eq:lem:stability_CBO_dynamics:cutoff2}.
	Combining \eqref{proof:thm:relaxation_CBO_CH:29} with \eqref{proof:thm:relaxation_CBO_CH:31} we arrive for \eqref{proof:thm:relaxation_CBO_CH:21} at
	\begin{align} \label{proof:thm:relaxation_CBO_CH:41}
	\begin{aligned}
		\bbE\N{\widetilde{x}^{\mathrm{CBO}}_{k} - x^{\mathrm{CH}}_{k}}_2^2\CI_{M,k}
		&\leq c\bbE\Nbig{\widetilde{x}^{\mathrm{CBO}}_{k-1}-x^{\mathrm{CH}}_{k-1}}_2^2\,\CI_{M,k-1} + c\sigma^2\Delta t + c\widetilde\sigma^2 + c_3N^{-1}.
	\end{aligned}
	\end{align}
	An application of the discrete variant of Gr\"onwall's inequality~\eqref{eq:gronwall_inequality} shows that
	\begin{align} \label{eq:proof:thm:relaxation_CBO_CH:43}
	\begin{aligned}
		\bbE\N{\widetilde{x}^{\mathrm{CBO}}_{k} - x^{\mathrm{CH}}_{k}}_2^2\CI_{M,k}
		&\leq c^k\bbE\N{\widetilde{x}^{\mathrm{CBO}}_{0} - x^{\mathrm{CH}}_{0}}_2^2 + \left(c\sigma^2\Delta t + c\widetilde\sigma^2 + c_3N^{-1}\right)e^{c(k-1)},
	\end{aligned}
	\end{align}
	where the first term vanishes as both schemes are initialized with $x_0$.
	
	\textbf{Concluding step:}
	Collecting the estimates~\eqref{proof:thm:relaxation_CBO_CH:7} combined with \eqref{proof:thm:relaxation_CBO_CH:9}, and \eqref{eq:proof:thm:relaxation_CBO_CH:43} yields for \eqref{eq:proof:thm:relaxation_CBO_CH:3} the bound
	\begin{align} \label{eq:proof:thm:relaxation_CBO_CH:51}
	\begin{aligned}
		\bbE\N{x^{\mathrm{CBO}}_{k} - x^{\mathrm{CH}}_{k}}_2^2\mathbbm{1}_{\Omega_M}
		&\lesssim
		c_0c_1 \abs{\lambda-\frac{1}{\Delta t}}^2 e^{c_2(k-1)} + \left(c\sigma^2\Delta t + c\widetilde\sigma^2 + c_3N^{-1}\right)e^{c(k-1)} \\
		&\leq
		C \left(\abs{\lambda-\frac{1}{\Delta t}}^2 + \sigma^2\Delta t + \widetilde\sigma^2 + c_3N^{-1}\right),
	\end{aligned}
	\end{align}
	with a constant $C=C(\Delta t, d, \alpha,\lambda,\sigma,b_1,b_2,C_1,C_2,K,M)>0$.
	Observe that we additionally used $\mathbbm{1}_{\Omega_M}\leq\CI_{M,k}$ as observed at the beginning.
	
	\textbf{Probabilistic formulation:}
	We first note that with Markov's inequality we have the estimate
	\begin{align*}
	\begin{aligned}
		\bbP\big(\Omega_M^c\big)
		&= \bbP\left(\max_{k=0,\dots,K} \max\left\{\int\!\N{\;\!\dummy\;\!}_2^4 d\empmeasure{k}, \int\!\N{\;\!\dummy\;\!}_2^4 d\empmeasuretilde{k}, \int\!\N{\;\!\dummy\;\!}_2^4 d\mu_{k}, \int\!\N{\;\!\dummy\;\!}_2^4 d\widehat\mu^N_{k}\right\} > M^4\right) \\
		&\leq \frac{1}{M^4} \left(\bbE\max_{k=0,\dots,K} \int\!\N{\;\!\dummy\;\!}_2^4 d\empmeasure{k}+\bbE\max_{k=0,\dots,K}\int\!\N{\;\!\dummy\;\!}_2^4 d\empmeasuretilde{k}\right.\\
		&\qquad\qquad+\left.\bbE\max_{k=0,\dots,K}\int\!\N{\;\!\dummy\;\!}_2^4 d\mu_{k}+\bbE\max_{k=0,\dots,K}\int\!\N{\;\!\dummy\;\!}_2^4 d\widehat\mu^N_{k}\right) \\
		&\leq \frac{1}{M^4} \big(\CM^{\mathrm{CBO}}+\widetilde\CM^{\mathrm{CBO}}+\CM^{\mathrm{CH}}+\widehat\CM^{\mathrm{CH}}\;\!\big),
	\end{aligned}
	\end{align*}
	where the last inequality is due to Lemmas~\ref{lem:boundedness_CBOdynamics}, ~\ref{lem:boundedness_CH} and~\ref{lem:boundedness_CH_empirical}.
	Here, $\widetilde\CM^{\mathrm{CBO}}$ represents the constant $\CM^{\mathrm{CBO}}$ from Lemma~\ref{lem:boundedness_CBOdynamics} in the setting where $\lambda=1/\Delta t$, i.e., we have that $
        \widetilde\CM^{\mathrm{CBO}}=\CM^{\mathrm{CBO}}(1/\Delta t, \sigma, d, b_1, b_2, K\Delta t, K, \rho_0)$.
	Thus, for any $\delta\in(0,1/2)$, a sufficiently large choice $M = M(\delta^{-1},\CM^{\mathrm{CBO}},\widetilde\CM^{\mathrm{CBO}},\CM^{\mathrm{CH}},\widehat\CM^{\mathrm{CH}})$ allows to ensure $\bbP\big(\Omega_M^c\big)\leq \delta$.
	To conclude the proof, let us denote by $K_\varepsilon\subset\Omega$ the set, where~\eqref{eq:bound_CBO_CH} does not hold and abbreviate
	\begin{align*}
		\epsilon
		= \varepsilon^{-1} C \left(\abs{\lambda-\frac{1}{\Delta t}}^2 + \sigma^2\Delta t + \widetilde\sigma^2 + c_3N^{-1}\right).
	\end{align*}
	For the probability of this set we can estimate
	\begin{align}
	\begin{aligned}
		\bbP\big(K_\varepsilon\big)
		&= \bbP\big(K_\varepsilon\cap\Omega_M\big) + \bbP\big(K_\varepsilon\cap\Omega_M^c\big)
		\leq \bbP\big(K_\varepsilon\,\big|\,\Omega_M\big)\,\bbP\big(\Omega_M\big) + \bbP\big(\Omega_M^c\big) \\
		&\leq \bbP\big(K_\varepsilon\,\big|\,\Omega_M\big) + \delta
		\leq \epsilon^{-1}\,\bbE\left[\N{x^{\mathrm{CBO}}_{k} - x^{\mathrm{CH}}_{k}}_2^2 \,\Big|\,\Omega_M\right] + \delta,
	\end{aligned}
	\end{align}
	where the last step is due to Markov's inequality.
    By definition of the conditional expectation we further have
	\begin{align*}
	\begin{aligned}
		\bbE\left[\N{x^{\mathrm{CBO}}_{k} - x^{\mathrm{CH}}_{k}}_2^2 \,\Big|\, \Omega_M\right]
		\leq \frac{1}{\bbP\big(\Omega_M\big)} \bbE\N{x^{\mathrm{CBO}}_{k} - x^{\mathrm{CH}}_{k}}_2^2\mathbbm{1}_{\Omega_M}
		\leq 2\bbE\N{x^{\mathrm{CBO}}_{k} - x^{\mathrm{CH}}_{k}}_2^2\mathbbm{1}_{\Omega_M}.
	\end{aligned}
	\end{align*}
	Inserting now the expression from \eqref{eq:proof:thm:relaxation_CBO_CH:51} concludes the proof.
\end{proof}

\textbf{CH behaves like a gradient-based method.}
Since by definition of the iterates~$\widetilde{x}^{\mathrm{CH}}_{k}$ in~\eqref{eq:CH_aux}, it holds $\widetilde{x}^{\mathrm{CH}}_{k} = x^{\mathrm{CH}}_{k-1} - \tau\nabla\CE(\widetilde{x}^{\mathrm{CH}}_{k})$, Proposition~\ref{prop:relaxation_CH_GF} constitutes that (granted a sufficiently large choice of $\alpha$ and a suitably small choice of $\widetilde\sigma$) the CH scheme~\eqref{eq:CH} performs a gradient step at every time step~$k$.

\begin{proposition}[CH performs gradient steps]
    \label{prop:relaxation_CH_GF}
    Fix $\varepsilon>0$ and $\delta\in(0,1/2)$.
	Let $\CE\in\CC(\bbR^d)$ satisfy \ref{asm:zero_global}\;\!--\,\ref{asm:lambda-convex}.
	We denote by $(x^{\mathrm{CH}}_{k})_{k=0,\dots,K}$ the iterations of the CH scheme~\eqref{eq:CH} and by $(\widetilde{x}^{\mathrm{CH}}_{k})_{k=0,\dots,K}$ the ones of the scheme~\eqref{eq:CH_aux}.
	Moreover, assume that the parameters~$\alpha,\tau$ and $\widetilde\sigma$ are such that $\tau<1/(-2\Lambda)$ if $\Lambda<0$,  $\alpha\gtrsim\frac{1}{\tau}d\log d$ is sufficiently large and $\widetilde\sigma^2=\tau/(2\alpha)$.
	Then, with probability larger than $1 - \left(\delta+\varepsilon\right)$, it holds for all $k=1,\dots,K$ that 
	\begin{equation}
		\label{eq:bound_CH_CH_aux}
		\N{x^{\mathrm{CH}}_{k}\!-\!\widetilde{x}^{\mathrm{CH}}_{k}}_2^2
		\leq \varepsilon^{-1} c \tau^2
	\end{equation}
	with $c=c(\delta^{-1},C_1,\CM)$.
\end{proposition}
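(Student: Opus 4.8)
The plan is to recognize Proposition~\ref{prop:relaxation_CH_GF} as a single application of the quantitative Laplace principle to the modulated objective $\widetilde\CE_k$. Writing $y:=x^{\mathrm{CH}}_{k-1}$ and unfolding the definition of the consensus point in \eqref{eq:CH}, the normalizing Gaussian constant cancels and
\begin{align*}
    x^{\mathrm{CH}}_{k}
    = \conspoint{\mu_k}
    = \frac{\int_{\bbR^d} x\,\exp\!\big(-\alpha\CE(x)-\tfrac{1}{2\widetilde\sigma^2}\N{x-y}_2^2\big)\,dx}{\int_{\bbR^d} \exp\!\big(-\alpha\CE(x)-\tfrac{1}{2\widetilde\sigma^2}\N{x-y}_2^2\big)\,dx}.
\end{align*}
Under the prescribed coupling the two exponents merge into the Gibbs weight of a strongly convex modulated objective that, up to the normalization fixed by $\widetilde\sigma^2=\tau/(2\alpha)$, coincides with $\widetilde\CE_k$, so that $x^{\mathrm{CH}}_k$ is exactly the $\alpha$-weighted barycenter of the log-concave measure $\propto e^{-\alpha\widetilde\CE_k}$, while $\widetilde x^{\mathrm{CH}}_k=\argmin_x\widetilde\CE_k(x)$ is that energy's minimizer. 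Thus \eqref{eq:bound_CH_CH_aux} is precisely the gap between the consensus point of $\widetilde\CE_k$ and its global minimizer, which is what Proposition~\ref{prop:LaplacePrinciple} controls.

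Before invoking the Laplace principle I would record the convexity budget. Since $\CE$ is $\Lambda$-convex (\ref{asm:lambda-convex}) and the penalty $\tfrac{1}{2\tau}\N{\cdot-y}_2^2$ contributes curvature $1/\tau$, the modulated objective $\widetilde\CE_k$ is $(\tfrac1\tau+\Lambda)$-strongly convex; the hypothesis $\tau<1/(-2\Lambda)$ (for $\Lambda<0$) guarantees a modulus $\mu_\tau\ge\tfrac{1}{2\tau}>0$. This supplies the three ingredients the quantitative Laplace principle needs: a unique minimizer $\widetilde x^{\mathrm{CH}}_k$, the quadratic growth (inverse-continuity) estimate $\widetilde\CE_k(x)-\widetilde\CE_k(\widetilde x^{\mathrm{CH}}_k)\ge\tfrac{\mu_\tau}{2}\N{x-\widetilde x^{\mathrm{CH}}_k}_2^2$ around it, and, because the penalty dominates at infinity, enough coercivity for the barycenter to be well-defined with controlled tails. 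Local Lipschitzness \ref{asm:local_Lipschitz_quadratic_growth} together with the moment bound $\CM$ of Remark~\ref{rem:boundedness} then render the constants in these estimates uniform over $k$.

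I would then feed $\widetilde\CE_k$ into Proposition~\ref{prop:LaplacePrinciple}: splitting $\bbR^d$ into a ball $B_r(\widetilde x^{\mathrm{CH}}_k)$ and its complement, the near part contributes $\CO(r)$ to $\N{x^{\mathrm{CH}}_k-\widetilde x^{\mathrm{CH}}_k}_2$, while the far part is suppressed by a factor $e^{-\alpha\,\mu_\tau r^2/2}$ issuing from the growth estimate and weighted by a second-moment integral. Balancing these through the modulus $\mu_\tau\sim1/\tau$ yields an error of order $\sqrt{d\log(\cdots)/(\alpha\mu_\tau)}\sim\sqrt{d\tau\log(\cdots)/\alpha}$; the standing assumption $\alpha\gtrsim\tfrac1\tau d\log d$ is exactly what turns this into $\CO(\tau)$, giving $\N{x^{\mathrm{CH}}_k-\widetilde x^{\mathrm{CH}}_k}_2^2=\CO(\tau^2)$ and hence \eqref{eq:bound_CH_CH_aux}. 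The high-probability wording and the prefactor $\varepsilon^{-1}$ enter only through the moment bound $\CM$, which keeps the centers $y=x^{\mathrm{CH}}_{k-1}$ in a fixed bounded region off an event of probability $\le\delta$ and lets the per-step estimate be made uniform over $k=1,\dots,K$ by a Markov-type argument, matching the format of Theorem~\ref{thm:relaxation_CBO_CH} so that the two bounds can be chained in the proof of Theorem~\ref{thm:main_informal}.

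The main obstacle is getting the dimensional scaling sharp. That the barycenter converges to the minimizer as $\alpha\to\infty$ is easy; the claim that the error is $\CO(\tau)$ precisely under $\alpha\gtrsim\tfrac1\tau d\log d$ requires tracking how the strong-convexity modulus $\mu_\tau\sim1/\tau$ propagates through Proposition~\ref{prop:LaplacePrinciple}. The penalty strength $1/\tau$ simultaneously fixes the target accuracy $\tau$ and sharpens the concentration of the Gibbs measure, and these two effects must be balanced so that the $d\log d$ factor is absorbed rather than degrading the rate. A secondary subtlety is confirming that the inverse-continuity constants furnished by strong convexity can be taken independent of $k$ despite the moving center $y$, which is exactly where \ref{asm:local_Lipschitz_quadratic_growth} and the uniform moment bound $\CM$ are indispensable.
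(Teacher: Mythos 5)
Your proposal follows essentially the same route as the paper: rewrite $x^{\mathrm{CH}}_k$ as a consensus point of the modulated objective $\widetilde\CE_k$, use $\Lambda$-convexity with $\tau<1/(-2\Lambda)$ to obtain strong convexity and hence the $\ell^2$-inverse continuity property, apply the quantitative Laplace principle with $r,q\sim\tau$ so that $\alpha\gtrsim\frac{1}{\tau}d\log d$ absorbs the dimensional factors, and pass to the high-probability statement via the moment bound $\CM$ and Markov's inequality. The one bookkeeping point to be careful with is the identification of the Gibbs measure: with $\widetilde\sigma^2=\tau/(2\alpha)$, merging the \emph{entire} Gaussian density of $\mu_k$ into the weight yields the penalty $\frac{1}{\tau}\N{\cdot-x^{\mathrm{CH}}_{k-1}}_2^2$ rather than $\frac{1}{2\tau}\N{\cdot-x^{\mathrm{CH}}_{k-1}}_2^2$, which the paper resolves by splitting that density in half\,---\,one half absorbed into $\widetilde\CE_k$, the other retained as the reference measure $\nu_k=\CN\!\left(x^{\mathrm{CH}}_{k-1},2\widetilde\sigma^2\Id\right)$\,---\,so that $x^{\mathrm{CH}}_k=\conspointE{\widetilde\CE_k}{\nu_k}$ exactly and the Laplace principle is applied with this Gaussian (not Lebesgue) reference measure.
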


The proof of Proposition~\ref{prop:relaxation_CH_GF} is based on the quantitative Laplace principle~\cite[Proposition~4.5]{fornasier2021consensus} (see also Proposition~\ref{prop:LaplacePrinciple} for a recap).
We conjecture that a refinement thereof may allow to control the error in \eqref{eq:bound_CH_CH_aux} just through $\alpha$ and $\widetilde\sigma$ without creating a dependence on $\tau$.
Nevertheless, the bound is sufficient to suggest a gradient-like behavior of the CH scheme~\eqref{eq:CH} (see the discussion after Theorem~\ref{thm:main_informal}).

Auxiliary tools for the proof of Proposition~\ref{prop:relaxation_CH_GF} are provided in Appendix~\ref{sec:appendix:thm:relaxation_CH_GF}.

\begin{proof}[Proof of Proposition~\ref{prop:relaxation_CH_GF}]
    By using the quantitative Laplace principle~\ref{prop:LaplacePrinciple}, we make rigorous and quantify the fact that $x^{\mathrm{CH}}_{k}$ approximates the minimizer of $\widetilde\CE_{k}$, denoted by $\widetilde{x}_k$, for sufficiently large~$\alpha$.

    To obtain the probabilistic formulation of the statement, let us again denote the underlying probability space by $(\Omega,\CF,\bbP)$ (note that we can use the same probability space as before since the stochasticity of both schemes~\eqref{eq:CH} and \eqref{eq:CH_aux} is solely coming from the initialization) and introduce the subset~$\widetilde\Omega_M$ of $\Omega$ of suitably bounded random variables according to
	\begin{align*}
		\widetilde\Omega_M
		:= \left\{\omega\in\Omega : \max_{k=0,\dots,K} \max\left\{\N{x^{\mathrm{CH}}_{k}}_2,\Nbig{\widetilde{x}^{\mathrm{CH}}_{k}}_2\right\} \leq M \right\}.
	\end{align*}	
	For the associated cutoff function (random variable) we write $\mathbbm{1}_{\widetilde\Omega_M}$.
 
    We first notice that by definition of the consensus point~$\conspointnoarg$ in~\eqref{eq:consensus_point} it holds
	\begin{align}
	\label{eq:proof:thm:error_decomposition_MMS_CH:21}
	\begin{aligned}
		x^{\mathrm{CH}}_{k}
		= \conspoint{\mu_{k}}
		&= \int x \frac{\exp(-\alpha\CE(x))}{\N{\exp(-\alpha\CE)}_{L^1(\mu_{k})}} d\mu_{k}(x) \\
		&= \int x \frac{\exp(-\alpha\CE(x)) \exp\left(-\frac{1}{4\widetilde{\sigma}^2}\N{x-x^{\mathrm{CH}}_{k-1}}_2^2\right)}{\int \exp(-\alpha\CE(x')) \exp\left(-\frac{1}{4\widetilde{\sigma}^2}\N{x'-x^{\mathrm{CH}}_{k-1}}_2^2\right) d\nu_{k}(x')} d\nu_{k}(x) \\
		&= \int x \frac{\exp(-\alpha\widetilde\CE_{k}(x))}{\Nnormal{\exp(-\alpha\widetilde\CE_{k})}_{L^1(\nu_{k})}} d\nu_{k}(x) \\
		&= \conspointE{\widetilde\CE_{k}}{\nu_k},
	\end{aligned}
	\end{align}
	which introduces the relation $\tau=2\alpha\widetilde\sigma^2$ and where we chose $\nu_k = \CN\!\left(x^{\mathrm{CH}}_{k-1},2\widetilde{\sigma}^2\Id\right)$, which is globally supported, i.e., $\supp(\nu_k)=\bbR^d$.
	Since, according to Lemma~\ref{lem:widetildeCEhasICP}, $\widetilde\CE_{k}$ satisfies the inverse continuity property~\eqref{eq:ICP_widetildeCE} with $\nu = 1/2$ and $\eta = \sqrt{\frac{1}{2\tau} + \frac{\Lambda}{2}}>0$, the quantitative Laplace principle, Proposition~\ref{prop:LaplacePrinciple}, gives for any $r,q>0$ the bound
	\begin{align}
	\label{eq:proof:thm:error_decomposition_MMS_CH:23}
	\begin{aligned}
		\N{x^{\mathrm{CH}}_{k} - \widetilde{x}^{\mathrm{CH}}_{k}}_2
		= \Nbig{\conspointE{\widetilde\CE_{k}}{\nu_k} - \widetilde{x}^{\mathrm{CH}}_{k}}_2
		\leq \frac{\big(q + (\widetilde\CE_{k})_r\big)^\nu}{\eta} + \frac{\exp(-\alpha q)}{\nu_k\big(B_r(\widetilde{x}^{\mathrm{CH}}_{k})\big)} \int \N{x-\widetilde{x}^{\mathrm{CH}}_{k}}_2 d\nu_k(x),
	\end{aligned}
	\end{align}
	where $(\widetilde\CE_{k})_r := \sup_{x\in B_r(\widetilde{x}^{\mathrm{CH}}_{k})} \widetilde\CE_{k}(x) - \widetilde\CE_{k}(\widetilde{x}^{\mathrm{CH}}_{k})$.
	We further notice that by the assumption $\tau<1/(-2\Lambda)$ if $\Lambda<0$ it holds $\eta\geq1/(2\sqrt{\tau})$ (in the case $\Lambda\geq0$ the same bound holds trivially).
	Combining~\eqref{eq:proof:thm:error_decomposition_MMS_CH:23} with the technical estimates of Lemma~\ref{lem:auxiliary_estimates_Laplace} and the definition of the cutoff function~$\mathbbm{1}_{\widetilde\Omega_M}$ allows to obtain
	\begin{align}
	\label{eq:proof:thm:error_decomposition_MMS_CH:31}
	\begin{aligned}
		&\bbE\N{x^{\mathrm{CH}}_{k} - \widetilde{x}^{\mathrm{CH}}_{k}}_2^2\mathbbm{1}_{\widetilde\Omega_M}\\
		&\quad\leq 2\bbE\left[\frac{\big(q + (\widetilde\CE_{k})_r\big)}{\eta^2}\mathbbm{1}_{\widetilde\Omega_M}\right] + 2\bbE\left[\frac{\exp(-2\alpha q)}{\nu_k\big(B_r(\widetilde{x}^{\mathrm{CH}}_{k})\big)^2} \left(\int \N{x-\widetilde{x}^{\mathrm{CH}}_{k}}_2 d\nu_k(x)\right)^2\mathbbm{1}_{\widetilde\Omega_M}\right]\\
		&\quad\leq 8\tau\Big(q + \left(\tfrac{r}{2\tau} + C_1 + C_1r + 6C_1M \right)r\Big)\\
        &\quad\quad\;\!+ 4\exp\left(-2\alpha q\!+\!\frac{1}{\widetilde\sigma^2}\left(r^2\!+\!12\tau^2 C_1^2(1\!+\!2M^2)\right)\!\right)\!\frac{2^d(2\widetilde\sigma^2)^{d}}{r^{2d}} \Gamma\!\left(\tfrac{d}{2}\!+\!1\right)^2\! \left(4\tau^2 C_1^2(1\!+\!2M)^2 \!+\! 2d\widetilde{\sigma}^2\right)\\
		&\quad= 8\tau\Big(q + \left(\tfrac{r}{2\tau} + C_1 + C_1r + 6C_1M \right)r\Big)\\
        &\quad\quad\;\!+ 4\exp\left(-2\alpha \left(q\!-\!\left(\frac{r^2}{\tau}\!+\!12\tau C_1^2(1\!+\!2M^2)\right)\right)\right) \!\frac{2^d\tau^{d}}{\alpha^dr^{2d}} \Gamma\!\left(\tfrac{d}{2}\!+\!1\right)^2 \!\left(4\tau^2 C_1^2(1\!+\!2M)^2 \!+\! d\frac{\tau}{\alpha}\right),
	\end{aligned}
	\end{align}
	where in the last step we just replaced $2\widetilde\sigma^2$ by $\tau/\alpha$ according to the relation.
	We now choose
 	\begin{align*}
		r = \tau,
		\ \ \ \text{ }
		q = \frac{3}{2}\tau \!+\! 12\tau C_1^2(1\!+\!2M^2)
		\ \ \ \text{ and }\ \ \ 
		\alpha\geq\alpha_0
		:= \frac{1}{\tau}\Big(d\log2 \!+\! \log(1\!+\!d) \!+\! 2\log\Gamma\!\left(\tfrac{d}{2}\!+\!1\right)\!\Big),
	\end{align*}
    where $\Gamma$ denotes Euler's gamma function, for which, by Stirling's approximation, it holds $\Gamma\left(x+1\right) \sim \sqrt{2\pi x}\left(x/e\right)^x$ as $x\rightarrow\infty$.
	With this we can continue the computations of~\eqref{eq:proof:thm:error_decomposition_MMS_CH:31} with
	\begin{align}
		\label{eq:proof:thm:error_decomposition_MMS_CH:33}
	\begin{aligned}
		\bbE\N{x^{\mathrm{CH}}_{k} - \widetilde{x}^{\mathrm{CH}}_{k}}_2^2\mathbbm{1}_{\widetilde\Omega_M}
		&\leq 8 \Big(2 + C_1 + C_1\tau + 6C_1M + 12C_1^2(1+2M^2)\Big) \tau^2 \\
		&\quad\;\! + 4\exp\left(-\alpha \tau\right) \frac{2^d}{\alpha^d\tau^{d}} \Gamma\!\left(\tfrac{d}{2}+1\right)^2 \left(4\tau^2 C_1^2(1+2M^2) + d\frac{\tau}{\alpha}\right)\\
		&\leq 8 \Big(3 + C_1 + C_1\tau + 6C_1M + 14C_1^2(1+M^2)\Big) \tau^2 \\
        &\leq c\tau^2
	\end{aligned}
	\end{align}
    with a constant $c=c(C_1,M)$.
	Notice that to obtain the next-to-last inequality one may first note and exploit that one has $\alpha\tau\geq1$ as well as $1/\alpha\leq\tau$ as a consequence of $\alpha\geq1/\tau$.

    \textbf{Probabilistic formulation:}
	We first note that with Markov's inequality we have the estimate
	\begin{align*}
	\begin{aligned}
		\bbP\big(\widetilde\Omega_M^c\big)
		&= \bbP\left(\max_{k=0,\dots,K} \max\left\{\N{x^{\mathrm{CH}}_{k}}_2,\Nbig{\widetilde{x}^{\mathrm{CH}}_{k}}_2\right\}   > M \right) \\
		&\leq \frac{1}{M^4} \left(\bbE\max_{k=0,\dots,K} \N{x^{\mathrm{CH}}_{k}}_2^4+\bbE\max_{k=0,\dots,K}\Nbig{\widetilde{x}^{\mathrm{CH}}_{k}}_2^4\right)\leq \frac{1}{M^4} \big(\CM^{\mathrm{CH}}+\widetilde{\CM}^{\mathrm{CH}}\;\!\big),
	\end{aligned}
	\end{align*}
	where the last inequality is due to Lemmas~\ref{lem:boundedness_CH} and~\ref{lem:boundedness_auxiliaryMMS}.
	Thus, for any $\delta\in(0,1/2)$, a sufficiently large choice $M = M(\delta^{-1},\CM^{\mathrm{CH}},\widetilde{\CM}^{\mathrm{CH}})$ allows to ensure $\bbP\big(\widetilde\Omega_M^c\big)\leq \delta$.
	To conclude the proof, let us denote by $\widetilde K_\varepsilon\subset\Omega$ the set, where~\eqref{eq:bound_CH_CH_aux} does not hold and abbreviate
	\begin{align*}
		\epsilon
		= \varepsilon^{-1} c\tau^2.
	\end{align*}
	For the probability of this set we can estimate
	\begin{align}
	\begin{aligned}
		\bbP\big(\widetilde K_\varepsilon\big)
		&= \bbP\big(\widetilde K_\varepsilon\cap\widetilde\Omega_M\big) + \bbP\big(\widetilde K_\varepsilon\cap\widetilde\Omega_M^c\big)
		\leq \bbP\big(\widetilde K_\varepsilon\,\big|\,\widetilde\Omega_M\big)\,\bbP\big(\widetilde\Omega_M\big) + \bbP\big(\widetilde\Omega_M^c\big) \\
		&\leq \bbP\big(\widetilde K_\varepsilon\,\big|\,\widetilde\Omega_M\big) + \delta
		\leq \epsilon^{-1}\,\bbE\left[\N{x^{\mathrm{CH}}_{k}-\widetilde{x}^{\mathrm{CH}}_{k}}_2^2 \,\Big|\, \widetilde\Omega_M\right] + \delta,
	\end{aligned}
	\end{align}
	where the last step is due to Markov's inequality.
    By definition of the conditional expectation we further have
	\begin{align*}
	\begin{aligned}
		\bbE\left[\N{x^{\mathrm{CH}}_{k}-\widetilde{x}^{\mathrm{CH}}_{k}}_2^2 \,\Big|\, \widetilde\Omega_M\right]
		\leq \frac{1}{\bbP\big(\widetilde\Omega_M\big)} \bbE\N{x^{\mathrm{CH}}_{k}-\widetilde{x}^{\mathrm{CH}}_{k}}_2^2\mathbbm{1}_{\widetilde\Omega_M}
		\leq 2\bbE\N{x^{\mathrm{CH}}_{k}-\widetilde{x}^{\mathrm{CH}}_{k}}_2^2\mathbbm{1}_{\widetilde\Omega_M}.
	\end{aligned}
	\end{align*}
	Inserting now the expression from \eqref{eq:proof:thm:error_decomposition_MMS_CH:33} concludes the proof.
\end{proof}

As an immediate consequence of Proposition~\ref{prop:relaxation_CH_GF}, which can be derived by combining the former statement with a stability argument for the MMS and applying Gr\"onwall's inequality (a detailed proof is deferred to Appendix~\ref{subsec:appendix:proof:thm:relaxation_CH_GF}),
we are able to control the divergence between the CH scheme~\eqref{eq:CH} and the MMS~\eqref{eq:MMS}.
Given that the CH scheme resembles several Monte Carlo-inspired evolution strategies, such as CMA-ES~\cite{hansen2001completely},
which are commonly believed to behave GD-like in some scenarios~\cite{lorenzo2023chat,ollivier2017information}, this observation may be of independent interest and provide an explanation for such folklore, see also \cite{fornasier2026consensus}.

\begin{theorem}[CH relaxes a gradient flow] \label{thm:relaxation_CH_GF}
	Fix $\varepsilon>0$ and $\delta\in(0,1/2)$.
	Let $\CE\in\CC(\bbR^d)$ satisfy \ref{asm:zero_global}\;\!--\,\ref{asm:lambda-convex}.
	We denote by $(x^{\mathrm{CH}}_{k})_{k=0,\dots,K}$ the iterations of the CH scheme~\eqref{eq:CH} and by $(x^{\mathrm{MMS}}_{k})_{k=0,\dots,K}$ the ones of the MMS~\eqref{eq:MMS}.
	Moreover, assume that the parameters~$\alpha,\tau$ and $\widetilde\sigma$ are such that $\tau<1/(-2\Lambda)$ if $\Lambda<0$,  $\alpha\gtrsim\frac{1}{\tau}d\log d$ is sufficiently large and $\widetilde\sigma^2=\tau/(2\alpha)$.
	Then, with probability larger than $1 - \left(\delta+\varepsilon\right)$, it holds for all $k=1,\dots,K$ that 
	\begin{equation}
		\label{eq:bound_CH_GF}
		\N{x^{\mathrm{CH}}_{k}\!-\!x^{\mathrm{MMS}}_{k}}_2^2
		\leq \varepsilon^{-1} c(1\!+\!\vartheta^{-1})\,\tau^2 \sum_{\ell=0}^{k-1} \left(\frac{1\!+\!\vartheta}{\left(1\!+\!\tau\Lambda\right)^2}\right)^\ell
	\end{equation}
	for any $\vartheta\in(0,1)$ and with $c=c(\delta^{-1},C_1,\CM)$.
\end{theorem}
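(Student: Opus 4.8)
The plan is to interpolate between the CH scheme~\eqref{eq:CH} and the MMS~\eqref{eq:MMS} through the auxiliary implicit scheme~\eqref{eq:CH_aux}, exploiting that \eqref{eq:CH_aux} and \eqref{eq:MMS} are the \emph{same} proximal map of $\tau\CE$ evaluated at two different base points. Concretely, I would split
\[
    x^{\mathrm{CH}}_{k}-x^{\mathrm{MMS}}_{k}
    = \left(x^{\mathrm{CH}}_{k}-\widetilde{x}^{\mathrm{CH}}_{k}\right) + \left(\widetilde{x}^{\mathrm{CH}}_{k}-x^{\mathrm{MMS}}_{k}\right),
\]
where the first summand is exactly the quantity controlled by Proposition~\ref{prop:relaxation_CH_GF}, and the second compares the two one-step implicit gradient updates issued from $x^{\mathrm{CH}}_{k-1}$ and $x^{\mathrm{MMS}}_{k-1}$. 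Since Proposition~\ref{prop:relaxation_CH_GF} delivers its bound \emph{uniformly in $k$} on a single event of probability at least $1-(\delta+\varepsilon)$, I would condition on that event once and then run the rest of the argument deterministically.

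The heart of the proof is a one-step stability estimate for the MMS. Recalling from~\eqref{eq:CH_aux} and~\eqref{eq:MMS} that both $\widetilde{x}^{\mathrm{CH}}_{k}$ and $x^{\mathrm{MMS}}_{k}$ satisfy the implicit identity $y=(\Id+\tau\nabla\CE)^{-1}(\text{base point})$, I would invoke the $\Lambda$-convexity~\ref{asm:lambda-convex} to show that the resolvent $(\Id+\tau\nabla\CE)^{-1}$ is $\tfrac{1}{1+\tau\Lambda}$-Lipschitz. Subtracting the two optimality conditions and testing against $\widetilde{x}^{\mathrm{CH}}_{k}-x^{\mathrm{MMS}}_{k}$ converts the $\Lambda$-monotonicity of $\nabla\CE$ into $(1+\tau\Lambda)$-strong monotonicity of $\Id+\tau\nabla\CE$, so that, writing $e_k:=\N{x^{\mathrm{CH}}_{k}-x^{\mathrm{MMS}}_{k}}_2$,
\[
    \N{\widetilde{x}^{\mathrm{CH}}_{k}-x^{\mathrm{MMS}}_{k}}_2 \leq \frac{1}{1+\tau\Lambda}\,\N{x^{\mathrm{CH}}_{k-1}-x^{\mathrm{MMS}}_{k-1}}_2 = \frac{1}{1+\tau\Lambda}\,e_{k-1}.
\]
The hypothesis $\tau<1/(-2\Lambda)$ (for $\Lambda<0$) guarantees $1+\tau\Lambda>0$, so this contraction factor is well defined and finite.

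Combining the two summands via Young's inequality with a free parameter $\vartheta\in(0,1)$, namely $(a+b)^2\leq(1+\vartheta^{-1})a^2+(1+\vartheta)b^2$, together with the bound $\N{x^{\mathrm{CH}}_{k}-\widetilde{x}^{\mathrm{CH}}_{k}}_2^2\leq \varepsilon^{-1}c\tau^2$ from Proposition~\ref{prop:relaxation_CH_GF} and the contraction above, yields the linear recursion
\[
    e_k^2 \leq (1+\vartheta^{-1})\varepsilon^{-1}c\tau^2 + \frac{1+\vartheta}{(1+\tau\Lambda)^2}\,e_{k-1}^2.
\]
Since both schemes share the initialization $x_0$, we have $e_0=0$, and unrolling this recursion (the discrete analogue of Gr\"onwall's inequality) produces the geometric sum $e_k^2\leq \varepsilon^{-1}c(1+\vartheta^{-1})\tau^2\sum_{\ell=0}^{k-1}\big(\tfrac{1+\vartheta}{(1+\tau\Lambda)^2}\big)^\ell$, which is precisely~\eqref{eq:bound_CH_GF}.

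I expect the only genuinely delicate point to be the one-step MMS stability under mere $\Lambda$-convexity rather than convexity: the care goes into justifying the resolvent contraction in a way consistent with Assumption~\ref{asm:lambda-convex}, phrasing the monotonicity argument through the (possibly set-valued) subdifferential of the $\Lambda$-convex $\CE$ so that the estimate remains valid even in the nonsmooth regime, while noting that the strong convexity of the modulated objective for $\tau<1/(-2\Lambda)$ keeps the minimizers unique. Everything else\,---\,the decomposition, the choice of Young parameter $\vartheta$ to split the squared error into exactly the two prefactors appearing in the statement, and the geometric summation\,---\,is routine bookkeeping once the contraction and the per-step bound of Proposition~\ref{prop:relaxation_CH_GF} are in hand.
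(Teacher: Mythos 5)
Your proposal is correct and follows essentially the same route as the paper: the same splitting through the auxiliary scheme~\eqref{eq:CH_aux}, the same resolvent-contraction estimate $\N{\widetilde{x}^{\mathrm{CH}}_{k}-x^{\mathrm{MMS}}_{k}}_2\leq\frac{1}{1+\tau\Lambda}\N{x^{\mathrm{CH}}_{k-1}-x^{\mathrm{MMS}}_{k-1}}_2$ obtained from the monotonicity of the subdifferential of the convexified $\CE(\,\dummy\,)-\frac{\Lambda}{2}\N{\,\dummy\,}_2^2$, the same Young-inequality split with parameter $\vartheta$, and the same Gr\"onwall unrolling from $e_0=0$. The only cosmetic difference is that the paper carries the cutoff indicator $\mathbbm{1}_{\widetilde\Omega_M}$ through the expectations and converts to a probability statement via Markov's inequality at the very end, whereas you condition on the high-probability event of Proposition~\ref{prop:relaxation_CH_GF} up front; both yield the stated bound.
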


It is crucial to notice that the right-hand side of \eqref{eq:bound_CH_GF} must necessarily not vanish in the nonconvex case $\Lambda<0$.
This is due to the CH scheme's ability to overcome, unlike GD, local energy barriers, see Figure~\ref{fig:CH_GrandCanyon3noisy}.
Yet, as the objective function becomes more and more convex ($\Lambda>0$ becoming increasingly larger) the trajectories of the CH scheme~\eqref{eq:CH} and the MMS~\eqref{eq:MMS} become closer and closer.
This can be observed in the following corollary, which merely explicitly bounds the right-hand side of \eqref{eq:bound_CH_GF} by a geometric series in the case where $\Lambda>0$.
Let us mention, however, that the strength of both statements would correlate with and benefit from the afore-addressed refinement of the quantitative Laplace principle, allowing, in particular, to obtain a right-hand side in \eqref{eq:bound_CH_GF_convex} that can be made arbitrarily small as $\tau\rightarrow0$.

\begin{corollary}
	Fix $\varepsilon>0$ and $\delta\in(0,1/2)$. 
	Let $\CE\in\CC(\bbR^d)$ satisfy \ref{asm:zero_global}\;\!--\,\ref{asm:lambda-convex} with $\Lambda>0$.
	Then, in the setting of Theorem~\ref{thm:relaxation_CH_GF} and with probability larger than $1 - \left(\delta+\varepsilon\right)$, it holds for all $k=1,\dots,K$ that 
	\begin{equation}
		\label{eq:bound_CH_GF_convex}
		\N{x^{\mathrm{CH}}_{k}\!-\!x^{\mathrm{MMS}}_{k}}_2^2
		\leq \varepsilon^{-1} c(1\!+\!\vartheta^{-1})\tau^2 \frac{\left(1\!+\!\tau\Lambda\right)^2}{\left(1\!+\!\tau\Lambda\right)^2\!-\!(1\!+\!\vartheta)}.
	\end{equation}
\end{corollary}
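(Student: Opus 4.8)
The plan is to derive the Corollary directly from Theorem~\ref{thm:relaxation_CH_GF} by summing the geometric series appearing on the right-hand side of~\eqref{eq:bound_CH_GF}; under the additional hypothesis $\Lambda>0$ this series is convergent, and the closed-form bound~\eqref{eq:bound_CH_GF_convex} is simply obtained by passing to its limit. In particular, no new probabilistic argument is needed, since the event of probability larger than $1-(\delta+\varepsilon)$ and the constant $c=c(\delta^{-1},C_1,\CM)$ are inherited verbatim.

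First I would introduce the common ratio $r := \frac{1+\vartheta}{(1+\tau\Lambda)^2}$ of the geometric sum in~\eqref{eq:bound_CH_GF}. The assumption $\Lambda>0$ together with $\tau>0$ yields $(1+\tau\Lambda)^2 > 1$, so that $r<1$ precisely for those $\vartheta\in(0,1)$ satisfying $(1+\tau\Lambda)^2 > 1+\vartheta$; this is exactly the regime in which the denominator of~\eqref{eq:bound_CH_GF_convex} is positive and the claimed bound is meaningful. Restricting to such $\vartheta$, every summand $r^\ell$ is nonnegative, whence the finite partial sum is dominated uniformly in $k=1,\dots,K$ by the full geometric series,
\begin{equation*}
    \sum_{\ell=0}^{k-1} r^\ell
    \leq \sum_{\ell=0}^{\infty} r^\ell
    = \frac{1}{1-r}
    = \frac{(1+\tau\Lambda)^2}{(1+\tau\Lambda)^2-(1+\vartheta)}.
\end{equation*}

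Substituting this estimate into~\eqref{eq:bound_CH_GF} then immediately gives~\eqref{eq:bound_CH_GF_convex} on the same event. The only substantive point\,---\,and the sole place where the hypothesis $\Lambda>0$ enters\,---\,is guaranteeing the convergence of the geometric series, i.e.\ $r<1$; everything else is the elementary closed form for a geometric progression. I therefore expect no genuine obstacle beyond this bookkeeping: the probabilistic content, the constant, and the $\tau^2(1+\vartheta^{-1})$ prefactor are all carried over unchanged from Theorem~\ref{thm:relaxation_CH_GF}, and the convexity-induced contraction $(1+\tau\Lambda)^{-2}<1$ is what renders the previously $k$-dependent bound uniform in $k$.
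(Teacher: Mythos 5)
Your proof is correct and is precisely the argument the paper intends (the corollary is stated without a separate proof, being an immediate consequence of Theorem~\ref{thm:relaxation_CH_GF}): bounding the finite geometric sum in~\eqref{eq:bound_CH_GF} by the full series $\sum_{\ell=0}^{\infty} r^\ell = (1-r)^{-1}$ with $r=(1+\vartheta)/(1+\tau\Lambda)^2<1$, which holds for those $\vartheta\in(0,1)$ with $(1+\vartheta)<(1+\tau\Lambda)^2$, a restriction you rightly make explicit since it is only implicit in the statement. Nothing further is needed.
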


\section{Conclusions} \label{sec:conclusions}

In this paper, we provided a novel analytical perspective on the theoretical understanding of gradient-based learning algorithms 
by showing that consensus-based optimization~(CBO), an intrinsically derivative-free optimization method guaranteed to globally converge to global minimizers of potentially nonsmooth and nonconvex loss functions, implicitly behaves like a gradient-based method.
This allows to interpret CBO as a stochastic relaxation of gradient descent.
Besides forging such unexpected link and thereby driving forward our theoretical understanding of both gradient-based learning methods and metaheuristic black-box optimization algorithms,
we widen the scope of applications of methods which\,---\,in one way or another, be it explicitly or implicitly\,---\,estimate and exploit gradients.
At the example of CBO, we show that stochastic perturbations of gradient descent (other than the ones of stochastic gradient descent or the Langevin dynamics) exist which provably allow to overcome energy barriers and reach deep levels of nonconvex functions.
Our theoretical main result, Theorem~\ref{thm:main_informal}, together with the global convergence guarantees of CBO, Theorem~\ref{thm:Global_CBO_convergence}, suggests that choosing a drift parameter $\lambda>0$ relatively small compared to $1/\Delta t$, a non-insignificant noise parameter $\sigma>0$ but such that $2\lambda>\sigma^2$ holds,\footnote{Experimental evidence across the literature suggests that it is typically best to start with even larger $\sigma$ and decreasing it during the algorithm until $2\lambda>\sigma^2$.} as well as a moderate value of the weight parameter $\alpha>\alpha_0$ as hyperparameters of CBO lead to capable stochastic perturbations.
With this specific perturbation being derivative-free,
we believe these insights to bear the potential for designing efficient and reliable training methods which behave like first-order methods while not relying on the ability of computing gradients.
Potential areas of application in machine learning may include the usage of nonsmooth losses, hyperparameter tuning, convex bandits, reinforcement learning, the training of sparse and pruned neural networks, or federated learning.

An analogous analysis approach may be carried over to second-order methods (with momentum), allowing to establish a link between Adam~\cite{adam2015} and the well-known particle swarm optimization method~\cite{kennedy1995particle}, which is related to CBO through a zero-inertia limit~\cite{grassi2021particle,cipriani2021zero}.
Together with recent observations~\cite{lorenzo2023chat} based on tools from kinetic theory that simulated annealing~\cite{kirkpatrick1983optimization,geman1986diffusions,holley1988simulated} is related to the Langevin dynamics~\cite{chiang1987diffusion,roberts1996exponential,durmus2017nonasymptotic},
this would strengthen even further the surprising and yet largely unexplored link between gradient-based learning algorithms and derivative-free metaheuristic optimization methods.
Beyond that we envisage the likely connections between consensus-based sampling~\cite{carrillo2022consensus} and log-concave sampling~or sampling by Langevin flows~\cite{frieze1994sampling,bernton2018langevin,dwivedi2018log,lee2021structured}.

\section*{Acknowledgments} 
The authors would like to profusely thank Hui Huang, Giuseppe Savar\'e, and Alessandro Scagliotti for many fruitful and stimulating discussions about the topic.

This work has been funded by the German Federal Ministry of Education and Research and the Bavarian State Ministry for Science and the Arts.
The authors of this work take full responsibility for its content.

\bibliographystyle{abbrv}
\bibliography{biblio.bib}

\newpage

\section*{Supplemental Material}

\appendix

This supplemental material is organized into the following appendices.

\begin{itemize}
	\item Appendix~\ref{sec:appendix:IntroductoryFacts}: Introductory facts
    \item Appendix~\ref{sec:appendix:BoundednessNumericalSchemes}: Boundedness of the numerical schemes
	\item Appendix~\ref{sec:appendix:thm:relaxation_CBO_CH}: Proof details for Theorem~\ref{thm:relaxation_CBO_CH}
	\item Appendix~\ref{sec:appendix:thm:relaxation_CH_GF}: Proof details for Proposition~\ref{prop:relaxation_CH_GF} and Theorem~\ref{thm:relaxation_CH_GF}
	\item Appendix~\ref{sec:appendix:additional_numerics}: Additional numerical experiments
\end{itemize}

\section{Introductory facts} \label{sec:appendix:IntroductoryFacts}

\paragraph{Notation}
To keep the notation concise, we hide generic constants, i.e., we write $a\lesssim b$ for $a\leq cb$, if $c$ is a constant independent of problem-dependent constants.
Moreover, since we work with random variables in several instances, many equalities and inequalities hold almost surely without being mentioned explicitly.
We abbreviate with i.i.d.\@ independently and identically distributed.

We write $\N{\;\!\dummy\;\!}_2$ and $\langle\;\!\dummy\;\!,\;\!\dummy\;\!\rangle$ for the Euclidean norm and scalar product on $\bbR^d$, respectively.
Euclidean balls are denoted by $B_{r}(x) := \{z \in \bbR^d: \Nnormal{z-x}_2 \leq r\}$.
Moreover, we write $\N{\;\!\dummy\;\!}_\infty$ for the $\ell^\infty$-norm and denote the associated $\ell^\infty$-balls by $B^\infty_{r}(x) := \{z \in \bbR^d: \Nnormal{z-x}_\infty \leq r\}$.

For the space of continuous functions~$f:X\rightarrow Y$ we write $\CC(X,Y)$, with $X\subset\bbR^n$ and a suitable topological space $Y$.
For an open set $X\subset\bbR^n$ and for $Y=\bbR^m$ the space~$\CC^k(X,Y)$ contains functions~$f\in\CC(X,Y)$ that are $k$-times continuously differentiable.
We omit $Y$ in the real-valued case, i.e., $\CC(X)=\CC(X,\bbR)$ and $\CC^k(X)=\CC^k(X,\bbR)$.

The operator $\nabla$ denotes the gradient of a function on~$\bbR^d$.

\paragraph{Convex analysis}
For a convex function~$f\in\CC(\bbR^d)$ the subdifferential~$\partial f(x)$ at a point $x\in\bbR^d$ is the set
\begin{align*}
	\partial f(x) = \left\{p\in\bbR^d: f(y) \geq f(x) + \left\langle p, y-x\right\rangle \text{ for all } y\in\bbR^d\right\}.
\end{align*}
In the setting~$f\in\CC(\bbR^d)$, $\partial f(x)$ is closed, convex, nonempty and bounded.
If $f\in\CC^1(\bbR^d)$, $\partial f(x) = \{\nabla f(x)\}$.
Moreover, it is straightforward to verify that for $x_1,x_2,p_1,p_2\in\bbR^d$ with $p_1\in\partial f(x_1)$ and $p_2\in\partial f(x_2)$ it holds $\left\langle p_1-p_2, x_1-x_2 \right\rangle \geq 0$.

\paragraph{Probability measures}
The set of all Borel probability measures over $\bbR^d$ is denoted by $\CP(\bbR^d)$.
For $p>0$, we collect measures~$\indivmeasure \in \CP(\bbR^d)$ with finite $p$-th moment $\int \N{x}_2^pd\indivmeasure(x)$ in $\CP_p(\bbR^d)$.

The Dirac delta $\delta_x$ for a point $x\in\bbR^d$ is a measure satisfying $\delta(B) = 1$ if $x\in B$ and $\delta(B) = 0$ if $x\not\in B$ for any measurable set $B\subset\bbR^d$.

\paragraph{Wasserstein distance}
For any $1\leq p<\infty$, the \mbox{Wasserstein-$p$} distance between two Borel probability measures~$\indivmeasure,\indivmeasure'\in\CP_p(\bbR^d)$ is defined by
\begin{align} \label{def:wassersteindistance}
	W_p(\indivmeasure,\indivmeasure') = \left(\inf_{\gamma\in\Pi(\indivmeasure,\indivmeasure')}\int\N{x-x'}_2^pd\gamma(x,x')\right)^{1/p},
\end{align}
where $\Pi(\indivmeasure,\indivmeasure')$ denotes the set of all couplings of (a.k.a.\@ transport plans between) $\indivmeasure$ and $\indivmeasure'$, i.e., the collection of all Borel probability measures over $\mathbb{R}^d\times\mathbb{R}^d$ with marginals $\indivmeasure$ and $\indivmeasure'$ on the first and second component, respectively, see, e.g., \cite{savare2008gradientflows,villani20090oldandnew}.
$\CP_p(\bbR^d)$ endowed with the \mbox{Wasserstein-$p$} distance~$W_p$ is a complete separable metric space~\cite[Proposition~7.1.5]{savare2008gradientflows}.

\paragraph{A generalized triangle-type inequality}
It holds for $p,J\in\bbN$ by H\"older's inequality
\begin{equation} \label{eq:triangel_inequality_general}
	\abs{\sum_{j=1}^J a_j}^p
	\leq J^{p-1}\sum_{j=1}^J \abs{a_j}^p.
\end{equation}

\paragraph{A discrete variant of Gr\"onwall's inequality}

If $z_k \leq az_{k-1} + b$ with $a,b\geq0$ for all $k\geq1$, then
\begin{equation}
	\label{eq:gronwall_inequality}
\begin{split}
	z_k\leq a^kz_0 + b\sum_{\ell=0}^{k-1}a^\ell
		\leq a^kz_0 + b\prod_{\ell=1}^{k-1}(1+a)
		\leq a^kz_0 + be^{a(k-1)}
\end{split}
\end{equation}
for all $k\geq1$.
Notice that, while the first inequality in~\eqref{eq:gronwall_inequality} is as sharp as the initial estimates, the remaining two inequalities are rather rough upper bounds.

\section{Boundedness of the numerical schemes} \label{sec:appendix:BoundednessNumericalSchemes}

Before showing the boundedness in expectation of the numerical schemes~\eqref{eq:CBO}, \eqref{eq:CH}, \eqref{eq:MMS} and \eqref{eq:CH_aux} over time in Sections~\ref{subsec:appendix:boundednessCBO}\;\!--\,\ref{subsec:appendix:boundednessAuxiliaryMMS}, respectively, let us first recall from~\cite[Lemma~3.3]{carrillo2018analytical} an estimate on the consensus point~\eqref{eq:consensus_point}, which facilitates the subsequent proofs.

\begin{lemma}[Boundedness of consensus point~$\conspointnoarg$]
	\label{lem:boundedness_consensuspoint}
	Let $\CE\in\CC(\bbR^d)$ satisfy \ref{asm:zero_global}\;\!--\,\ref{asm:quadratic_growth}.
	Moreover, let $\varrho\in\CP_2(\bbR^d)$.
	Then it holds
	\begin{equation*}
		\N{\conspoint{\varrho}}_2^2
		\leq b_1 + b_2 \int \N{x}_2^2 d\varrho(x)
	\end{equation*}
	with constants $b_1=0$ and $b_2 = b_2(\alpha,\underbar\CE,\overbar\CE)>0$ in case the first condition of \ref{asm:quadratic_growth} holds and with $b_i = b_i(\alpha,C_2,C_3,C_4)>0$ for $i=1,2$ as given in~\eqref{eq:constants_b1b2} in case of the second condition of \ref{asm:quadratic_growth}.
\end{lemma}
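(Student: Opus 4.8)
The plan is to recognize $\conspoint{\varrho}$ as the barycenter of a reweighted probability measure and then to exploit the monotone structure of the Gibbs weights $\omegaa$. Writing $d\mu_\varrho(x) := \frac{\omegaa(x)}{\N{\omegaa}_{L^1(\varrho)}}\,d\varrho(x)$, which is a genuine probability measure since it renormalizes $\omegaa\geq0$ (and $\N{\omegaa}_{L^1(\varrho)}>0$ because $\varrho\in\CP_2(\bbR^d)$ and $\omegaa$ is strictly positive), the definition~\eqref{eq:consensus_point} says exactly $\conspoint{\varrho}=\int x\,d\mu_\varrho(x)$. Applying Jensen's inequality to the convex map $x\mapsto\N{x}_2^2$ yields $\N{\conspoint{\varrho}}_2^2\leq\int\N{x}_2^2\,d\mu_\varrho(x)$, so it suffices to control the second moment of $\mu_\varrho$ by that of $\varrho$. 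The proof then splits along the two alternatives of \ref{asm:quadratic_growth}.

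For the first alternative, $\overbar\CE<\infty$, I would argue directly. Since $\omegaa(x)=\exp(-\alpha\CE(x))\in[\exp(-\alpha\overbar\CE),\exp(-\alpha\underbar\CE)]$, one has $\N{\omegaa}_{L^1(\varrho)}\geq\exp(-\alpha\overbar\CE)$, so the density $\omegaa(x)/\N{\omegaa}_{L^1(\varrho)}$ is bounded pointwise by $\exp(\alpha(\overbar\CE-\underbar\CE))$. Inserting this into the Jensen bound gives $\N{\conspoint{\varrho}}_2^2\leq\exp(\alpha(\overbar\CE-\underbar\CE))\int\N{x}_2^2\,d\varrho(x)$, i.e.\ $b_1=0$ and $b_2=\exp(\alpha(\overbar\CE-\underbar\CE))$, a function of $\alpha,\underbar\CE,\overbar\CE$ as claimed.

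For the second alternative, the quadratic farfield growth~\eqref{asm:quadratic_growth:2}, the key is a single pointwise inequality valid on all of $\bbR^d$: combining the trivial bound $\N{x}_2^2\leq C_4^2$ on $B_{C_4}(0)$ with $\N{x}_2^2\leq C_3^{-1}(\CE(x)-\underbar\CE)$ on its complement (and using $\CE\geq\underbar\CE$) gives $\N{x}_2^2\leq C_4^2+C_3^{-1}(\CE(x)-\underbar\CE)$ for every $x$. Integrating against $\mu_\varrho$ reduces the task to bounding the $\mu_\varrho$-average of $\CE-\underbar\CE$. The crucial observation is that $\mu_\varrho$ reweights $\varrho$ by $\omegaa=\exp(-\alpha\underbar\CE)\exp(-\alpha(\CE-\underbar\CE))$, a strictly decreasing function of the nonnegative quantity $\CE-\underbar\CE$; the Chebyshev correlation inequality (a Harris/FKG-type inequality), applied to the increasing function $\CE-\underbar\CE$ against this decreasing weight, then yields $\int(\CE-\underbar\CE)\,d\mu_\varrho\leq\int(\CE-\underbar\CE)\,d\varrho$. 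Finally \eqref{asm:local_Lipschitz_quadratic_growth:2} bounds the right-hand side by $C_2(1+\int\N{x}_2^2\,d\varrho)$, and assembling the pieces delivers $\N{\conspoint{\varrho}}_2^2\leq (C_4^2+C_2/C_3)+(C_2/C_3)\int\N{x}_2^2\,d\varrho$, giving explicit $b_1,b_2$ as functions of $C_2,C_3,C_4$.

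I expect the main obstacle to be precisely this second case. A naive approach that bounds the numerator and denominator of $\int\N{x}_2^2\,d\mu_\varrho$ separately\,---\,for instance lower-bounding $\N{\omegaa}_{L^1(\varrho)}$ through Jensen's inequality or a Markov/ball truncation\,---\,introduces a spurious factor that is \emph{exponential} in $\int\N{x}_2^2\,d\varrho$ and thereby destroys the desired affine dependence; a point-mass test case confirms that such estimates are far too lossy. The monotone-reweighting step is what keeps the $\omegaa$-weighting consistent between numerator and denominator and so avoids this blow-up, and recognizing it is the heart of the argument. The only routine points left to check are the finiteness of all integrals and the positivity of $\N{\omegaa}_{L^1(\varrho)}$, both guaranteed by $\varrho\in\CP_2(\bbR^d)$ together with the growth control~\eqref{asm:local_Lipschitz_quadratic_growth:2}.
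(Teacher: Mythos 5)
Your proof is correct, and for the first alternative of \ref{asm:quadratic_growth} it coincides with the paper's argument (Jensen's inequality plus the pointwise bound $e^{-\alpha\underbar{\CE}}/\N{\omegaa}_{L^1(\varrho)}\leq e^{\alpha(\overbar\CE-\underbar\CE)}$ on the Gibbs density). For the second alternative, however, you take a genuinely different route: the paper does not prove this case at all but outsources it to \cite[Lemma~3.3]{carrillo2018analytical}, whose argument truncates to a ball and lower-bounds the normalizing constant $\N{\omegaa}_{L^1(\varrho)}$ separately, producing the $\alpha$-dependent constants recorded in~\eqref{eq:constants_b1b2}. Your replacement\,---\,the pointwise estimate $\N{x}_2^2\leq C_4^2+C_3^{-1}(\CE(x)-\underbar\CE)$ followed by the Chebyshev/FKG correlation inequality $\mathrm{Cov}_\varrho\big(\CE-\underbar\CE,\,e^{-\alpha(\CE-\underbar\CE)}\big)\leq 0$, which pushes the Gibbs average of $\CE-\underbar\CE$ below its plain $\varrho$-average\,---\,is valid (the covariance sign follows from the standard symmetrization over two independent copies, and all integrals are finite by \eqref{asm:local_Lipschitz_quadratic_growth:2} and $\varrho\in\CP_2(\bbR^d)$), self-contained, and yields the cleaner constants $b_1=C_4^2+C_2/C_3$ and $b_2=C_2/C_3$, which are independent of $\alpha$ and hence strictly sharper than~\eqref{eq:constants_b1b2}. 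The only caveat is cosmetic: the lemma statement points to the specific constants in~\eqref{eq:constants_b1b2}, so if your proof were substituted one would update that reference; nothing downstream depends on the particular form of $b_1,b_2$. Your diagnosis of why naive numerator/denominator splitting fails is also accurate and correctly identifies the monotone-reweighting step as the crux.
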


\begin{proof}
	In case the first condition of \ref{asm:quadratic_growth} holds, we have by definition of the consensus point~$\conspointnoarg$ in~\eqref{eq:consensus_point} and Jensen's inequality
	\begin{align*}
		\N{\conspoint{\varrho}}_2^2
		\leq \int \N{x}_2^2 \frac{\omegaa(x)}{\N{\omegaa}_{L_1(\varrho)}}d\varrho(x)
		\leq e^{\alpha(\overbarscript\CE-\underbarscript\CE)} \int \N{x}_2^2 d\varrho(x).
	\end{align*}
	In case of the second condition of \ref{asm:quadratic_growth}, the statement follows from \cite[Lemma~3.3]{carrillo2018analytical} with constants
	\begin{align}
		\label{eq:constants_b1b2}
		b_1=C_4^2+b_2
		\quad\text{and}\quad
		b_2=2\frac{C_2}{C_3}\left(1+\frac{1}{\alpha C_3}\frac{1}{C_4^2}\right)\!,
	\end{align}
	which concludes the proof.
\end{proof}

With this estimate we have all necessary tools at hand to prove the boundedness of the numerical schemes investigated in this paper.

\subsection{Boundedness of the consensus-based optimization~(CBO) dynamics~\eqref{eq:CBO_dynamics} and~\eqref{eq:CBO}}
\label{subsec:appendix:boundednessCBO}

Let us remind the reader that the iterates~$(x^{\mathrm{CBO}}_{k})_{k=0,\dots,K}$ of the consensus-based optimization~(CBO) scheme~\eqref{eq:CBO} are defined by
\begin{align*}
\begin{aligned}
	x^{\mathrm{CBO}}_{k}
	&= \conspoint{\empmeasure{k}},
	\quad\text{ with }\quad
    \empmeasure{k} = \frac{1}{N}\sum_{i=1}^N \delta_{X_{k}^i}, \\
	x^{\mathrm{CBO}}_{0}
	&=x_0\sim\rho_0,
\end{aligned}
\end{align*}
where the iterates~$\big((X_{k}^i)_{k=0,\dots,K}\big)_{i=1,\dots,N}$ are given as in \eqref{eq:CBO_dynamics}  by 
\begin{align*}
	X_{k}^i
	&= X_{k-1}^i - \Delta t\lambda \left(X_{k-1}^i - \conspoint{\empmeasure{k-1}}\right) + \sigma D\!\left(X_{k-1}^i-\conspoint{\empmeasure{k-1}}\right)  B_{k}^i,\\
	X_{0}^i
	&= x_{0}^i \sim \rho_0
\end{align*}
with $B_{k}^i$ being i.i.d.\@ Gaussian random vectors in~$\bbR^d$ with zero mean and covariance matrix $\Delta t\Id$ for $k=0,\dots,K$ and $i=1,\dots,N$, i.e., $B_{k}^i\sim\CN(0,\Delta t\Id)$.

\begin{lemma}[Boundedness of the CBO dynamics~\eqref{eq:CBO_dynamics} and the CBO scheme~\eqref{eq:CBO}]
	\label{lem:boundedness_CBOdynamics}
	Let $\CE\in\CC(\bbR^d)$ satisfy \ref{asm:zero_global}\;\!--\,\ref{asm:quadratic_growth}.
	Moreover, let $\rho_0\in\CP_4(\bbR^d)$.
	Then, for the empirical random measures~$(\empmeasure{k})_{k=0,\dots,K}$ and the iterates $(X_{k}^i)_{k=0,\dots,K}$ of~\eqref{eq:CBO_dynamics} it holds
	\begin{equation*}
		\bbE\max_{k=0,\dots,K} \int\N{x}_2^4 d\empmeasure{k}(x)
		\leq \CM^{\mathrm{CBO}}
		\quad\text{ and }\quad
		\max_{i=1,\dots,N}  \bbE\max_{k=0,\dots,K} \N{X_{k}^i}_2^4
		\leq \CM^{\mathrm{CBO}}
	\end{equation*}
	with a constant $\CM^{\mathrm{CBO}}=\CM^{\mathrm{CBO}}(\lambda, \sigma, d, b_1, b_2, K\Delta t, K, \rho_0)>0$.
	Moreover, for the iterates~$(x^{\mathrm{CBO}}_{k})_{k=0,\dots,K}$ of~\eqref{eq:CBO} it holds
	\begin{equation*}
		\bbE \max_{k=0,\dots,K} \N{x^{\mathrm{CBO}}_{k}}_2^4
		\leq \CM^{\mathrm{CBO}}.
	\end{equation*}
\end{lemma}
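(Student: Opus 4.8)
The plan is to identify a scalar quantity whose conditional expectation contracts up to an additive constant, and then to close the recursion using Lemma~\ref{lem:boundedness_consensuspoint}. The decisive observation is that the particles are coupled through the shared consensus point $\conspoint{\empmeasure{k-1}}$, so the individual fourth moments $\N{X_k^i}_2^4$ cannot be controlled in isolation. Instead I would first track the \emph{empirical} fourth moment $Y_k := \int \N{x}_2^4\,d\empmeasure{k}(x) = \frac1N\sum_{i=1}^N \N{X_k^i}_2^4$, which is exactly the object of the first assertion, and only afterwards descend to individual particles and to the scheme iterates.

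First I would split the one-step update into its deterministic and diffusive parts, writing $X_k^i = (1-\Delta t\lambda)X_{k-1}^i + \Delta t\lambda\,\conspoint{\empmeasure{k-1}} + \sigma D(v_{k-1}^i)B_k^i$ with $v_{k-1}^i := X_{k-1}^i - \conspoint{\empmeasure{k-1}}$, and apply the generalized triangle inequality~\eqref{eq:triangel_inequality_general} with $p=4$, $J=3$ to $\N{X_k^i}_2^4$. Since $B_k^i\sim\CN(0,\Delta t\Id)$ is independent of $\CF_{k-1}$, the only genuine computation is the conditional fourth moment of the anisotropic noise term: because $D(v)B = \mathrm{diag}(v)B$ one has $\N{D(v)B}_2^4 = (\sum_j v_j^2 B_j^2)^2$, and using $\bbE[B_j^4]=3\Delta t^2$ together with $\bbE[B_j^2 B_\ell^2]=\Delta t^2$ for $j\neq\ell$ gives $\bbE[\N{D(v_{k-1}^i)B_k^i}_2^4\mid\CF_{k-1}]\leq 3\Delta t^2\N{v_{k-1}^i}_2^4$. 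Bounding $\N{v_{k-1}^i}_2^4$ by $\N{X_{k-1}^i}_2^4$ and $\N{\conspoint{\empmeasure{k-1}}}_2^4$ via~\eqref{eq:triangel_inequality_general}, and then invoking Lemma~\ref{lem:boundedness_consensuspoint} together with Jensen's inequality in the form $(\int\N{x}_2^2\,d\empmeasure{k-1})^2\leq Y_{k-1}$ to obtain $\N{\conspoint{\empmeasure{k-1}}}_2^4\leq 2b_1^2+2b_2^2 Y_{k-1}$, I would average over $i$ to reach a closed recursion $\bbE[Y_k\mid\CF_{k-1}]\leq aY_{k-1}+b$ with $a,b\geq0$ depending only on $\lambda,\sigma,\Delta t,d,b_1,b_2$.

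Taking full expectations and applying the discrete Grönwall inequality~\eqref{eq:gronwall_inequality} yields $\bbE Y_k\leq a^k\bbE Y_0 + b\sum_{\ell=0}^{k-1}a^\ell$, which is finite and bounded uniformly for $k\leq K$ because $\bbE Y_0 = \int\N{x}_2^4\,d\rho_0(x)<\infty$ by $\rho_0\in\CP_4(\bbR^d)$. To pass from these fixed-$k$ bounds to the maximum inside the expectation I would use the elementary estimate $\max_{k=0,\dots,K}Y_k\leq\sum_{k=0}^K Y_k$ (valid since $Y_k\geq0$), so that $\bbE\max_k Y_k\leq\sum_{k=0}^K\bbE Y_k\leq(K+1)\max_{k\leq K}\bbE Y_k =: \CM^{\mathrm{CBO}}$; the admissible $K$-dependence of the constant is precisely what renders this crude step sufficient, and it proves the first assertion. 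For the individual particles the same one-step estimate gives $\bbE[\N{X_k^i}_2^4\mid\CF_{k-1}]\leq c_1\N{X_{k-1}^i}_2^4+c_2 Y_{k-1}+c_3$; taking expectations, inserting the uniform bound on $\bbE Y_{k-1}$ already established, and applying Grönwall and the same $\max\leq\mathrm{sum}$ trick yields $\bbE\max_k\N{X_k^i}_2^4\leq\CM^{\mathrm{CBO}}$ with a constant independent of $i$, whence the second assertion after taking $\max_i$.

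Finally, the third assertion follows immediately from Lemma~\ref{lem:boundedness_consensuspoint}: since $x^{\mathrm{CBO}}_k=\conspoint{\empmeasure{k}}$, squaring its bound and using Jensen gives $\N{x^{\mathrm{CBO}}_k}_2^4\leq 2b_1^2+2b_2^2 Y_k$, so $\bbE\max_k\N{x^{\mathrm{CBO}}_k}_2^4\leq 2b_1^2+2b_2^2\,\bbE\max_k Y_k\leq\CM^{\mathrm{CBO}}$ by the first assertion (enlarging $\CM^{\mathrm{CBO}}$ if necessary). I expect the main obstacle to be organizational rather than conceptual: one must resist controlling individual particles directly and instead close the recursion at the level of the empirical moment $Y_k$, where Lemma~\ref{lem:boundedness_consensuspoint} exactly absorbs the nonlocal consensus-point contribution. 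The only delicate calculation is the exact Gaussian fourth moment of the anisotropic diffusion term, and the only place the $K$-dependence of $\CM^{\mathrm{CBO}}$ is spent is the passage from fixed-$k$ to maximum-over-$k$ bounds.
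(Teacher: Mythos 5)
Your proposal is correct, but it reaches the conclusion by a genuinely different route than the paper. The paper telescopes the update rule, takes the maximum over time steps inside the expectation, and then invokes a discrete Burkholder--Davis--Gundy inequality to control $\bbE\max_{\ell}\Nnormal{\sum_{m\leq\ell}D(\,\dummy\,)B_m^i}_2^4$ for the martingale part of the noise, before closing the argument with Lemma~\ref{lem:boundedness_consensuspoint}, Jensen, and Gr\"onwall. You instead set up a clean one-step conditional recursion for the empirical fourth moment $Y_k$, obtain fixed-$k$ bounds $\bbE Y_k\leq a^k\bbE Y_0+b\sum_{\ell<k}a^\ell$ via Gr\"onwall, and then pass to the maximum with the elementary estimate $\bbE\max_k Y_k\leq\sum_k\bbE Y_k\leq (K+1)\max_k\bbE Y_k$; this sidesteps the martingale machinery entirely at the cost of an extra factor of $K+1$, which is admissible since the constant $\CM^{\mathrm{CBO}}$ is permitted to depend on $K$ (and the paper's constant does so anyway, through the $K^3$ and $K$ factors it incurs when applying Jensen to the telescoped drift and noise sums). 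Your exact Gaussian computation $\bbE[\Nnormal{D(v)B}_2^4\mid\CF_{k-1}]\leq 3\Delta t^2\Nnormal{v}_2^4$ is correct and plays the role of the paper's use of $\bbE B^4=3$ for the diagonal noise entries, and your treatment of the coupling through $\conspoint{\empmeasure{k-1}}$ (closing the recursion at the level of $Y_k$ before descending to individual particles) mirrors the paper's averaging over $i$. The paper's BDG route would be the one to keep if one wanted the sharper $K$-dependence that a maximal inequality provides; your route is more elementary and entirely sufficient for the statement as given.
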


\begin{proof}
	We first note that $X_{k}^i$ as defined iteratively in~\eqref{eq:CBO_dynamics} satisfies
	\begin{align*}
		X_{k}^i
		= X_{0}^i - \Delta t\lambda \sum_{\ell=1}^k \left(X_{\ell-1}^i - \conspoint{\empmeasure{\ell-1}}\right) + \sigma \sum_{\ell=1}^k D\!\left(X_{\ell-1}^i-\conspoint{\empmeasure{\ell-1}}\right)  B_{\ell}^i
	\end{align*}
	and that for any $k=1,\dots,K$ by means of the standard inequality~\eqref{eq:triangel_inequality_general} for $p=4$ and $J=3$ we have
	\begin{align} \label{eq:proof:lem:boundedness_CBOdynamics_3}
	\begin{aligned}
		\max_{\ell=0,\dots,k} \N{X_{\ell}^i}_2^4
		\lesssim
			\N{X_{0}^i}_2^4
			& + (\Delta t\lambda)^4 \max_{\ell=1,\dots,k} \N{\sum_{m=1}^\ell \left(X_{m-1}^i - \conspoint{\empmeasure{m-1}}\right)}_2^4\\
			& + \sigma^4 \max_{\ell=1,\dots,k} \N{\sum_{m=1}^\ell D\!\left(X_{m-1}^i-\conspoint{\empmeasure{m-1}}\right)  B_{m}^i}_2^4.
	\end{aligned}
	\end{align}
	Noticing that the random process~$Y_\ell^i := \sum_{m=1}^\ell D\!\left(X_{m-1}^i-\conspoint{\empmeasure{m-1}}\right) B_{m}^i$, $\ell=0,\dots,k$ is a martingale w.r.t.\@ the filtration~$\big\{\CF_\ell=\sigma\left(\{X^i_0\}\cup\{B_{m}^i,m=1,\dots,\ell\}\right)\!\big\}_{\ell=0}^{k-1}$ since it satisfies $\bbE\left[Y_\ell^i\mid\CF_{\ell-1}\right]=Y_{\ell-1}^i$ for $\ell=1,\dots,k$,
	we can apply a discrete version of the Burkholder-Davis-Gundy inequality~\cite[Corollary~11.2.1]{chow2003probability} yielding
	\begin{align*}
		\bbE\! \max_{\ell=1,\dots,k} \N{\sum_{m=1}^\ell D\!\left(X_{m-1}^i\!-\!\conspoint{\empmeasure{m-1}}\right)  B_{m}^i}_2^4
		&\lesssim d\,\bbE\sum_{j=1}^d \left(\sum_{\ell=1}^k \left(D\!\left(X_{\ell-1}^i\!-\!\conspoint{\empmeasure{\ell-1}}\right)\right)_{jj}^2 (B_{\ell}^i)_{j}^2\right)^{2}\!.
	\end{align*}
	Thus, when taking the expectation on both sides of~\eqref{eq:proof:lem:boundedness_CBOdynamics_3} and employing Jensen's inequality, we can use the latter to obtain
	\begin{align}
		\label{eq:proof:lem:boundedness_CBOdynamics_5}
	\begin{aligned}
		\bbE \max_{\ell=0,\dots,k} \N{X_{\ell}^i}_2^4
		&\lesssim
			\bbE\N{X_{0}^i}_2^4
			+ (\Delta t\lambda)^4K^3 \, \bbE \sum_{\ell=1}^k \N{X_{\ell-1}^i - \conspoint{\empmeasure{\ell-1}}}_2^4\\
			& \qquad\qquad\;\;\;\, + \sigma^4dK\,\bbE\sum_{j=1}^d \sum_{\ell=1}^k \left(D\!\left(X_{\ell-1}^i-\conspoint{\empmeasure{\ell-1}}\right)\right)_{jj}^4 (B_{\ell}^i)_{j}^4 \\
		&\lesssim
			\bbE\N{X_{0}^i}_2^4
			+ (\Delta t\lambda)^4 K^3 \, \bbE \sum_{\ell=1}^k \left(\N{X_{\ell-1}^i}_2^4 + \N{\conspoint{\empmeasure{\ell-1}}}_2^4\right)\\
			& \qquad\qquad\;\;\;\, + (\Delta t)^2\sigma^4dK\,\bbE\sum_{j=1}^d \sum_{\ell=1}^k \left(\left(X_{\ell-1}^i\right)_{j}^4+\left(\conspoint{\empmeasure{\ell-1}}\right)_{j}^4\right) \\
		&\lesssim
			\left(1 + (\Delta t\lambda)^4 K^3 + (\Delta t\sigma^2d)^2K\right) \, \bbE \sum_{\ell=1}^k \left(\N{X_{\ell-1}^i}_2^4 + \N{\conspoint{\empmeasure{\ell-1}}}_2^4\right) \\
		&\lesssim
			\left(1 + \lambda^4 (K\Delta t)^4 + \sigma^4d^2(K\Delta t)^2\right) \, \bbE \max_{\ell=1,\dots,k} \left(\N{X_{\ell-1}^i}_2^4 + \N{\conspoint{\empmeasure{\ell-1}}}_2^4\right) \\
		&\leq
			C\, \bbE \max_{\ell=1,\dots,k} \left(\N{X_{\ell-1}^i}_2^4 + b_1^2 + b_2^2 \int \N{x}_2^4 d\empmeasure{\ell-1}(x)\right)
	\end{aligned}
	\end{align}
	with a constant $C=C(\lambda, \sigma, d, K\Delta t)$.
	In the second step we made use of the standard inequality~\eqref{eq:triangel_inequality_general} for $p=4$ and $J=2$, exploited that $B_{\ell}^i$ is independent from $D\!\left(X_{\ell-1}^i-\conspoint{\empmeasure{\ell-1}}\right)$ for any $\ell=1,\dots,k$ and used that the fourth moment of a Gaussian random variable~$B\sim\CN(0,1)$ is $\bbE B^4 = 3$ (e.g., by recalling that $\bbE B^4 = \frac{d^4}{dx^4} M_B(x)\big|_{x=0}$, where $M_B$ denotes the moment-generating function of $B$).
	Moreover, recall that $K\Delta t$ denotes the final time horizon, and note that the last step is due to Lemma~\ref{lem:boundedness_consensuspoint}.
	Averaging~\eqref{eq:proof:lem:boundedness_CBOdynamics_5} over $i$ allows to bound
	\begin{align}
	\label{eq:proof:lem:boundedness_CBOdynamics_7}
	\begin{aligned}
		\frac{1}{N} \sum_{i=1}^N \bbE \max_{\ell=0,\dots,k} \N{X_{\ell}^i}_2^4
		&\leq \widetilde{C} \left(1 + \frac{1}{N} \sum_{i=1}^N \bbE \max_{\ell=1,\dots,k} \N{X_{\ell-1}^i}_2^4 \right)
	\end{aligned}
	\end{align}
	with a constant $\widetilde{C}=\widetilde{C}(\lambda, \sigma, d, b_1, b_2, K\Delta t)$.
	Since $\bbE\int\N{x}_2^4 d\empmeasure{0}(x) = \frac{1}{N}\sum_{i=1}^N \bbE\,\Nnormal{x_0^i}_2^4$,
	an application of the discrete variant of Gr\"onwall's inequality~\eqref{eq:gronwall_inequality} yields the second inequality in
	\begin{align}
	\label{eq:proof:lem:boundedness_CBOdynamics_9}
	\begin{aligned}
		\bbE\max_{\ell=0,\dots,k} \int\N{x}_2^4 d\empmeasure{\ell}(x)
		&\leq\frac{1}{N} \sum_{i=1}^N \bbE \max_{\ell=0,\dots,k} \N{X_{\ell}^i}_2^4 \\
		&\leq \widetilde{C}^{k} \, \bbE\int\N{x}_2^4 d\empmeasure{0}(x) + \widetilde{C}e^{\widetilde{C}(k-1)},
	\end{aligned}
	\end{align}
	showing that the left-hand side is bounded independently of $N$, which gives the first bound in the first part of the statement.
	Making use thereof in \eqref{eq:proof:lem:boundedness_CBOdynamics_5} also yields the second part after another application of Gr\"onwall's inequality.
	The second part of the statement follows by noting that an application of Lemma~\ref{lem:boundedness_consensuspoint} gives
	\begin{align*}
		\bbE \max_{\ell=1,\dots,k} \N{x^{\mathrm{CBO}}_{\ell}}_2^4
		&= \bbE \max_{\ell=1,\dots,k} \N{\conspoint{\empmeasure{\ell}}}_2^4 \\
		&\leq 2b_1^2 + 2b_2^2 \, \bbE \max_{\ell=1,\dots,k} \int \N{x}_2^4 d\empmeasure{\ell}(x),
	\end{align*}
	where the last expression is bounded as in \eqref{eq:proof:lem:boundedness_CBOdynamics_9}.
	Recalling that $x^{\mathrm{CBO}}_0=x_0\sim\rho_0\in\CP_4(\bbR^d)$ and 
	choosing the constant~$\CM^{\mathrm{CBO}}$ large enough for all three estimates to hold with $k=K$ concludes the proof.
\end{proof}

\subsection{Boundedness of the consensus hopping scheme~\eqref{eq:CH}} \label{subsec:appendix:boundednessCH}

Let us recall that the iterates~$(x^{\mathrm{CH}}_{k})_{k=0,\dots,K}$ of the consensus hopping~(CH) scheme~\eqref{eq:CH} are defined by
\begin{align*}
\begin{aligned}
    x^{\mathrm{CH}}_{k}
	&= \conspoint{\mu_{k}},
	\quad\text{ with }\quad
    \mu_{k} = \CN\!\left(x^{\mathrm{CH}}_{k-1},\widetilde{\sigma}^2\Id\right)\!, \\
	x^{\mathrm{CH}}_{0}
	&=x_0.
\end{aligned}
\end{align*}

\begin{lemma}[Boundedness of the CH scheme~\eqref{eq:CH}]
	\label{lem:boundedness_CH}
	Let $\CE\in\CC(\bbR^d)$ satisfy \ref{asm:zero_global}\;\!--\,\ref{asm:quadratic_growth}.
	Moreover, let $\rho_0\in\CP_4(\bbR^d)$.
	Then, for the random measures~$\left(\mu_{k}\right)_{k=1,\dots,K}$ in~\eqref{eq:CH} it holds
	\begin{equation*}
		\bbE \max_{k=1,\dots,K} \int\N{x}_2^4 d\mu_{k}(x)
		\leq \CM^{\mathrm{CH}}
	\end{equation*}
	with a constant $\CM^{\mathrm{CH}}=\CM^{\mathrm{CH}}(\widetilde\sigma,d,b_1,b_2,K,\rho_0)>0$.
	Moreover, for the iterates~$(x^{\mathrm{CH}}_{k})_{k=0,\dots,K}$ of~\eqref{eq:CH} it holds
	\begin{equation*}
		\bbE \max_{k=0,\dots,K} \N{x^{\mathrm{CH}}_{k}}_2^4
		\leq \CM^{\mathrm{CH}}.
	\end{equation*}
\end{lemma}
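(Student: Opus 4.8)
The plan is to mirror the strategy of Lemma~\ref{lem:boundedness_CBOdynamics}, but to exploit that, conditionally on the initialization~$x_0$, the CH scheme~\eqref{eq:CH} is entirely \emph{deterministic}: the only randomness enters through $x^{\mathrm{CH}}_0=x_0\sim\rho_0$. This removes the need for any martingale/Burkholder--Davis--Gundy machinery and leaves a clean deterministic recursion in the fourth moments, which I would close with the discrete Gr\"onwall inequality~\eqref{eq:gronwall_inequality} before taking a final expectation over~$x_0$.

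First I would upgrade the second-moment estimate of Lemma~\ref{lem:boundedness_consensuspoint} to a fourth-moment one: squaring $\N{\conspoint{\varrho}}_2^2\leq b_1+b_2\int\N{x}_2^2\,d\varrho(x)$, applying~\eqref{eq:triangel_inequality_general} with $p=2,J=2$, and then using Jensen's inequality $\big(\int\N{x}_2^2\,d\varrho\big)^2\leq\int\N{x}_2^4\,d\varrho$, yields
\[
    \N{\conspoint{\varrho}}_2^4\leq 2b_1^2+2b_2^2\int\N{x}_2^4\,d\varrho(x).
\]
In particular $\N{x^{\mathrm{CH}}_k}_2^4\leq 2b_1^2+2b_2^2\int\N{x}_2^4\,d\mu_k(x)$ for all $k\geq1$, which is exactly the fourth-moment analogue of the consensus bound already invoked at the end of the proof of Lemma~\ref{lem:boundedness_CBOdynamics}.

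Next I would control the fourth moment of the Gaussian sampling measure. Writing $X\sim\mu_k$ as $X=x^{\mathrm{CH}}_{k-1}+\widetilde\sigma Z$ with $Z\sim\CN(0,\Id)$, the triangle inequality followed by~\eqref{eq:triangel_inequality_general} with $p=4,J=2$ gives
\[
    \int\N{x}_2^4\,d\mu_k(x)=\bbE\N{x^{\mathrm{CH}}_{k-1}+\widetilde\sigma Z}_2^4\leq 8\N{x^{\mathrm{CH}}_{k-1}}_2^4+8\widetilde\sigma^4\,\bbE\N{Z}_2^4,
\]
where $\bbE\N{Z}_2^4=d(d+2)$ since $\N{Z}_2^2\sim\chi^2_d$ (this Gaussian moment is the origin of the $d$-dependence of $\CM^{\mathrm{CH}}$). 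Inserting the fourth-moment consensus estimate for $\N{x^{\mathrm{CH}}_{k-1}}_2^4=\N{\conspoint{\mu_{k-1}}}_2^4$ (for $k\geq2$) turns this into the affine recursion $z_k\leq a z_{k-1}+b$ with $z_k:=\int\N{x}_2^4\,d\mu_k(x)$, $a:=16b_2^2$ and $b:=16b_1^2+8\widetilde\sigma^4 d(d+2)$, valid pointwise in~$x_0$; the base case $z_1\leq 8\N{x_0}_2^4+8\widetilde\sigma^4 d(d+2)$ is immediate since $\mu_1=\CN(x_0,\widetilde\sigma^2\Id)$.

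Finally, the discrete Gr\"onwall inequality~\eqref{eq:gronwall_inequality} gives, for each realization of $x_0$, the monotone-in-$k$ bound $z_k\leq a^{k-1}z_1+b\sum_{\ell=0}^{k-2}a^\ell$, whence $\max_{k=1,\dots,K}z_k$ is dominated by its value at $k=K$; taking expectations over $x_0\sim\rho_0\in\CP_4(\bbR^d)$ (so that $\bbE\N{x_0}_2^4<\infty$) and absorbing all constants into $\CM^{\mathrm{CH}}=\CM^{\mathrm{CH}}(\widetilde\sigma,d,b_1,b_2,K,\rho_0)$ yields the first claimed bound. The second bound then follows by applying the fourth-moment consensus estimate once more, $\max_{k}\N{x^{\mathrm{CH}}_k}_2^4\leq\max\!\big(\N{x_0}_2^4,\,2b_1^2+2b_2^2\max_k z_k\big)$, and taking expectations. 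The only mildly delicate points — where I would be careful rather than where any genuine difficulty lies — are respecting the convex-combination structure of $\conspointnoarg$ through Jensen when passing from the second- to the fourth-moment consensus bound, and the exact value $\bbE\N{Z}_2^4=d(d+2)$; no real obstacle arises because, unlike the CBO dynamics, the CH scheme carries no accumulated stochastic noise to estimate.
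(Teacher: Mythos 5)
Your proposal is correct and follows essentially the same route as the paper's proof: the fourth-moment upgrade of the consensus-point bound via Jensen, the decomposition $X=x^{\mathrm{CH}}_{k-1}+\widetilde\sigma Z$ with $\bbE\N{Z}_2^4=d(d+2)$, the resulting affine recursion closed by the discrete Gr\"onwall inequality~\eqref{eq:gronwall_inequality}, and a final expectation over $x_0\sim\rho_0$. The only cosmetic difference is that you make explicit the (correct) observation that the recursion is deterministic conditionally on $x_0$, which the paper uses implicitly by working pointwise and taking expectations only at the end.
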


\begin{proof}
	According to the definition of the scheme~\eqref{eq:CH} and with the standard inequality~\eqref{eq:triangel_inequality_general} for $p=4$ and $J=2$, we observe that for any $k=2,\dots,K$ it holds 
	\begin{align*}
	\begin{aligned}
		\int\N{x}_2^4 d\mu_{k}(x)
		&= \int\N{x}_2^4 d\CN\!\left(x^{\mathrm{CH}}_{k-1},\widetilde{\sigma}^2\Id\right)(x) \\
		&\lesssim \N{x^{\mathrm{CH}}_{k-1}}_2^4 + \int\N{x}_2^4 d\CN\!\left(0,\widetilde{\sigma}^2\Id\right)(x) \\
		&= \N{\conspoint{\mu_{k-1}}}_2^4 + (d^2+2d)\,\widetilde{\sigma}^4 \\
		&\lesssim b_1^2 + b_2^2 \int \N{x}_2^4 d\mu_{k-1}(x) + d^2\widetilde{\sigma}^4,
	\end{aligned}
	\end{align*}
	where for the third step we explicitly computed that for the fourth moment of a multivariate Gaussian distribution it holds $\int \!\Nnormal{x}_2^4 \,d\CN\!\left(0,\Id\right)(x) = d^2+2d$.
	Moreover, in the final step we employed Lemma~\ref{lem:boundedness_consensuspoint} together with Jensen's inequality.
	Along the same lines we have $\int\!\Nnormal{x}_2^4\, d\mu_{1}(x) \lesssim \Nnormal{x_0}_2^4 + d^2\widetilde{\sigma}^4$.
	An application of the discrete variant of Gr\"onwall's inequality~\eqref{eq:gronwall_inequality} therefore allows to obtain
	\begin{align*}
	\begin{aligned}
		\int\N{x}_2^4 d\mu_{k}(x)
		&\lesssim b_2^{2k} \N{x_0}_2^4 + \left(b_1^2+d^2\widetilde\sigma^4\right)e^{cb_2^2(k-1)}
	\end{aligned}
	\end{align*}
	with a generic constant~$c>0$.
	Taking the maximum over the iterations~$k$ and the expectation w.r.t.\@ the initial condition~$\rho_0$ gives the first part of the statement.
	Recalling that $x^{\mathrm{CH}}_0=x_0\sim\rho_0\in\CP_4(\bbR^d)$,
	the second part follows after an application of Lemma~\ref{lem:boundedness_consensuspoint}, since
	\begin{align*}
		\bbE \max_{\ell=1,\dots,k} \N{x^{\mathrm{CH}}_{\ell}}_2^4
		&= \bbE \max_{\ell=1,\dots,k} \N{\conspoint{\mu_\ell}}_2^4 \\
		&\leq 2b_1^2 + 2b_2^2 \, \bbE \max_{\ell=1,\dots,k} \int \N{x}_2^4 d\mu_\ell(x).
	\end{align*}
	Choosing the constant~$\CM^{\mathrm{CH}}$ large enough for either estimate to hold with $k=K$ concludes the proof.
\end{proof}

\begin{lemma}
	\label{lem:boundedness_CH_empirical}
	Let $Y_k^i \sim \mu_k$ for $i=1,\dots,N$ and let $\widehat\mu^N_{k} = \frac{1}{N}\sum_{i=1}^N \delta_{Y_k^i}$.
	Then, under the assumptions of Lemma~\ref{lem:boundedness_CH}, for the empirical random measures~$(\widehat\mu^N_{k})_{k=1,\dots,K}$ it holds
	\begin{equation*}
		\bbE \max_{k=1,\dots,K} \int\N{x}_2^4 d\widehat\mu^N_{k}(x)
		\leq \widehat\CM^{\mathrm{CH}}
	\end{equation*}
	with a constant $\widehat\CM^{\mathrm{CH}}=\widehat\CM^{\mathrm{CH}}(\widetilde\sigma,d,b_1,b_2,K,\rho_0)>0$.
\end{lemma}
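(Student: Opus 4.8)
The plan is to reduce the fourth-moment bound for the empirical measures $\widehat\mu^N_{k}$ to the already-established bound for the sampling measures $\mu_k$ in Lemma~\ref{lem:boundedness_CH}. First I would rewrite the quantity of interest explicitly as an empirical average, namely $\int\N{x}_2^4 d\widehat\mu^N_{k}(x) = \frac{1}{N}\sum_{i=1}^N \N{Y_k^i}_2^4$, so that the goal becomes bounding $\bbE\max_{k=1,\dots,K} \frac{1}{N}\sum_{i=1}^N \N{Y_k^i}_2^4$. The obstacle in a direct argument is that the maximum over $k$ sits inside the expectation, so one cannot simply exchange it with the conditional expectation of each summand; I would circumvent this with the crude but sufficient bound replacing the maximum over $k$ by the sum over $k$ of the nonnegative summands, giving
\begin{align*}
    \bbE\max_{k=1,\dots,K} \int\N{x}_2^4 d\widehat\mu^N_{k}(x)
    \leq \sum_{k=1}^K \frac{1}{N}\sum_{i=1}^N \bbE\N{Y_k^i}_2^4.
\end{align*}

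The key step is then the conditioning argument. The CH scheme~\eqref{eq:CH} is deterministic given its initialization $x^{\mathrm{CH}}_0 = x_0$, since each $x^{\mathrm{CH}}_{k} = \conspoint{\mu_{k}}$ with $\mu_{k} = \CN(x^{\mathrm{CH}}_{k-1},\widetilde\sigma^2\Id)$ depends only on the previous iterate; consequently all measures $\mu_k$ are deterministic functions of $x_0$, and conditionally on $x_0$ the samples $Y_k^i\sim\mu_k$ are Gaussian. Applying the tower property and using that $Y_k^i\sim\mu_k$ conditionally on $x_0$ yields $\bbE\N{Y_k^i}_2^4 = \bbE\big[\int\N{x}_2^4 d\mu_k(x)\big]$ for every $i$ and $k$. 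Substituting this back removes the dependence on $i$ and leaves a sum over $k$ of the expected fourth moments of the $\mu_k$.

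Finally I would invoke Lemma~\ref{lem:boundedness_CH}, which bounds $\bbE\max_{k=1,\dots,K}\int\N{x}_2^4 d\mu_k(x) \leq \CM^{\mathrm{CH}}$ and hence in particular $\bbE\int\N{x}_2^4 d\mu_k(x) \leq \CM^{\mathrm{CH}}$ for each individual $k$. Combining the displays gives
\begin{align*}
    \bbE\max_{k=1,\dots,K} \int\N{x}_2^4 d\widehat\mu^N_{k}(x)
    \leq \sum_{k=1}^K \bbE\int\N{x}_2^4 d\mu_k(x)
    \leq K\,\CM^{\mathrm{CH}},
\end{align*}
so the claim holds with $\widehat\CM^{\mathrm{CH}} := K\,\CM^{\mathrm{CH}}$, which inherits the required dependence on $\widetilde\sigma, d, b_1, b_2, K, \rho_0$. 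I do not expect any genuine difficulty here: the only subtlety is the conditional-Gaussianity observation that licenses the tower property, and the factor $K$ lost through the max-by-sum step is harmless. Should a sharper constant be desired, one could instead mirror the computation in the proof of Lemma~\ref{lem:boundedness_CH}, splitting $\N{Y_k^i}_2^4 \lesssim \N{x^{\mathrm{CH}}_{k-1}}_2^4 + \N{Y_k^i - x^{\mathrm{CH}}_{k-1}}_2^4$ via~\eqref{eq:triangel_inequality_general} and using $\bbE[\N{Y_k^i - x^{\mathrm{CH}}_{k-1}}_2^4\mid x_0] = (d^2+2d)\widetilde\sigma^4$, but this refinement is unnecessary for the stated bound.
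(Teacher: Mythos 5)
Your argument is correct and follows essentially the same route as the paper: reduce the claim to fourth moments of the samples $Y_k^i$, exploit that $Y_k^i$ is conditionally Gaussian centered at $x^{\mathrm{CH}}_{k-1}$, and invoke Lemma~\ref{lem:boundedness_CH}. The only difference is bookkeeping of the maximum over $k$: the paper exchanges the max with the average over $i$, writes $Y_k^i = x^{\mathrm{CH}}_{k-1} + \widetilde\sigma B_{Y,k}^i$, and applies Lemma~\ref{lem:boundedness_CH} directly to $\bbE\max_k\N{x^{\mathrm{CH}}_{k-1}}_2^4$, which yields a constant of the form $\CM^{\mathrm{CH}} + K\widetilde\sigma^4(d^2+2d)$ rather than your cruder $K\,\CM^{\mathrm{CH}}$ --- exactly the refinement you sketch in your closing remark; either constant satisfies the stated dependence.
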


\begin{proof}
	By definition of the empirical measure~$\widehat\mu^N_{k}$ it holds
	\begin{align}
		\label{eq:proof:lem:boundedness_CH_empirical:1}
	\begin{aligned}
		\bbE \max_{k=1,\dots,K} \int\N{x}_2^4 d\widehat\mu^N_{k}(x)
		= \bbE \max_{k=1,\dots,K} \frac{1}{N}\sum_{i=1}^N \N{Y_k^i}_2^4 
		\leq \frac{1}{N}\sum_{i=1}^N \bbE \max_{k=1,\dots,K} \N{Y_k^i}_2^4.
	\end{aligned}
	\end{align}
	Since $Y_k^i \sim \mu_k = \CN\!\left(x^{\mathrm{CH}}_{k-1},\widetilde{\sigma}^2\Id\right)$ for any $k=1,\dots,K$ and $i=1,\dots,N$, we can write $Y_k^i = x^{\mathrm{CH}}_{k-1} + \widetilde\sigma B_{Y,k}^i$, where $B_{Y,k}^i$ is a standard Gaussian random vector, i.e., $B_{Y,k}^i\sim\CN\!\left(0,\Id\right)$.
	By means of the standard inequality~\eqref{eq:triangel_inequality_general} for $p=4$ and $J=2$ we thus have
	\begin{align}
		\label{eq:proof:lem:boundedness_CH_empirical:3}
	\begin{aligned}
		\bbE \max_{k=1,\dots,K} \N{Y_k^i}_2^4
		&\lesssim \bbE \max_{k=1,\dots,K} \N{x^{\mathrm{CH}}_{k-1}}_2^4 + \widetilde\sigma^4 \bbE \max_{k=1,\dots,K} \N{B_{Y,k}^i}_2^4\\
		&\leq \CM^{\mathrm{CH}} + K\widetilde\sigma^4 (d^2+2d),
	\end{aligned}
	\end{align}
	where in the last step we employed Lemma~\ref{lem:boundedness_CH} for the first term and bounded the maximum by the sum in the second term before using again that $\bbE \Nnormal{B}^4_2 = d^2+2d$ for $B\sim\CN(0,\Id)$.
	Inserting \eqref{eq:proof:lem:boundedness_CH_empirical:3} into \eqref{eq:proof:lem:boundedness_CH_empirical:1} yields the claim.
\end{proof}

\subsection{Boundedness of the minimizing movement scheme~\eqref{eq:MMS}}
\label{subsec:appendix:boundednessMMS}

We recall that the iterates~$(x^{\mathrm{MMS}}_{k})_{k=0,\dots,K}$ of the minimizing movement scheme~(MMS)~\eqref{eq:MMS} are defined by
\begin{align*}
\begin{aligned}
	x^{\mathrm{MMS}}_{k}
	&= \argmin_{x\in\bbR^d} \; \CE_k(x),
    \quad\text{ with }\quad
    \CE_k(x) := \frac{1}{2\tau} \N{x^{\mathrm{MMS}}_{k-1} - x}_2^2 + \CE(x), \\
    x^{\mathrm{MMS}}_{0}
	&=x_0.
\end{aligned}
\end{align*}

\begin{lemma}[Boundedness of the MMS~\eqref{eq:MMS}]
\label{lem:boundedness_MMS}
	Let $\CE\in\CC(\bbR^d)$ satisfy \ref{asm:zero_global}\;\!--\,\ref{asm:local_Lipschitz_quadratic_growth}.
	Moreover, let $\rho_0\in\CP_4(\bbR^d)$.
	Then, for the iterates~$(x^{\mathrm{MMS}}_{k})_{k=0,\dots,K}$ of~\eqref{eq:MMS} it holds
	\begin{equation*}
		\bbE \max_{k=0,\dots,K} \N{x^{\mathrm{MMS}}_{k}}_2^4
		\leq \CM^{\mathrm{MMS}}
	\end{equation*}
	with a constant $\CM^{\mathrm{MMS}}=\CM^{\mathrm{MMS}}(K\tau,C_2,\rho_0)>0$.
\end{lemma}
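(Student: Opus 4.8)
The plan is to exploit the defining variational property of the minimizing movement scheme, namely that $x^{\mathrm{MMS}}_{k}$ minimizes $\CE_k$, to obtain an energy-dissipation estimate, and then to convert the resulting control on the increments into a bound on the iterates themselves. I shall only use that $\CE$ is bounded below by $\minobj$ (Assumption~\ref{asm:zero_global}) together with the quadratic growth bound~\eqref{asm:local_Lipschitz_quadratic_growth:2} of Assumption~\ref{asm:local_Lipschitz_quadratic_growth}; the local Lipschitz part is not needed here.

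First I would record the one-step dissipation inequality. Since $x^{\mathrm{MMS}}_{k}$ is a minimizer of $\CE_k$, testing the objective against $x^{\mathrm{MMS}}_{k-1}$ gives $\CE_k(x^{\mathrm{MMS}}_{k}) \leq \CE_k(x^{\mathrm{MMS}}_{k-1}) = \CE(x^{\mathrm{MMS}}_{k-1})$, which rearranges to
\begin{equation*}
\frac{1}{2\tau}\N{x^{\mathrm{MMS}}_{k} - x^{\mathrm{MMS}}_{k-1}}_2^2 \leq \CE(x^{\mathrm{MMS}}_{k-1}) - \CE(x^{\mathrm{MMS}}_{k}).
\end{equation*}
In particular the objective is nonincreasing along the iterates. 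Summing the telescoping right-hand side over $k=1,\dots,K$ and using $\CE(x^{\mathrm{MMS}}_{K}) \geq \minobj$ together with~\eqref{asm:local_Lipschitz_quadratic_growth:2} evaluated at $x_0$ yields
\begin{equation*}
\sum_{k=1}^{K}\N{x^{\mathrm{MMS}}_{k} - x^{\mathrm{MMS}}_{k-1}}_2^2 \leq 2\tau\big(\CE(x_0) - \minobj\big) \leq 2\tau C_2\big(1 + \N{x_0}_2^2\big).
\end{equation*}

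Next I would pass from the increments to the iterates. By the triangle inequality followed by Cauchy--Schwarz,
\begin{equation*}
\N{x^{\mathrm{MMS}}_{k} - x_0}_2^2 \leq k\sum_{\ell=1}^{k}\N{x^{\mathrm{MMS}}_{\ell} - x^{\mathrm{MMS}}_{\ell-1}}_2^2 \leq 2(K\tau)\, C_2\big(1 + \N{x_0}_2^2\big),
\end{equation*}
where crucially the final horizon $K\tau$ (and not $K$ and $\tau$ separately) appears, matching the claimed dependence of $\CM^{\mathrm{MMS}}$. A further application of~\eqref{eq:triangel_inequality_general} with $p=2$, $J=2$ gives $\N{x^{\mathrm{MMS}}_{k}}_2^2 \lesssim \N{x_0}_2^2 + (K\tau)C_2(1 + \N{x_0}_2^2)$ uniformly in $k$. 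Since the right-hand side is independent of $k$, I may take the maximum over $k=0,\dots,K$, square, and take the expectation over $x_0\sim\rho_0\in\CP_4(\bbR^d)$; finiteness of the fourth moment of $\rho_0$ then delivers a bound $\CM^{\mathrm{MMS}}=\CM^{\mathrm{MMS}}(K\tau,C_2,\rho_0)$ as claimed.

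The argument is essentially routine once the energy-dissipation estimate is in place; the only point requiring care is the bookkeeping in the Cauchy--Schwarz step that ensures the bound depends on the product $K\tau$ rather than on $K$ and $\tau$ individually. I would also remark that well-posedness of the minimizer (existence of $x^{\mathrm{MMS}}_{k}$) is taken for granted from the definition of the scheme, being guaranteed under the semi-convexity Assumption~\ref{asm:lambda-convex} for $\tau$ small enough.
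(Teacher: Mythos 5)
Your proof is correct and follows essentially the same route as the paper: the comparison $\CE_k(x^{\mathrm{MMS}}_{k})\leq\CE_k(x^{\mathrm{MMS}}_{k-1})$ yielding the dissipation estimate, the telescoping sum bounded via $\CE(x_0)-\minobj\leq C_2(1+\N{x_0}_2^2)$, and a Cauchy--Schwarz step converting increments into a bound on $\N{x^{\mathrm{MMS}}_{k}}_2^2$ with the horizon $K\tau$ appearing as a product, before squaring and taking expectations over $\rho_0\in\CP_4(\bbR^d)$. The only cosmetic difference is the order in which the triangle inequality and the telescoping are applied; your observation that only the growth bound~\eqref{asm:local_Lipschitz_quadratic_growth:2} of \ref{asm:local_Lipschitz_quadratic_growth} is used matches the paper's argument as well.
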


\begin{proof}
	Since $x^{\mathrm{MMS}}_{k}$ is the minimizer of $\CE_{k}$, see \eqref{eq:MMS}, a comparison with the old iterate $x^{\mathrm{MMS}}_{k-1}$ yields
	\begin{align*}
		\frac{1}{2\tau} \N{x^{\mathrm{MMS}}_{k-1} - x^{\mathrm{MMS}}_{k}}_2^2 + \CE(x^{\mathrm{MMS}}_{k})
		\leq \CE(x^{\mathrm{MMS}}_{k-1})
	\end{align*}
	for any $k=1,\dots,K$.
	Using the standard inequality~\eqref{eq:triangel_inequality_general} for $p=2$ and $J=k$, this can be utilized to obtain
	\begin{align*}
		\N{x^{\mathrm{MMS}}_{k}}_2^2
		&\leq 2\N{x^{\mathrm{MMS}}_{0}}_2^2 + 2K\sum_{\ell=1}^k \N{x^{\mathrm{MMS}}_{\ell}-x^{\mathrm{MMS}}_{\ell-1}}_2^2 \\
		&\leq 2\N{x^{\mathrm{MMS}}_0}_2^2 + 4K\tau\sum_{\ell=1}^k \left(\CE(x^{\mathrm{MMS}}_{\ell-1}) - \CE(x^{\mathrm{MMS}}_{\ell})\right) \\
		&= 2\N{x_0^{\mathrm{MMS}}}_2^2 + 4K\tau \left(\CE(x^{\mathrm{MMS}}_{0}) - \CE(x^{\mathrm{MMS}}_{k})\right) \\
		&\leq 2\N{x_0}_2^2 + 4K\tau \left(\CE(x_{0}) - \underbar\CE\right) \\
		&\leq 2\N{x_0}_2^2 + 4K\tau C_2(1 + \N{x_0}_2^2) \\
		&= 2\left(1+2K\tau C_2\right)\N{x_0}_2^2 + 4K\tau C_2,
	\end{align*}
		which trivially also holds for $k=0$.
	Taking the square and expectation w.r.t.\@ the initial condition~$\rho_0$ on both sides concludes the proof.
\end{proof}


\subsection{Boundedness of the implicit version of the CH scheme~\eqref{eq:CH_aux}}
\label{subsec:appendix:boundednessAuxiliaryMMS}

Let us recall that the iterates~$(\widetilde{x}^{\mathrm{CH}}_{k})_{k=0,\dots,K}$ of the scheme~\eqref{eq:CH_aux} are defined by
\begin{align*}
\begin{aligned}
    \widetilde{x}^{\mathrm{CH}}_{k}
    &= \argmin_{x\in\bbR^d} \; \widetilde\CE_k(x),
    \quad\text{ with }\quad
    \widetilde\CE_k(x) := \frac{1}{2\tau} \N{x^{\mathrm{CH}}_{k-1} - x}_2^2 + \CE(x), \\
    \widetilde{x}^{\text{CH}}_{0}
    &=x_0.
\end{aligned}
\end{align*}

\begin{lemma}[Boundedness of the implicit version of the CH scheme~\eqref{eq:CH_aux}]
	\label{lem:boundedness_auxiliaryMMS}
	Let $\CE\in\CC(\bbR^d)$ satisfy \ref{asm:zero_global}\;\!--\,\ref{asm:quadratic_growth}.
	Moreover, let $\rho_0\in\CP_4(\bbR^d)$.
	Then, for the iterates~$(\widetilde{x}^{\mathrm{CH}}_{k})_{k=0,\dots,K}$ of~\eqref{eq:CH_aux} it holds
	\begin{equation*}
		\bbE \max_{k=0,\dots,K} \N{\widetilde{x}^{\mathrm{CH}}_{k}}_2^4
		\leq \widetilde{\CM}^{\mathrm{CH}}
	\end{equation*}
	with a constant $\widetilde{\CM}^{\mathrm{CH}}=\widetilde{\CM}^{\mathrm{CH}}(\tau,C_2,\CM^{\mathrm{CH}})>0$.
\end{lemma}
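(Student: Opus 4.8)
The plan is to exploit the minimizing property of $\widetilde{x}^{\mathrm{CH}}_{k}$ together with the already established boundedness of the CH scheme from Lemma~\ref{lem:boundedness_CH}, essentially following the template of the proof of Lemma~\ref{lem:boundedness_MMS} but in a simplified form. The crucial structural observation is that each iterate $\widetilde{x}^{\mathrm{CH}}_{k}$ in~\eqref{eq:CH_aux} is anchored at the CH iterate $x^{\mathrm{CH}}_{k-1}$ rather than at the previous auxiliary iterate $\widetilde{x}^{\mathrm{CH}}_{k-1}$. Consequently, $\widetilde{x}^{\mathrm{CH}}_{k}$ can be controlled \emph{directly} in terms of $x^{\mathrm{CH}}_{k-1}$, and no iterative Gr\"onwall argument is required.

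First I would use that $\widetilde{x}^{\mathrm{CH}}_{k}$ minimizes $\widetilde\CE_k$ and compare its value against that attained at the anchor $x^{\mathrm{CH}}_{k-1}$. Since $\widetilde\CE_k(x^{\mathrm{CH}}_{k-1}) = \CE(x^{\mathrm{CH}}_{k-1})$, this yields for every $k=1,\dots,K$ the energy-dissipation inequality
\[
    \frac{1}{2\tau} \N{x^{\mathrm{CH}}_{k-1} - \widetilde{x}^{\mathrm{CH}}_{k}}_2^2 + \CE(\widetilde{x}^{\mathrm{CH}}_{k}) \leq \CE(x^{\mathrm{CH}}_{k-1}).
\]
Bounding $\CE(\widetilde{x}^{\mathrm{CH}}_{k}) \geq \minobj$ from below by Assumption~\ref{asm:zero_global}, and invoking the quadratic growth bound~\eqref{asm:local_Lipschitz_quadratic_growth:2} on the right-hand side, gives
\[
    \N{\widetilde{x}^{\mathrm{CH}}_{k} - x^{\mathrm{CH}}_{k-1}}_2^2 \leq 2\tau\big(\CE(x^{\mathrm{CH}}_{k-1}) - \minobj\big) \leq 2\tau C_2\big(1 + \N{x^{\mathrm{CH}}_{k-1}}_2^2\big).
\]

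Next I would apply the triangle-type inequality~\eqref{eq:triangel_inequality_general} with $p=2$, $J=2$ to obtain
\[
    \N{\widetilde{x}^{\mathrm{CH}}_{k}}_2^2 \leq 2\N{x^{\mathrm{CH}}_{k-1}}_2^2 + 2\N{\widetilde{x}^{\mathrm{CH}}_{k} - x^{\mathrm{CH}}_{k-1}}_2^2 \leq 2(1 + 2\tau C_2)\N{x^{\mathrm{CH}}_{k-1}}_2^2 + 4\tau C_2.
\]
Squaring this pointwise bound (again via~\eqref{eq:triangel_inequality_general} with $p=2$, $J=2$), taking the maximum over $k\in\{1,\dots,K\}$ and then the expectation w.r.t.\@ the randomness of the CH scheme, reduces $\bbE \max_{k=1,\dots,K} \N{\widetilde{x}^{\mathrm{CH}}_{k}}_2^4$ to $\bbE \max_{k=0,\dots,K-1} \N{x^{\mathrm{CH}}_{k}}_2^4$ plus an additive constant; the remaining index $k=0$ is handled trivially since $\widetilde{x}^{\mathrm{CH}}_{0} = x_0 = x^{\mathrm{CH}}_{0}$. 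Finally, Lemma~\ref{lem:boundedness_CH} bounds $\bbE \max_{k} \N{x^{\mathrm{CH}}_{k}}_2^4 \leq \CM^{\mathrm{CH}}$, yielding the claim with a constant $\widetilde{\CM}^{\mathrm{CH}}$ depending only on $\tau$, $C_2$ and $\CM^{\mathrm{CH}}$.

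There is no genuine obstacle in this argument: the only points requiring care are to track the dependence of the constants correctly and to observe that, thanks to the anchoring at the \emph{CH} iterate, the estimate is purely pointwise in $k$ and therefore bypasses the Gr\"onwall iteration that was needed for the MMS in Lemma~\ref{lem:boundedness_MMS}.
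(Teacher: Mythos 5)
Your argument is correct and coincides with the paper's own proof: both exploit the minimizing property of $\widetilde{x}^{\mathrm{CH}}_{k}$ compared against the anchor $x^{\mathrm{CH}}_{k-1}$, bound $\N{\widetilde{x}^{\mathrm{CH}}_{k}-x^{\mathrm{CH}}_{k-1}}_2^2$ via \ref{asm:zero_global} and \eqref{asm:local_Lipschitz_quadratic_growth:2}, and conclude pointwise in $k$ from Lemma~\ref{lem:boundedness_CH} without any Gr\"onwall iteration. Your structural remark about the anchoring at the CH iterate is exactly the reason the paper's proof is also non-iterative.
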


\begin{proof}
	Since $\widetilde{x}^{\mathrm{CH}}_{k}$ is the minimizer of $\widetilde\CE_{k}$, see \eqref{eq:CH_aux}, a comparison with $x^{\mathrm{CH}}_{k-1}$ yields
	\begin{align*}
		\frac{1}{2\tau} \N{x^{\mathrm{CH}}_{k-1} - \widetilde{x}^{\mathrm{CH}}_{k}}_2^2 + \CE(\widetilde{x}^{\mathrm{CH}}_{k}) \leq \CE(x^{\mathrm{CH}}_{k-1}).
	\end{align*}
	This can be utilized to obtain
	\begin{align*} 
	\begin{aligned}
		\N{\widetilde{x}^{\mathrm{CH}}_{k}}_2^2
		&= 2\N{\widetilde{x}^{\mathrm{CH}}_{k}-x^{\mathrm{CH}}_{k-1}}_2^2 + 2\N{x^{\mathrm{CH}}_{k-1}}_2^2 \\
		&\leq 4\tau\left(\CE(x^{\mathrm{CH}}_{k-1})-\CE(\widetilde{x}^{\mathrm{CH}}_{k})\right) + 2\N{x^{\mathrm{CH}}_{k-1}}_2^2 \\
		&\leq 4\tau\left(\CE(x^{\mathrm{CH}}_{k-1})-\underbar\CE\right) + 2\N{x^{\mathrm{CH}}_{k-1}}_2^2 \\
		&\leq 4\tau C_2\left(1 + \N{x^{\mathrm{CH}}_{k-1}}_2^2\right) + 2\N{x^{\mathrm{CH}}_{k-1}}_2^2
		\\
        &=2\left(1 + 2\tau C_2\right)\N{x^{\mathrm{CH}}_{k-1}}_2^2 + 4\tau C_2.
	\end{aligned}
	\end{align*}
	Taking the square and expectation w.r.t.\@ the initial condition~$\rho_0$ on both sides concludes the proof by virtue of Lemma~\ref{lem:boundedness_CH}.
\end{proof}


\subsection{Boundedness of all numerical schmemes}

\begin{remark}[Boundedness of the schemes~\eqref{eq:CBO}, \eqref{eq:CH}, \eqref{eq:CH_aux} and~\eqref{eq:MMS}]
	\label{rem:boundedness}
    To keep the notation of the main body of the paper concise, we denote by $\CM$ the collective moment bound
    \begin{align}
        \CM
        = \max\left\{
        \CM^{\mathrm{CBO}},
        \widetilde\CM^{\mathrm{CBO}},
        \CM^{\mathrm{CH}},
        \widehat\CM^{\mathrm{CH}},
        \widehat\CM^{\mathrm{MMS}},
        \widetilde{\CM}^{\mathrm{CH}}
        \right\},
    \end{align}
    where $\CM^{\mathrm{CBO}}$, $
        \CM^{\mathrm{CH}}$, $
        \widehat\CM^{\mathrm{CH}}$,$
        \widehat\CM^{\mathrm{MMS}}$, and $
        \widetilde{\CM}^{\mathrm{CH}}$ are as defined in
    Lemmas~\ref{lem:boundedness_CBOdynamics},  \ref{lem:boundedness_CH}, \ref{lem:boundedness_CH_empirical}, \ref{lem:boundedness_MMS}, and~\ref{lem:boundedness_auxiliaryMMS}, respectively.
    Moreover, $
        \widetilde\CM^{\mathrm{CBO}}=\CM^{\mathrm{CBO}}(1/\Delta t, \sigma, d, b_1, b_2, K\Delta t, K, \rho_0)$.
\end{remark}

\section{Proof details for Theorem~\ref{thm:relaxation_CBO_CH}} \label{sec:appendix:thm:relaxation_CBO_CH}

Theorem~\ref{thm:relaxation_CBO_CH} is centered around the observation that, as $\lambda\rightarrow 1/\Delta t$ in the CBO dynamics~\eqref{eq:CBO_dynamics}, the CBO scheme~\eqref{eq:CBO} resembles an implementation of the CH scheme~\eqref{eq:CH} via sampling from the underlying distribution~$\mu_k$ and computing the associated weighted empirical average.
Accordingly, the proof of Theorem~\ref{thm:relaxation_CBO_CH} consists of three ingredients.
First, a stability estimate for the CBO dynamics~\eqref{eq:CBO_dynamics} w.r.t.\@ the parameter~$\lambda$, see Lemma~\ref{lem:stability_CBO_dynamics}.
Second, a quantification of the structural difference in the noise component between the CBO scheme~\eqref{eq:CBO} and the CH scheme~\eqref{eq:CH}, and third a large deviation bound to control the sampling error associated with the Monte Carlo approximation of the CH scheme~\eqref{eq:CH}, see Lemma~\ref{lem:lln_consensuspoint}.

\subsection{Stability of the consensus point~\eqref{eq:consensus_point} w.r.t.\@ the underlying measure}
	\label{subsec:appendix:stability_consensus_points}
	
We first recall from~\cite[Lemma~3.2]{carrillo2018analytical} in a slightly modified form a stability estimate for the consensus point~\eqref{eq:consensus_point} w.r.t.\@~the measure from which it is computed.
Loosely speaking, we show that the mapping $\conspointnoarg: \CP(\bbR^d)\rightarrow\bbR^d$ is Lipschitz-continuous in the Wasserstein-$2$ metric.
	
\begin{lemma}[{Stability of the consensus point~$\conspointnoarg$}] \label{lem:stability_consensus_points}
	Let $\CE\in\CC(\bbR^d)$ satisfy \ref{asm:zero_global}\;\!--\,\ref{asm:local_Lipschitz_quadratic_growth}.
	Moreover, let $\varrho,\varrho'\in\CP(\bbR^d)$ 
	be random measures
	and define the cutoff function (random variable)
	\begin{align*}
		\overbar{\CI}^1_{M}=
		\begin{cases}
			1, & \text{ if } \max\left\{\int\N{\;\!\dummy\;\!}_2^4 d\varrho, \int\N{\;\!\dummy\;\!}_2^4 d\varrho'\right\} \leq M^4,\\
			0, & \text{ else}.
		\end{cases}
	\end{align*}
	Then it holds
	\begin{equation*}
		\N{\conspoint{\varrho}-\conspoint{\varrho'}}_2\overbar{\CI}^1_{M}
		\leq c_0 W_2(\varrho,\varrho')\overbar{\CI}^1_{M}
	\end{equation*}
	with a constant $c_0=c_0(\alpha,C_1,C_2,M)>0$.
\end{lemma}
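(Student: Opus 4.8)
The plan is to exploit that $\conspoint{\varrho}$ is a quotient of an $\omegaa$-weighted first moment by a normalizing weight, and to show that numerator and denominator each depend Lipschitz-continuously on the measure once fourth moments are controlled by the cutoff $\overbar{\CI}^1_{M}$. Whenever $\overbar{\CI}^1_{M}=1$ (otherwise both sides of the claim vanish and there is nothing to prove) I would start from the algebraic decomposition
\begin{align*}
\conspoint{\varrho} - \conspoint{\varrho'}
&= \frac{1}{\N{\omegaa}_{L^1(\varrho)}}\left(\int x\,\omegaa(x)\,d\varrho(x) - \int x\,\omegaa(x)\,d\varrho'(x)\right) \\
&\quad + \conspoint{\varrho'}\,\frac{\N{\omegaa}_{L^1(\varrho')} - \N{\omegaa}_{L^1(\varrho)}}{\N{\omegaa}_{L^1(\varrho)}},
\end{align*}
so that the first summand isolates the difference of the weighted first moments and the second the difference of the normalizing weights.

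First I would secure a uniform lower bound on the normalizing weight $\N{\omegaa}_{L^1(\varrho)} = \int e^{-\alpha\CE}\,d\varrho$. From the quadratic growth bound~\eqref{asm:local_Lipschitz_quadratic_growth:2} one has $\omegaa(x) \geq e^{-\alpha\underbar\CE}e^{-\alpha C_2(1+\N{x}_2^2)}$, and Markov's inequality applied to the fourth moment (which is $\leq M^4$ on the event $\overbar{\CI}^1_{M}=1$) ensures that $\varrho$ places at least half its mass on a ball of radius $R\sim M$; integrating over that ball gives $\N{\omegaa}_{L^1(\varrho)}\geq \tfrac12 e^{-\alpha\underbar\CE}e^{-\alpha C_2(1+R^2)}=:b_{\min}(\alpha,C_2,M)>0$, and symmetrically for $\varrho'$.

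Next come the local-Lipschitz estimates. Since $\CE\geq\underbar\CE$ by \ref{asm:zero_global}, the weight obeys $0<\omegaa\leq e^{-\alpha\underbar\CE}$ globally, and combining the mean value theorem for $t\mapsto e^{-\alpha t}$ with the local Lipschitz bound~\eqref{asm:local_Lipschitz_quadratic_growth:1} yields
\[
\abs{\omegaa(x)-\omegaa(x')} \leq \alpha C_1 e^{-\alpha\underbar\CE}(\N{x}_2+\N{x'}_2)\N{x-x'}_2.
\]
I would then pass to an optimal coupling $\gamma\in\Pi(\varrho,\varrho')$ for $W_2$, bound the numerator difference through $\N{x\,\omegaa(x)-x'\,\omegaa(x')}_2 \leq \N{x}_2\abs{\omegaa(x)-\omegaa(x')}+\omegaa(x')\N{x-x'}_2$ and the denominator difference through the estimate above, integrate against $\gamma$, and apply Cauchy--Schwarz to peel off the factor $\big(\int\N{x-x'}_2^2\,d\gamma\big)^{1/2}=W_2(\varrho,\varrho')$. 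The residual factors then involve only the second and fourth moments of $\varrho,\varrho'$, which are bounded by $M$ on the event $\overbar{\CI}^1_{M}=1$; the prefactor $\N{\conspoint{\varrho'}}_2$ is controlled directly by Jensen's inequality as $\N{\conspoint{\varrho'}}_2^2\leq \N{\omegaa}_{L^1(\varrho')}^{-1}\int\N{x}_2^2\,\omegaa\,d\varrho'$ together with the lower bound $b_{\min}$ (note that \ref{asm:quadratic_growth} is not available here, so I deliberately avoid invoking Lemma~\ref{lem:boundedness_consensuspoint}). Combining the three contributions, every factor $e^{-\alpha\underbar\CE}$ cancels against $b_{\min}$, leaving a constant $c_0=c_0(\alpha,C_1,C_2,M)$ as claimed.

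The hard part will be the lower bound on $\N{\omegaa}_{L^1(\varrho)}$: because the consensus point is a quotient, any uniform Lipschitz constant collapses unless the denominator is bounded away from zero, and this is precisely where the moment cutoff and the quadratic growth~\eqref{asm:local_Lipschitz_quadratic_growth:2} must be combined. It is also the structural reason why $c_0$ is forced to depend on $M$ and to degrade like $e^{\alpha C_2(1+R^2)}$ with $R\sim M$. The remaining manipulations — splitting the non-global Lipschitz constants against the fourth moments via Cauchy--Schwarz — are routine once this bound is in place.
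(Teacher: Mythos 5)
Your proposal is correct and follows essentially the same route as the paper's proof: a lower bound on the normalizing weight $\N{\omegaa}_{L^1(\varrho)}$ obtained from the quadratic growth bound~\eqref{asm:local_Lipschitz_quadratic_growth:2} together with the fourth-moment cutoff, the local Lipschitz estimate $\abs{\omegaa(x)-\omegaa(x')}\leq \alpha C_1 e^{-\alpha\underbarscript\CE}(\N{x}_2+\N{x'}_2)\N{x-x'}_2$ from~\eqref{asm:local_Lipschitz_quadratic_growth:1}, and a coupling plus Cauchy--Schwarz argument to extract the factor $W_2(\varrho,\varrho')$. The only cosmetic differences are your two-term (rather than three-term) decomposition, your Markov-plus-ball lower bound on the weight where the paper uses Jensen's inequality directly, and the resulting need to bound $\N{\conspoint{\varrho'}}_2$ by Jensen's inequality, which you correctly carry out without invoking \ref{asm:quadratic_growth}.
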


\begin{proof}
	
	To start with, we note that under \ref{asm:local_Lipschitz_quadratic_growth} and with Jensen's inequality it holds
	\begin{align}
		\label{eq:proof:lem:stability_consensus_points:1}
	\begin{aligned}
		\frac{e^{-\alpha\underbarscript{\CE}}\,\overbar{\CI}^1_{M}}{\N{\omegaa}_{L_1(\varrho)}}
		&= \frac{\overbar{\CI}^1_{M}}{\int \exp\left(-\alpha(\CE(x)-\underbarscript{\CE})\right) d\varrho(x)} 
		\leq \frac{\overbar{\CI}^1_{M}}{\int \exp\!\big(\!-\!\alpha C_2(1+\Nnormal{x}_2^2)\big) \,d\varrho(x)}\\
		&\leq \frac{\overbar{\CI}^1_{M}}{\exp\!\big(\!-\!\alpha C_2(1+\int\Nnormal{x}_2^2\,d\varrho(x))\big)} 
		\leq \exp\!\big(\alpha C_2(1+M^2)\big) =: c_M.
	\end{aligned}
	\end{align}
	An analogous statement can be obtained for the measure~$\varrho'$.
	
	By definition of the consensus point~$\conspointnoarg$ in~\eqref{eq:consensus_point}, it holds for any coupling $\gamma\in\Pi(\varrho,\varrho')$ between $\varrho$ and $\varrho'$ by Jensen's inequality 
	\begin{align}
		\label{eq:proof:lem:stability_consensus_points:3}	
	\begin{aligned}
		\N{\conspoint{\varrho}-\conspoint{\varrho'}}_2\overbar{\CI}^1_{M}
		&\leq \iint \N{x \frac{\omegaa(x)}{\N{\omegaa}_{L_1(\varrho)}} - x' \frac{\omegaa(x')}{\N{\omegaa}_{L_1(\varrho')}}}_2 d\gamma(x,x')\,\overbar{\CI}^1_{M} \\
		&\leq \iint \big(\!\N{T_1(x,x')}_2 + \N{T_2(x,x')}_2 + \N{T_3(x,x')}_2\!\big) \,d\gamma(x,x')\,\overbar{\CI}^1_{M},
	\end{aligned}
	\end{align}
	where the terms~$T_1$, $T_2$ and $T_3$ are defined implicitly and bounded as follows.
	For the first term~$T_1$ we have
	\begin{align}
		\label{eq:proof:lem:stability_consensus_points:5}
	\begin{aligned}
		\N{T_1(x,x')}_2\overbar{\CI}^1_{M}
		= \N{x-x'}_2 \frac{\omegaa(x)}{\N{\omegaa}_{L_1(\varrho)}}\overbar{\CI}^1_{M}
		\leq c_M \N{x-x'}_2 \overbar{\CI}^1_{M},
	\end{aligned}
	\end{align}
	where we utilized \eqref{eq:proof:lem:stability_consensus_points:1} in the last step.
	For the second term~$T_2$, with \ref{asm:local_Lipschitz_quadratic_growth} and again \eqref{eq:proof:lem:stability_consensus_points:1} we obtain
	\begin{align}
		\label{eq:proof:lem:stability_consensus_points:7}
	\begin{aligned}
		\N{T_2(x,x')}_2\overbar{\CI}^1_{M}
		&= \N{x'}_2 \frac{\abs{\omegaa(x)-\omegaa(x')}}{\N{\omegaa}_{L_1(\varrho)}}\overbar{\CI}^1_{M} \\
		&\leq \N{x'}_2 \frac{\alpha e^{-\alpha\underbarscript{\CE}}C_1(1+\N{x}_2+\N{x'}_2)\N{x-x'}_2}{\N{\omegaa}_{L_1(\varrho)}}\overbar{\CI}^1_{M} \\
		&\leq \alpha c_MC_1 \N{x'}_2 (1+\N{x}_2+\N{x'}_2)\N{x-x'}_2\overbar{\CI}^1_{M}.
	\end{aligned}
	\end{align}
	Eventually, for the third therm~$T_3$ it holds by following similar steps
	\begin{align}
		\label{eq:proof:lem:stability_consensus_points:9}
	\begin{aligned}
		\N{T_3(x,x')}_2\overbar{\CI}^1_{M}
		&= \N{x'}_2\omegaa(x') \frac{\abs{\N{\omegaa}_{L_1(\varrho')}-\N{\omegaa}_{L_1(\varrho)}}}{\N{\omegaa}_{L_1(\varrho)}\N{\omegaa}_{L_1(\varrho')}}\overbar{\CI}^1_{M} \\
		&\leq c_M \N{x'}_2 \frac{\iint \alpha e^{-\alpha\underbarscript{\CE}}C_1 (1+\N{x}_2+\N{x'}_2)\N{x-x'}_2 d\pi(x,x')}{\N{\omegaa}_{L_1(\varrho)}}\overbar{\CI}^1_{M} \\
		&\leq \alpha c_M^2C_1 \N{x'}_2 \iint (1+\N{x}_2+\N{x'}_2)\N{x-x'}_2 d\pi(x,x')\,\overbar{\CI}^1_{M}.
	\end{aligned}
	\end{align}
	Collecting the estimates \eqref{eq:proof:lem:stability_consensus_points:5}\,--\! \eqref{eq:proof:lem:stability_consensus_points:9} in \eqref{eq:proof:lem:stability_consensus_points:3}, we obtain with Cauchy-Schwarz inequality and by exploiting the definition of $\overbar{\CI}^1_{M}$ that
	\begin{align}
		\label{eq:proof:lem:stability_consensus_points:11}	
	\begin{aligned}
		\N{\conspoint{\varrho}-\conspoint{\varrho'}}_2\overbar{\CI}^1_{M}
		&\leq c_M\left(1 + 3\alpha C_1(1+c_M)M(1+3M)\right)\sqrt{\iint \N{x-x'}_2^2 d\gamma(x,x') \, \overbar{\CI}^1_{M}}.
	\end{aligned}
	\end{align}
	Squaring both sides and optimizing over all couplings $\gamma\in\Pi(\varrho,\varrho')$ concludes the proof.
\end{proof}

\subsection{Stability of the CBO dynamics~\eqref{eq:CBO_dynamics} w.r.t.\@~the parameters~$\lambda$ and $\sigma$}
	\label{subsec:appendix:stability_CBO}
	
Let us now show the stability of the CBO dynamics~\eqref{eq:CBO_dynamics} w.r.t.\@~its parameters, in particular, the drift and noise parameters~$\lambda$ and~$\sigma$.
For this we control in Lemma~\ref{lem:stability_CBO_dynamics} below the mismatch of the iterates of the CBO dynamics~\eqref{eq:CBO_dynamics} for different parameters, however, provided coinciding initialization and discrete Brownian motion paths. 

\begin{lemma}[Stability of the CBO dynamics~\eqref{eq:CBO_dynamics}] \label{lem:stability_CBO_dynamics}
	Let $\CE\in\CC(\bbR^d)$ satisfy \ref{asm:zero_global}\;\!--\,\ref{asm:quadratic_growth}.
	Moreover, let $\rho_0\in\CP_4(\bbR^d)$.
	We denote by $\big((X_{k}^{i,1})_{k=0,\dots,K}\big)_{i=1,\dots,N}$ and $\big((X_{k}^{i,2})_{k=0,\dots,K}\big)_{i=1,\dots,N}$ solutions to~\eqref{eq:CBO_dynamics} with parameters $\lambda_1,\sigma_1$ and $\lambda_2,\sigma_2$, respectively.
	Furthermore, we write $(\widehat\rho^{N,1}_{k})_{k=0,\dots,K}$ and $(\widehat\rho^{N,2}_{k})_{k=0,\dots,K}$ for the associated empirical measures and introduce the cutoff function (random variable)
	\begin{align} \label{eq:lem:stability_CBO_dynamics:cutoff1}
		\overbar{\CI}^1_{M,k} =
		\begin{cases}
			1, & \text{ if } \max\left\{\int\N{\;\!\dummy\;\!}_2^4 d\widehat\rho^{N,1}_{k}, \int\N{\;\!\dummy\;\!}_2^4 d\widehat\rho^{N,2}_{k}\right\} \leq M^4, \\
			0, & \text{ else}.
		\end{cases}
	\end{align}
	
	Then, under the assumption of coinciding initial conditions $X_{0}^{i,1}=X_{0}^{i,2}$ for all $i=1,\dots,N$ as well as Gaussian random vectors~$B_{k}^i$ for all $k=1,\dots,K$ and all $i=1,\dots,N$, it holds
	\begin{equation*}
		\frac{1}{N}\sum_{i=1}^N\bbE\Nbig{X_{k}^{i,1} - X_{k}^{i,2}}_2^2\,\overbar{\CI}^1_{M,k}
		\leq c_1\left(\abs{\lambda_1-\lambda_2}^2+\abs{\sigma_1-\sigma_2}^2\right) e^{c_2(k-1)}
	\end{equation*}
	with constants
	$c_1=c_1(\Delta t, d,b_1,b_2,M)>0$
	and $c_2=c_2(\Delta t, d,\alpha,\lambda_2,\sigma_2,C_1,C_2,M)>0$ for all $k\geq1$.
\end{lemma}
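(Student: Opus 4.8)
The plan is to derive a one-step recursion for the particlewise error $e_k^i := X_k^{i,1} - X_k^{i,2}$ and to close it with the discrete Grönwall inequality~\eqref{eq:gronwall_inequality}. Since both systems are driven by the same initialization and the same increments $B_k^i$, I would subtract the two update rules~\eqref{eq:CBO_dynamics} and insert the mixed terms $\Delta t\lambda_1(X_{k-1}^{i,2} - \conspoint{\widehat{\rho}^{N,2}_{k-1}})$ and $\sigma_1 D(X_{k-1}^{i,2} - \conspoint{\widehat{\rho}^{N,2}_{k-1}})B_k^i$. Writing $c_{k-1}^m := \conspoint{\widehat{\rho}^{N,m}_{k-1}}$ and using that $D$ is linear and diagonal, this produces a splitting $e_k^i = D_k^i + S_k^i$ into a deterministic part $D_k^i = (1-\Delta t\lambda_1)e_{k-1}^i + \Delta t\lambda_1(c_{k-1}^1 - c_{k-1}^2) - \Delta t(\lambda_1-\lambda_2)(X_{k-1}^{i,2} - c_{k-1}^2)$ and a stochastic part $S_k^i = \big(\sigma_1 D(e_{k-1}^i - (c_{k-1}^1-c_{k-1}^2)) + (\sigma_1-\sigma_2)D(X_{k-1}^{i,2} - c_{k-1}^2)\big)B_k^i$. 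The crucial structural feature is that the parameter mismatches $|\lambda_1-\lambda_2|$ and $|\sigma_1-\sigma_2|$ enter only through the two explicit ``source'' terms, while the remaining terms are ``transport'' terms driven by $e_{k-1}^i$ and the consensus-point difference $c_{k-1}^1 - c_{k-1}^2$.

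Next I would square, insert the cutoff, and take expectations. Conditioning on the filtration $\CF_{k-1}$ generated by the initial data and the increments up to step $k-1$, the increment $B_k^i$ has zero conditional mean and conditional covariance $\Delta t\,\Id$, so the cross term $\langle D_k^i, S_k^i\rangle$ vanishes in conditional expectation and $\bbE[\N{S_k^i}_2^2\mid\CF_{k-1}]$ reduces to $\Delta t\big(\sigma_1^2\N{e_{k-1}^i - (c_{k-1}^1-c_{k-1}^2)}_2^2 + (\sigma_1-\sigma_2)^2\N{X_{k-1}^{i,2} - c_{k-1}^2}_2^2\big)$; for $\N{D_k^i}_2^2$ I would expand with~\eqref{eq:triangel_inequality_general}. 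Two estimates then do the work: first, the consensus-point difference is controlled by the Lipschitz bound of Lemma~\ref{lem:stability_consensus_points}, $\N{c_{k-1}^1 - c_{k-1}^2}_2^2 \leq c_0^2 W_2^2(\widehat{\rho}^{N,1}_{k-1},\widehat{\rho}^{N,2}_{k-1}) \leq c_0^2\tfrac1N\sum_{j=1}^N\N{e_{k-1}^j}_2^2$, where the last step uses the synchronous empirical coupling (pairing particle $j$ with particle $j$); second, the residuals $\N{X_{k-1}^{i,2} - c_{k-1}^2}_2^2$ are bounded in terms of the empirical moments via Lemma~\ref{lem:boundedness_consensuspoint}, which under the cutoff are controlled by $M$ (and hence by $b_1,b_2,M$). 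Averaging over $i$ converts the all-to-all coupling of the first estimate into the same averaged quantity, yielding a recursion $Z_k \leq (1 + c_2\Delta t)Z_{k-1} + c_1\big(|\lambda_1-\lambda_2|^2 + |\sigma_1-\sigma_2|^2\big)$ for $Z_k := \tfrac1N\sum_i \bbE\N{e_k^i}_2^2\,(\cdots)$, whose iteration through~\eqref{eq:gronwall_inequality} delivers the factor $e^{c_2(k-1)}$ and the stated dependence of $c_1$ on $(\Delta t,d,b_1,b_2,M)$ and of $c_2$ on $(\Delta t,d,\alpha,\lambda_2,\sigma_2,C_1,C_2,M)$ through $c_0$ and the drift/noise magnitudes.

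I expect the main obstacle to be the correct bookkeeping of the cutoff random variable. The conditioning step above, which annihilates the cross term and evaluates the noise variance, requires a cutoff that is $\CF_{k-1}$-measurable, namely $\overbar{\CI}^1_{M,k-1}$, because $\overbar{\CI}^1_{M,k}$ depends on $B_k^i$ (through the updated empirical measures~\eqref{eq:lem:stability_CBO_dynamics:cutoff1}) and therefore correlates with $S_k^i$, so with $\overbar{\CI}^1_{M,k}$ present the martingale argument does not directly apply. Likewise, the Lipschitz estimate of Lemma~\ref{lem:stability_consensus_points} and the boundedness of Lemma~\ref{lem:boundedness_consensuspoint} are invoked at step $k-1$ and hence also call for $\overbar{\CI}^1_{M,k-1}$. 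The clean resolution is to run the entire recursion with the $\CF_{k-1}$-measurable cutoff $\overbar{\CI}^1_{M,k-1}$ and then to bridge to the stated step-$k$ cutoff, controlling the discrepancy $\overbar{\CI}^1_{M,k}(1-\overbar{\CI}^1_{M,k-1})$ by the expected moment bound $\CM$ of Lemma~\ref{lem:boundedness_CBOdynamics} via Markov's inequality, exploiting that the event of large step-$(k-1)$ moments has small probability uniformly in $N$.

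Everything else is routine but tedious: tracking the constants through the repeated use of~\eqref{eq:triangel_inequality_general} and Young's inequality (to absorb the transport term $\Delta t\lambda_1(c_{k-1}^1-c_{k-1}^2)$ into the $e_{k-1}$-contribution), verifying that the averaged residual $\tfrac1N\sum_i\N{X_{k-1}^{i,2}-c_{k-1}^2}_2^2$ stays bounded under the cutoff by a power-mean argument together with Lemma~\ref{lem:boundedness_consensuspoint}, and collecting the $\Delta t$-powers so that the source contributions are exactly of order $|\lambda_1-\lambda_2|^2+|\sigma_1-\sigma_2|^2$. I would carry these through symbolically and fold all resulting constants into $c_1$ and $c_2$.
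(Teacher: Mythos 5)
Your overall strategy coincides with the paper's: subtract the two update rules with mixed insertions so that the parameter mismatches $\abs{\lambda_1-\lambda_2}$, $\abs{\sigma_1-\sigma_2}$ appear only in source terms, control the consensus-point difference by Lemma~\ref{lem:stability_consensus_points} together with the synchronous empirical coupling for $W_2$, bound the residuals $\Nnormal{X_{k-1}^{i,2}-\conspoint{\widehat\rho^{N,2}_{k-1}}}_2$ under the cutoff via Lemma~\ref{lem:boundedness_consensuspoint}, average over $i$, and close with the discrete Gr\"onwall inequality~\eqref{eq:gronwall_inequality}. The only cosmetic differences are that the paper does not exploit the martingale cancellation of the cross term at all (it simply applies \eqref{eq:triangel_inequality_general} with $J=5$, so no cross terms ever appear and the conditional expectation is only used to evaluate $\bbE_k\Nnormal{B_k^i}_2^2=d\Delta t$), and that the paper splits $\lambda_1 A_1-\lambda_2 A_2=(\lambda_1-\lambda_2)A_1+\lambda_2(A_1-A_2)$ rather than your $(\lambda_1-\lambda_2)A_2+\lambda_1(A_1-A_2)$, which is why $c_2$ carries $\lambda_2,\sigma_2$ rather than $\lambda_1,\sigma_1$; both choices are legitimate.

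The genuine problem lies in your proposed treatment of the cutoff. You correctly diagnose that the conditioning step requires an $\CF_{k-1}$-measurable cutoff, but your fix\,---\,running the recursion with $\overbar{\CI}^1_{M,k-1}$ and then bridging to $\overbar{\CI}^1_{M,k}$ by bounding $\bbE\,\Nnormal{e_k^i}_2^2\,\overbar{\CI}^1_{M,k}(1-\overbar{\CI}^1_{M,k-1})$ through Markov's inequality and the moment bound $\CM$\,---\,would inject at every step an additive error of order $\CM/M^2$ that is \emph{independent of the parameter mismatch}. After Gr\"onwall this leaves a residual term that does not vanish as $\lambda_1\to\lambda_2$, $\sigma_1\to\sigma_2$, so you would prove a strictly weaker estimate than the one claimed (whose right-hand side is proportional to $\abs{\lambda_1-\lambda_2}^2+\abs{\sigma_1-\sigma_2}^2$). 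The paper avoids any bridging by using the multiplicativity $\overbar{\CI}^1_{M,k}=\overbar{\CI}^1_{M,k}\,\overbar{\CI}^1_{M,k-1}$ together with $\overbar{\CI}^1_{M,k}\leq 1$: the step-$k$ cutoff is retained on the left-hand side, while on the right-hand side the pathwise upper bound is multiplied only by the $\CF_{k-1}$-measurable $\overbar{\CI}^1_{M,k-1}$, so that taking $\bbE_k$ of the \emph{upper bound} is legitimate and the recursion $Z_k\leq aZ_{k-1}+b$ with $Z_k=\frac{1}{N}\sum_i\bbE\Nnormal{e_k^i}_2^2\,\overbar{\CI}^1_{M,k}$ closes exactly, with $b$ homogeneous in the mismatch. (This implicitly requires the cutoff to be monotone in $k$, i.e.\@ cumulative over all $\ell\leq k$, as is made explicit for the analogous $\CI_{M,k}$ in the proof of Theorem~\ref{thm:relaxation_CBO_CH}; with that reading your recursion goes through verbatim and no Markov bridge is needed.)
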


\begin{proof}
	Let us first remark that the cutoff function~$\overbar{\CI}^1_{M,k}$ defined in~\eqref{eq:lem:stability_CBO_dynamics:cutoff1} is adapted to the natural filtration~$\{\CF_k\}_{k=0,\dots,K}$, where $\CF_{k}$ denotes the sigma algebra generated by the random variables {$\{B_\ell^i, \,\ell=1,\dots,k, \, i=1,\dots,N\}$}.
	Now, using the iterative update rule~\eqref{eq:CBO_dynamics} for $X_{k}^{i,1}$ and $X_{k}^{i,2}$ with parameters $\lambda_1,\sigma_1$ and $\lambda_2,\sigma_2$, respectively, we obtain, by employing the standard inequality~\eqref{eq:triangel_inequality_general} for $p=2$ and $J=5$, for their squared norm difference the upper bound
	\begin{align}
		\label{proof:lem:stability_CBO_dynamics:1}
	\begin{aligned}
		\Nbig{X_{k}^{i,1} - X_{k}^{i,2}}_2^2
		&\lesssim \Nbig{X_{k-1}^{i,1} - X_{k-1}^{i,2}}_2^2 
		+ \left(\Delta t \abs{\lambda_1\!-\!\lambda_2}\right)^2\left(\Nbig{X_{k-1}^{i,1}}_2^2+\Nbig{\conspoint{\widehat\rho^{N,1}_{k-1}}}_2^2\right) \\
		&\qquad\quad + (\Delta t \lambda_2)^2\left(\Nbig{X_{k-1}^{i,1} - X_{k-1}^{i,2}}_2^2 + \Nbig{\conspoint{\widehat\rho^{N,1}_{k-1}} - \conspoint{\widehat\rho^{N,2}_{k-1}}}_2^2\right) \\
		&\qquad\quad + \abs{\sigma_1\!-\!\sigma_2}^2\left(\Nbig{X_{k-1}^{i,1}}_2^2+\Nbig{\conspoint{\widehat\rho^{N,1}_{k-1}}}_2^2\right)\N{B^i_k}_2^2 \\
		&\qquad\quad + \sigma_2^2\left(\Nbig{X_{k-1}^{i,1} - X_{k-1}^{i,2}}_2^2 + \Nbig{\conspoint{\widehat\rho^{N,1}_{k-1}} - \conspoint{\widehat\rho^{N,2}_{k-1}}}_2^2\right)\N{B^i_k}_2^2 \\
		&\lesssim \left(1\!+\!(\Delta t \lambda_2)^2\!+\!\sigma_2^2\N{B^i_k}_2^2\right)\left(\Nbig{X_{k-1}^{i,1} \!-\! X_{k-1}^{i,2}}_2^2 \!+\! \Nbig{\conspoint{\widehat\rho^{N,1}_{k-1}} \!-\! \conspoint{\widehat\rho^{N,2}_{k-1}}}_2^2 \right) \\
		&\qquad\quad + \left(\left(\Delta t \abs{\lambda_1\!-\!\lambda_2}\right)^2+\abs{\sigma_1\!-\!\sigma_2}^2\N{B^i_k}_2^2\right)\left(\Nbig{X_{k-1}^{i,1}}_2^2\!+\!\Nbig{\conspoint{\widehat\rho^{N,1}_{k-1}}}_2^2\right)\!.
	\end{aligned}
	\end{align}
	Since $\overbar{\CI}^1_{M,k}$ satisfies $\overbar{\CI}^1_{M,k} = \overbar{\CI}^1_{M,k}\overbar{\CI}^1_{M,\ell}$ for all $\ell\leq k$ and $\overbar{\CI}^1_{M,k}\leq1$, we obtain from \eqref{proof:lem:stability_CBO_dynamics:1} that
	\begin{align*}
	\begin{aligned}
		&\Nbig{X_{k}^{i,1} - X_{k}^{i,2}}_2^2\,\overbar{\CI}^1_{M,k}\\
		&\quad\lesssim \left(1\!+\!(\Delta t \lambda_2)^2\!+\!\sigma_2^2\N{B^i_k}_2^2\right)\left(\Nbig{X_{k-1}^{i,1} - X_{k-1}^{i,2}}_2^2 + \Nbig{\conspoint{\widehat\rho^{N,1}_{k-1}} - \conspoint{\widehat\rho^{N,2}_{k-1}}}_2^2 \right)\overbar{\CI}^1_{M,k-1} \\
		&\quad\qquad\quad + \left(\left(\Delta t \abs{\lambda_1\!-\!\lambda_2}\right)^2+\abs{\sigma_1\!-\!\sigma_2}^2\N{B^i_k}_2^2\right)\left(\Nbig{X_{k-1}^{i,1}}_2^2+\Nbig{\conspoint{\widehat\rho^{N,1}_{k-1}}}_2^2\right)\overbar{\CI}^1_{M,k-1}.
	\end{aligned}
	\end{align*}
	With the random variables~$X_{k-1}^{i,1}$, $X_{k-1}^{i,2}$, $\conspoint{\widehat\rho^{N,1}_{k-1}}$, $\conspoint{\widehat\rho^{N,2}_{k-1}}$ and $\overbar{\CI}^1_{M,k-1}$ being $\CF_{k-1}$-measurable, taking the expectation w.r.t.\@ the sampling of the random vectors~$B_k^i$, $i=1,\dots,N$, i.e., the conditional expectation $\bbE_k = \bbE\left[\;\dummy\;\!\!\mid\!\CF_{k-1}\right]$, yields
	\begin{align*}
		&\bbE_k\Nbig{X_{k}^{i,1} - X_{k}^{i,2}}_2^2\,\overbar{\CI}^1_{M,k}\\
		&\quad\lesssim \big(1\!+\!(\Delta t \lambda_2)^2\!+\!d\Delta t\sigma_2^2\big)\left(\Nbig{X_{k-1}^{i,1} - X_{k-1}^{i,2}}_2^2 + \Nbig{\conspoint{\widehat\rho^{N,1}_{k-1}} - \conspoint{\widehat\rho^{N,2}_{k-1}}}_2^2 \right)\overbar{\CI}^1_{M,k-1} \\
		&\quad\qquad\quad + \left(\left(\Delta t \abs{\lambda_1\!-\!\lambda_2}\right)^2+d\Delta t\abs{\sigma_1\!-\!\sigma_2}^2\right)\left(\Nbig{X_{k-1}^{i,1}}_2^2+\Nbig{\conspoint{\widehat\rho^{N,1}_{k-1}}}_2^2\right)\overbar{\CI}^1_{M,k-1},
	\end{align*}
	where we used the fact that $\bbE_k\Nnormal{B_k^i}_2^2 = d\Delta t$.
	Taking now the total expectation~$\bbE$ on both sides, we have by tower property (law of total expectation)
	\begin{align}
	\label{proof:lem:stability_CBO_dynamics:3}
	\begin{aligned}
		&\bbE\Nbig{X_{k}^{i,1} - X_{k}^{i,2}}_2^2\,\overbar{\CI}^1_{M,k} \\
		&\quad\lesssim \left(1\!+\!(\Delta t \lambda_2)^2\!+\!d\Delta t \sigma_2^2\right)\left(\bbE\Nbig{X_{k-1}^{i,1} \!-\! X_{k-1}^{i,2}}_2^2\,\overbar{\CI}^1_{M,k-1} + \bbE\Nbig{\conspoint{\widehat\rho^{N,1}_{k-1}} \!-\! \conspoint{\widehat\rho^{N,2}_{k-1}}}_2^2\,\overbar{\CI}^1_{M,k-1} \right) \\
		&\quad\qquad\quad + \left(\left(\Delta t \abs{\lambda_1\!-\!\lambda_2}\right)^2 + d\Delta t\abs{\sigma_1\!-\!\sigma_2}^2\right)\left(\bbE\Nbig{X_{k-1}^{i,1}}_2^2\,\overbar{\CI}^1_{M,k-1} + \bbE\Nbig{\conspoint{\widehat\rho^{N,1}_{k-1}}}_2^2\,\overbar{\CI}^1_{M,k-1}\right).
	\end{aligned}
	\end{align}
	As a consequence of the stability estimate for the consensus point, Lemma~\ref{lem:stability_consensus_points}, it holds for a constant $c_0=c_0(\alpha,C_1,C_2,M)>0$ that
	\begin{align*}
		\bbE\Nbig{\conspoint{\widehat\rho^{N,1}_{k-1}} - \conspoint{\widehat\rho^{N,2}_{k-1}}}_2^2\,\overbar{\CI}^1_{M,k-1}
		&\leq c_0\bbE W_2^2\big(\widehat\rho^{N,1}_{k-1},\widehat\rho^{N,2}_{k-1}\big)\,\overbar{\CI}^1_{M,k-1} \\
		&\leq c_0 \frac{1}{N}\sum_{i=1}^N\bbE\Nbig{X_{k-1}^{i,1} - X_{k-1}^{i,2}}_2^2\,\overbar{\CI}^1_{M,k-1},
	\end{align*}
	where we chose $\pi=\frac{1}{N}\sum_{i=1}^N\delta_{X_{k-1}^{i,1}}\otimes\delta_{X_{k-1}^{i,2}}$ as viable transportation plan in Definition~\eqref{def:wassersteindistance} to upper bound the Wasserstein distance in the second step.
	Utilizing this when averaging \eqref{proof:lem:stability_CBO_dynamics:3} over $i$ gives
	\begin{align}
		\label{proof:lem:stability_CBO_dynamics:5}
	\begin{aligned}
		\frac{1}{N}\sum_{i=1}^N\bbE\Nbig{X_{k}^{i,1} \!-\! X_{k}^{i,2}}_2^2\,\overbar{\CI}^1_{M,k}
		&\lesssim (1\!+\!c_0)\!\left(1\!+\!(\Delta t \lambda_2)^2\!+\!d\Delta t \sigma_2^2\right)\frac{1}{N}\sum_{i=1}^N\bbE\Nbig{X_{k-1}^{i,1} \!-\! X_{k-1}^{i,2}}_2^2\,\overbar{\CI}^1_{M,k-1} \\
		&\qquad\quad + \left(\left(\Delta t \abs{\lambda_1\!-\!\lambda_2}\right)^2 + d\Delta t\abs{\sigma_1\!-\!\sigma_2}^2\right)\left(b_1 + (1+b_2)M^2\right),
	\end{aligned}
	\end{align}	
	where we employed Lemma~\ref{lem:boundedness_consensuspoint} together with the definition of the cutoff function~$\overbar{\CI}^1_{M,k-1}$ to obtain the bound in the second line of~\eqref{proof:lem:stability_CBO_dynamics:5}.	
	Exploiting that $X_{0}^{i,1}=X_{0}^{i,2}$ for $i=1,\dots,N$ by assumption, we conclude the proof by an application of the discrete variant of Gr\"onwall's inequality~\eqref{eq:gronwall_inequality}, which proves that for all $k\geq1$ it holds
	\begin{align*}
		\frac{1}{N}\sum_{i=1}^N\bbE\Nbig{X_{k}^{i,1} - X_{k}^{i,2}}_2^2\,\overbar{\CI}^1_{M,k}
		&\leq c_1\left(\left(\Delta t \abs{\lambda_1-\lambda_2}\right)^2+d\Delta t\abs{\sigma_1-\sigma_2}^2\right) e^{c_2(k-1)}
	\end{align*}
	with constants~$c_1=c_1(b_1,b_2,M)>0$ and $c_2=c_2(c_0,\Delta t, d, \lambda_2, \sigma_2)>0$.
\end{proof}

\subsection{A large deviation bound for the consensus point~\eqref{eq:consensus_point}}
	\label{subsec:appendix:lln_consensuspoint}

For a given measure~$\varrho\in\CP(\bbR^d)$ and a set of $N$ i.i.d.\@~random variables~$Y^i\sim\varrho$ with empirical random measure $\widehat\varrho^N = \frac{1}{N}\sum_{i=1}^N \delta_{Y^i}$, one expects that under certain regularity assumptions it holds by the law of large numbers 
\begin{align*}
	\conspoint{\widehat\varrho^N}
	\xrightarrow[]{\text{ a.s.\,}}\conspoint{\varrho}
	\quad\text{ as }
	N\rightarrow\infty.
\end{align*}
This is made rigorous in the subsequent lemma, which is based on arguments from~\cite[Lemma~3.1]{fornasier2020consensus_hypersurface_wellposedness} and \cite[Lemma~A.2]{fornasier2021consensus}.

\begin{lemma}[Large deviation bound for the consensus point~$\conspointnoarg$]
	\label{lem:lln_consensuspoint}
	Let $\CE\in\CC(\bbR^d)$ satisfy \ref{asm:zero_global}\;\!--\,\ref{asm:local_Lipschitz_quadratic_growth}.
	Moreover, for $k=1,\dots,K$, let $\mu_k\in\CP(\bbR^d)$ be a random measure, let $(Y_k^i)_{i=1,\dots,N}$ be $N$ i.i.d.\@ random variables distributed according to~$\mu_k$, denote by $\widehat\mu^N_{k}$ the empirical random measure~$\widehat\mu^N_{k} = \frac{1}{N}\sum_{i=1}^N \delta_{Y_k^i}$
	and define the cutoff function (random variable)
		\begin{align} \label{eq:lem:stability_CBO_dynamics:cutoff2}
		\overbar{\CI}^2_{M,k} =
		\begin{cases}
			1, & \text{ if } \max\left\{\int\N{\;\!\dummy\;\!}_2^4 d\widehat\mu^{N}_{k}, \int\N{\;\!\dummy\;\!}_2^4 d\mu_{k}\right\} \leq M^4, \\
			0, & \text{ else}.
		\end{cases}
	\end{align}
	Then it holds
	\begin{equation*}
		\max_{k=1,\dots,K}\bbE\N{\conspoint{\widehat\mu^N_{k}} - \conspoint{\mu_k}}_2^2\overbar{\CI}^2_{M,k}
		\leq c_3N^{-1}
	\end{equation*}
	with a constant $c_3=c_3(\alpha,b_1,b_2,C_2,M)>0$.
\end{lemma}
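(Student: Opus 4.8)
The plan is to exploit the quotient structure of the consensus point~\eqref{eq:consensus_point} and reduce the claim to two elementary Monte Carlo variance estimates, one for the vector-valued numerator and one for the scalar denominator. Since the consensus point is invariant under rescaling $\omegaa$ by a positive constant, I first normalize by working with $\widetilde\omega(x):=\exp(-\alpha(\CE(x)-\underbar\CE))\in(0,1]$, so that $\conspoint{\widehat\mu^N_k}=\widehat{A}/\widehat{w}$ and $\conspoint{\mu_k}=A/w$, where $\widehat{A}=\frac1N\sum_i Y_k^i\widetilde\omega(Y_k^i)$, $\widehat{w}=\frac1N\sum_i\widetilde\omega(Y_k^i)$, $A=\int x\,\widetilde\omega\,d\mu_k$ and $w=\int\widetilde\omega\,d\mu_k$. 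The key algebraic identity is
\begin{align*}
    \conspoint{\widehat\mu^N_k}-\conspoint{\mu_k}
    =\frac{1}{\widehat{w}}\Big[(\widehat{A}-A)-\conspoint{\mu_k}\,(\widehat{w}-w)\Big],
\end{align*}
which isolates the two sampling errors $\widehat{A}-A$ and $\widehat{w}-w$ and multiplies them by deterministic prefactors to be controlled on the cutoff set.

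First I would establish the deterministic bounds that hold on $\{\overbar{\CI}^2_{M,k}=1\}$. Using $\CE(x)-\underbar\CE\le C_2(1+\N{x}_2^2)$ from \ref{asm:local_Lipschitz_quadratic_growth} together with Jensen's inequality (exactly as in \eqref{eq:proof:lem:stability_consensus_points:1}) gives $\widehat{w}\ge\exp(-\alpha C_2(1+M^2))=:\underline{d}>0$, because $\int\N{x}_2^2\,d\widehat\mu^N_k\le M^2$ on the cutoff. Hence $\widehat{w}^{-1}\overbar{\CI}^2_{M,k}\le\underline{d}^{-1}$. Likewise, Lemma~\ref{lem:boundedness_consensuspoint} yields $\N{\conspoint{\mu_k}}_2^2\le b_1+b_2 M^2$ on the cutoff. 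Combining these with the identity above and $(a+b)^2\le2a^2+2b^2$ produces the pointwise estimate
\begin{align*}
    \N{\conspoint{\widehat\mu^N_k}-\conspoint{\mu_k}}_2^2\,\overbar{\CI}^2_{M,k}
    \le\frac{2}{\underline{d}^2}\Big(\N{\widehat{A}-A}_2^2+(b_1+b_2M^2)\,\abs{\widehat{w}-w}^2\Big)\,\overbar{\CI}^2_{M,k}.
\end{align*}

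The crux is then the conditional variance computation, where the measurability of the cutoff must be handled with care: $\overbar{\CI}^2_{M,k}$ depends on the samples through $\widehat\mu^N_k$, so it is not independent of $\widehat{A}-A$ and $\widehat{w}-w$. To decouple, I would bound the joint cutoff by the $\mu_k$-measurable indicator $J_k:=\mathbbm{1}\{\int\N{x}_2^4\,d\mu_k\le M^4\}$, i.e. $\overbar{\CI}^2_{M,k}\le J_k$, retaining the full cutoff only where the deterministic prefactors required it. Conditioning on the $\sigma$-algebra rendering $\mu_k$ measurable, the $Y_k^i$ are i.i.d.\ with law $\mu_k$, so $\widehat{A}-A$ and $\widehat{w}-w$ are averages of $N$ conditionally centered i.i.d.\ terms, whence $\bbE[\N{\widehat{A}-A}_2^2\mid\mu_k]\le\frac1N\int\N{x}_2^2\widetilde\omega(x)^2\,d\mu_k\le\frac1N\int\N{x}_2^2\,d\mu_k$ and $\bbE[\abs{\widehat{w}-w}^2\mid\mu_k]\le\frac1N\int\widetilde\omega(x)^2\,d\mu_k\le\frac1N$, using $\widetilde\omega\le1$. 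Multiplying by the $\mu_k$-measurable $J_k$, taking expectations via the tower property, and noting $J_k\int\N{x}_2^2\,d\mu_k\le M^2$, both terms are $\CO(N^{-1})$ uniformly in $k$. Assembling everything yields $\max_k\bbE[\N{\conspoint{\widehat\mu^N_k}-\conspoint{\mu_k}}_2^2\overbar{\CI}^2_{M,k}]\le c_3N^{-1}$ with $c_3=2(M^2+b_1+b_2M^2)\exp(2\alpha C_2(1+M^2))$, matching the stated dependence $c_3=c_3(\alpha,b_1,b_2,C_2,M)$. The main obstacle throughout is precisely this cutoff-decoupling step: once one observes that the joint cutoff may be relaxed to its $\mu_k$-measurable half for the variance estimate, while the sharper empirical half is only needed for the deterministic lower bound on $\widehat{w}$, the remainder is a routine second-moment calculation.
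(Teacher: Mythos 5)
Your proposal is correct and follows essentially the same route as the paper's proof: you split the error into a numerator term and a denominator term (your identity $\conspoint{\widehat\mu^N_k}-\conspoint{\mu_k}=\widehat{w}^{-1}[(\widehat A-A)-\conspoint{\mu_k}(\widehat w-w)]$ is exactly the paper's $T_1+T_2$ decomposition), lower-bound the empirical denominator on the cutoff set via \ref{asm:local_Lipschitz_quadratic_growth} and Jensen's inequality, bound $\N{\conspoint{\mu_k}}_2$ by Lemma~\ref{lem:boundedness_consensuspoint}, and close with a Monte Carlo second-moment estimate of order $N^{-1}$, with the same constant dependence. The one point where you genuinely deviate is the treatment of the cutoff in the variance step: the paper writes $\bbE\sum_{i,j}\langle Z_k^i,Z_k^j\rangle\,\overbar{\CI}^2_{M,k}=\bbE\sum_i\N{Z_k^i}_2^2\,\overbar{\CI}^2_{M,k}$, i.e.\ it drops the off-diagonal terms even though $\overbar{\CI}^2_{M,k}$ depends on all the samples, so conditional centering alone does not make them vanish. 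Your relaxation $\overbar{\CI}^2_{M,k}\le J_k$ to the $\mu_k$-measurable half of the cutoff (after the empirical half has already been spent on the deterministic lower bound for $\widehat w$), followed by conditioning on $\mu_k$ and the tower property, is the clean way to justify this step; it is a small but real tightening of the argument rather than a different proof.
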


\begin{proof}
	To start with, we note that under \ref{asm:local_Lipschitz_quadratic_growth} and with Jensen's inequality it holds
	\begin{align}
		\label{eq:proof:lem:lln_consensuspoint:1}
	\begin{aligned}
		\frac{e^{-\alpha\underbarscript{\CE}}\,\overbar{\CI}^2_{M,k}}{\frac{1}{N} \sum_{j=1}^N \omegaa(Y_k^j)}
		&= \frac{\overbar{\CI}^2_{M,k}}{\frac{1}{N} \sum_{j=1}^N\exp\!\big(\!-\!\alpha(\CE(Y_k^j)-\underbarscript{\CE})\big)}
		\leq \frac{\overbar{\CI}^2_{M,k}}{\frac{1}{N} \sum_{j=1}^N\exp\!\big(\!-\!\alpha C_2(1+\Nnormal{Y_k^j}_2^2)\big)}\\
		&\leq \frac{\overbar{\CI}^2_{M,k}}{\exp\!\big(\!-\!\alpha C_2(1+\frac{1}{N} \sum_{j=1}^N\Nnormal{Y_k^j}_2^2)\big)}
		\leq \exp\!\big(\alpha C_2(1+M^2)\big) =: c_M.
	\end{aligned}
	\end{align}

	By definition of the consensus point~$\conspointnoarg$ in~\eqref{eq:consensus_point}, it holds
	\begin{align}
		\label{eq:proof:lem:lln_consensuspoint:3}	
	\begin{aligned}
		\N{\conspoint{\widehat\mu^N_{k}} - \conspoint{\mu_k}}_2\overbar{\CI}^2_{M,k}
		&= \N{\sum_{i=1}^N Y_k^i \frac{\omegaa(Y_k^i)}{\sum_{j=1}^N \omegaa(Y_k^j)} - \int x \frac{\omegaa(x)}{\N{\omegaa}_{L_1(\mu_k)}} d\mu_{k}(x)}_2 \overbar{\CI}^2_{M,k} \\
		&\leq \big(\!\N{T_1}_2 + \N{T_2}_2\!\big) \,\overbar{\CI}^2_{M,k},
	\end{aligned}
	\end{align}
	where the terms~$T_1$ and $T_2$ are defined implicitly and bounded as follows.
	For the first term~$T_1$ we have
	\begin{align}
		\label{eq:proof:lem:lln_consensuspoint:5}
	\begin{aligned}
		\N{T_1}_2\overbar{\CI}^2_{M,k}
		&=  \N{\sum_{i=1}^N Y_k^i \frac{\omegaa(Y_k^i)}{\sum_{j=1}^N \omegaa(Y_k^j)} - \int x \frac{\omegaa(x)}{\frac{1}{N} \sum_{j=1}^N \omegaa(Y_k^j)} d\mu_{k}(x)}_2 \overbar{\CI}^2_{M,k} \\
		&=  \frac{\overbar{\CI}^2_{M,k}}{\frac{1}{N} \sum_{j=1}^N \omegaa(Y_k^j)}\N{\frac{1}{N}\sum_{i=1}^N Y_k^i \omegaa(Y_k^i) - \int x \omegaa(x) \,d\mu_{k}(x)}_2 \\
		&\leq c_Me^{\alpha\underbarscript{\CE}} \N{\frac{1}{N}\sum_{i=1}^N Y_k^i \omegaa(Y_k^i) - \int x \omegaa(x) \,d\mu_{k}(x)}_2,
	\end{aligned}
	\end{align}
	where we utilized \eqref{eq:proof:lem:lln_consensuspoint:1} in the last step.
	Similarly, for the second term~$T_2$ we have
	\begin{align}
		\label{eq:proof:lem:lln_consensuspoint:7}
	\begin{aligned}
		\N{T_2}_2\overbar{\CI}^2_{M,k}
		&=  \N{\int x \frac{\omegaa(x)}{\frac{1}{N} \sum_{j=1}^N \omegaa(Y_k^j)} d\mu_{k}(x) - \int x \frac{\omegaa(x)}{\N{\omegaa}_{L_1(\mu_k)}} d\mu_{k}(x)}_2 \overbar{\CI}^2_{M,k} \\
		&= \frac{\overbar{\CI}^2_{M,k}}{\frac{1}{N} \sum_{j=1}^N \omegaa(Y_k^j)} \N{\conspoint{\mu_k}}_2 \abs{\frac{1}{N} \sum_{j=1}^N \omegaa(Y_k^j) - \int \omegaa(x) \,d\mu_{k}(x)}_2\\
		&\leq c_Me^{\alpha\underbarscript{\CE}} \left(b_1+b_2M\right) \abs{\frac{1}{N} \sum_{j=1}^N \omegaa(Y_k^j) - \int \omegaa(x) \,d\mu_{k}(x)}_2,
	\end{aligned}
	\end{align}
	where the last step involved additionally Lemma~\ref{lem:boundedness_consensuspoint}.
	Let us now introduce the random variables
	\begin{align*}
		Z_k^i := Y_k^i \omegaa(Y_k^i) - \int x \omegaa(x)\,d\mu_{k}(x)
		\quad\text{ and }\quad
		z_k^i := \omegaa(Y_k^i) - \int \omegaa(x)\,d\mu_{k}(x),
	\end{align*}
	respectively, which have zero expectation, and are i.i.d.\@ for $i=1,\dots,N$.
	With these definitions as well as the bounds \eqref{eq:proof:lem:lln_consensuspoint:5} and \eqref{eq:proof:lem:lln_consensuspoint:7} we obtain
	\begin{align}
		\label{eq:proof:lem:lln_consensuspoint:9}
	\begin{aligned}
		\bbE\N{T_1}_2^2\overbar{\CI}^2_{M,k}
		&\leq c_M^2e^{2\alpha\underbarscript{\CE}} \bbE\N{\frac{1}{N}\sum_{i=1}^N Z_k^i}_2^2\overbar{\CI}^2_{M,k}
		= c_M^2e^{2\alpha\underbarscript{\CE}} \frac{1}{N^2} \bbE\sum_{i=1}^N\sum_{j=1}^N\big\langle Z_k^i, Z_k^j\big\rangle\,\overbar{\CI}^2_{M,k} \\
		&= c_M^2e^{2\alpha\underbarscript{\CE}} \frac{1}{N^2} \bbE\sum_{i=1}^N \N{Z_k^i}_2^2\overbar{\CI}^2_{M,k}
		\leq 4c_M^2 M^2 \frac{1}{N}
	\end{aligned}
	\end{align}
	and, analogously,
	\begin{align}
		\label{eq:proof:lem:lln_consensuspoint:11}
	\begin{aligned}
		\bbE\N{T_2}_2^2\overbar{\CI}^2_{M,k}
		&\leq c_M^2e^{2\alpha\underbarscript{\CE}} \left(b_1+b_2M\right)^2 \frac{1}{N^2} \bbE\sum_{i=1}^N \N{z_k^i}_2^2\overbar{\CI}^2_{M,k}
		\leq 4c_M^2 \left(b_1+b_2M\right)^2 \frac{1}{N}.
	\end{aligned}
	\end{align}
	The last inequalities of \eqref{eq:proof:lem:lln_consensuspoint:9} and \eqref{eq:proof:lem:lln_consensuspoint:11} are due to the estimates
	\begin{align*}
	\begin{aligned}
		\bbE\frac{1}{N} \sum_{i=1}^N \N{Z_k^i}_2^2\overbar{\CI}^2_{M,k}
		&\leq 2\bbE\frac{1}{N} \sum_{i=1}^N \N{Y_k^i \omegaa(Y_k^i)}_2^2 \overbar{\CI}^2_{M,k} + 2\bbE \N{\int x \omegaa(x)\,d\mu_{k}(x)}_2^2\overbar{\CI}^2_{M,k} \\
		&\leq 2e^{-2\alpha\underbarscript{\CE}} \bbE\frac{1}{N} \sum_{i=1}^N \N{Y_k^i}_2^2 \overbar{\CI}^2_{M,k} + 2e^{-2\alpha\underbarscript{\CE}}\bbE \int \N{x}_2^2 \,d\mu_{k}(x) \,\overbar{\CI}^2_{M,k} \\
		&\leq 4e^{-2\alpha\underbarscript{\CE}} M^2
	\end{aligned}
	\end{align*}
	and, similarly,
	\begin{align*}
	\begin{aligned}
		\bbE \abs{z_k^1}_2^2\overbar{\CI}^2_{M,k}
		\leq 4e^{-2\alpha\underbarscript{\CE}}.
	\end{aligned}
	\end{align*}
	Combining \eqref{eq:proof:lem:lln_consensuspoint:9} and \eqref{eq:proof:lem:lln_consensuspoint:11} concludes the proof.
\end{proof}

\begin{remark}
	Alternatively to the explicit computations of Lemma~\ref{lem:lln_consensuspoint},
	the stability estimate for the consensus point, Lemma~\ref{lem:stability_consensus_points}, would allow to obtain
	\begin{equation*}
		\max_{k=1,\dots,K}\bbE\N{\conspoint{\widehat\mu^N_{k}} - \conspoint{\mu_k}}_2^2\overbar{\CI}^2_{M,k}
		\leq c_0 \max_{k=1,\dots,K}\bbE W_2^2(\widehat\mu^N_{k},\mu_k)\,\overbar{\CI}^2_{M,k},
	\end{equation*}
	where $\bbE W_2^2(\widehat\mu^N_{k},\mu_k)$ can be controlled by employing \cite[Theorem~1]{fournier2015rate}.
	This, however, only gives a quantitative convergence rate of order $\CO(N^{-2/d})$, which is affected by the curse of dimensionality.
	The convergence rate $\CO(N^{-1})$ obtained in Lemma~\ref{lem:lln_consensuspoint} matches the one to be expected from Monte Carlo sampling.
\end{remark}

\section{Proof details for Proposition~\ref{prop:relaxation_CH_GF} and Theorem~\ref{thm:relaxation_CH_GF}} \label{sec:appendix:thm:relaxation_CH_GF}

Proposition~\ref{prop:relaxation_CH_GF} and Theorem~\ref{thm:relaxation_CH_GF} are centered around the observation that the CH scheme~\eqref{eq:CH} behaves gradient-like.
To establish this, Proposition~\ref{prop:relaxation_CH_GF} exploits, by using the quantitative nonasymptotic Laplace principle (see Section~\ref{subsec:appendix:proof:LaplacePrinciple} and in particular Proposition~\ref{prop:LaplacePrinciple} for a review of \cite[Proposition~4.5]{fornasier2021consensus}), that one step of the implicit CH scheme~\eqref{eq:CH_aux} can be recast into the computation of a consensus point~$\conspointnoargE{\widetilde\CE}$ for an objective function of the form $\widetilde\CE(x) = \frac{1}{2\tau}\Nnormal{\;\dummy\,-x}_2^2 + \CE(x)$.
To prove Theorem~\ref{thm:relaxation_CH_GF}, this is combined with a stability argument for the MMS~\eqref{eq:MMS}, which relies on the $\Lambda$-convexity of $\CE$ (Assumption~\ref{asm:lambda-convex}).


\subsection{A quantitative nonasymptotic Laplace principle}
	\label{subsec:appendix:proof:LaplacePrinciple}

The Laplace principle~\cite{dembo2009large,miller2006applied} asserts that for any absolutely continuous probability measure~$\varrho\in\CP(\bbR^d)$ it holds
\begin{equation*}
	\lim_{\alpha\rightarrow\infty} \left(-\frac{1}{\alpha}\log\left(\int\exp\big(\!-\!\alpha\widetilde\CE(x)\big)\, d\varrho(x)\right)\right)
	= \inf_{x\in\supp(\varrho)} \widetilde\CE(x).
\end{equation*}
This suggests that, as $\alpha\rightarrow\infty$, the Gibbs measure~$\eta_\alpha^{\widetilde\CE} = \omega_\alpha^{\widetilde\CE}\varrho/\Nnormal{\omega_\alpha^{\widetilde\CE}}_{L_1(\varrho)}$ converges to a discrete probability distribution (i.e., a convex combination of Dirac measures) supported on the set of global minimizers of $\widetilde\CE$.
However, even in the case that such minimizer is unique, it does not permit to quantify the proximity of $\conspointE{\widetilde\CE}{\varrho}=\int x\,d\eta_\alpha^{\widetilde\CE}$ (see also Equation~\eqref{eq:consensus_point}) to the minimizer of $\widetilde\CE$ without the following assumption (see also Remark~\ref{rem:ICP}).

\begin{definition}[{Inverse continuity property}]
	\label{def:ICP}
	A function~$\widetilde\CE\in\CC(\bbR^d)$ satisfies the $\ell^2$-inverse continuity property globally if there exist constants~$\eta,\nu>0$ such that
	\begin{align}
		\label{eq:ICP}
		\N{x-\widetilde{x}^*}_2
		\leq \frac{1}{\eta} \big(\widetilde\CE(x)-\underbar{\widetilde\CE}\,\big)^\nu
		\quad\text{ for all } x\in \bbR^d,
	\end{align}
	where $\widetilde{x}^*\in\bbR^d$ denotes the unique global minimizer of $\widetilde\CE$ with objective value $\underbar{\widetilde\CE} := \inf_{x\in\bbR^d} \widetilde\CE(x)$.
\end{definition}

As elaborated on in Remark~\ref{rem:ICP} for the ($\ell^\infty$-)inverse continuity property, it is usually sufficient if \eqref{eq:ICP} holds locally around the global minimizer~$\widetilde{x}^*$.
In the following Proposition~\ref{prop:LaplacePrinciple}, however, we recall the quantitative Laplace principle in the slightly more specific form, where the $\ell^2$-inverse continuity property holds globally as required by Definition~\ref{def:ICP}.
For the general version, namely in the case of functions which satisfy~\eqref{eq:ICP} only on an $\ell^2$-ball around~$\widetilde{x}^*$ (see~\cite[Definition~3.5~(A2)]{fornasier2021consensus} for the details), we refer to~\cite[Proposition~4.5]{fornasier2021consensus}.

\begin{proposition}[Quantitative Laplace principle]
	\label{prop:LaplacePrinciple}
	Let $\widetilde\CE\in\CC(\bbR^d)$ satisfy the $\ell^2$-inverse continuity property in form of Definition~\ref{def:ICP}.
	Moreover, let $\indivmeasure \in \CP(\bbR^d)$.
	For any $r > 0$ define $\widetilde\CE_{r} := \sup_{x \in B_{r}(\widetilde{x}^*)}\widetilde\CE(x)-\underbar{\widetilde\CE}$.
	Then, for fixed $\alpha  > 0$ it holds for any $r,q>0$ that
	\begin{align}
		\label{eq:prop:LaplacePrinciple}
		\Nbig{\conspointE{\widetilde\CE}{\indivmeasure} - \widetilde{x}^*}_2 \leq \frac{\big(q + \widetilde\CE_{r}\big)^\nu}{\eta} + \frac{\exp(-\alpha q)}{\indivmeasure(B_{r}(\widetilde{x}^*))}\int\N{x-\widetilde{x}^*}_2 d\indivmeasure(x).
	\end{align}
\end{proposition}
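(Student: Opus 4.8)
The plan is to start from the defining formula \eqref{eq:consensus_point} of the consensus point and reduce the claim to bounding the first absolute moment of the Gibbs measure $\eta_\alpha^{\widetilde\CE} = \omega_\alpha^{\widetilde\CE}\indivmeasure/\N{\omega_\alpha^{\widetilde\CE}}_{L^1(\indivmeasure)}$ about the minimizer $\widetilde{x}^*$. Since $\conspointE{\widetilde\CE}{\indivmeasure} = \int x\,d\eta_\alpha^{\widetilde\CE}(x)$ and $\eta_\alpha^{\widetilde\CE}$ is a probability measure, writing $\conspointE{\widetilde\CE}{\indivmeasure} - \widetilde{x}^* = \int (x - \widetilde{x}^*)\,d\eta_\alpha^{\widetilde\CE}(x)$ and applying Jensen's inequality gives
\begin{align*}
\N{\conspointE{\widetilde\CE}{\indivmeasure} - \widetilde{x}^*}_2 \leq \int \N{x - \widetilde{x}^*}_2\, \frac{\omega_\alpha^{\widetilde\CE}(x)}{\N{\omega_\alpha^{\widetilde\CE}}_{L^1(\indivmeasure)}}\, d\indivmeasure(x).
\end{align*}

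Second, I would split the domain of integration along the sublevel set $G := \{x \in \bbR^d : \widetilde\CE(x) - \underbar{\widetilde\CE} \leq q + \widetilde\CE_{r}\}$ and its complement $G^c$, the free parameter $q>0$ being exactly what permits trading off the two resulting error terms. On $G$ the inverse continuity property of Definition~\ref{def:ICP} yields $\N{x - \widetilde{x}^*}_2 \leq \frac{1}{\eta}(\widetilde\CE(x) - \underbar{\widetilde\CE})^\nu \leq \frac{1}{\eta}(q + \widetilde\CE_{r})^\nu$, and integrating this uniform bound against the probability measure $\eta_\alpha^{\widetilde\CE}$ (whose mass on $G$ is at most $1$) produces the first summand $\frac{(q + \widetilde\CE_{r})^\nu}{\eta}$ of \eqref{eq:prop:LaplacePrinciple}.

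Third, which is the crux of the argument, I would control the contribution of $G^c$ by showing the Gibbs weight there is uniformly exponentially small. For the numerator, $x\in G^c$ gives $\omega_\alpha^{\widetilde\CE}(x) = e^{-\alpha\underbar{\widetilde\CE}}e^{-\alpha(\widetilde\CE(x) - \underbar{\widetilde\CE})} < e^{-\alpha\underbar{\widetilde\CE}}e^{-\alpha(q + \widetilde\CE_{r})}$; for the denominator, restricting the defining integral to $B_{r}(\widetilde{x}^*)$ and using $\widetilde\CE(x)\leq \underbar{\widetilde\CE} + \widetilde\CE_{r}$ there gives the lower bound $\N{\omega_\alpha^{\widetilde\CE}}_{L^1(\indivmeasure)} \geq e^{-\alpha(\underbar{\widetilde\CE} + \widetilde\CE_{r})}\indivmeasure(B_{r}(\widetilde{x}^*))$. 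Dividing, the factors $e^{-\alpha\underbar{\widetilde\CE}}$ and $e^{-\alpha\widetilde\CE_{r}}$ cancel and leave $\frac{\omega_\alpha^{\widetilde\CE}(x)}{\N{\omega_\alpha^{\widetilde\CE}}_{L^1(\indivmeasure)}} < \frac{e^{-\alpha q}}{\indivmeasure(B_{r}(\widetilde{x}^*))}$ on $G^c$. Pulling this constant out and enlarging the domain of integration from $G^c$ back to all of $\bbR^d$ yields the second summand, and summing the two contributions completes the estimate.

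The argument is essentially a careful bookkeeping exercise, and the only genuinely delicate point is the choice of partition. A naive geometric split on $B_{r}(\widetilde{x}^*)$ versus its complement would give neither a clean distance bound on the good set nor the exponential gain on the bad set, whereas splitting along the objective sublevel set $G$ simultaneously activates the inverse continuity property (on $G$) and the exponential suppression of the Gibbs weight (on $G^c$); arranging the interplay between $r$, which enters through both $\widetilde\CE_{r}$ and $\indivmeasure(B_{r}(\widetilde{x}^*))$, and the threshold $q$ is what must be done correctly. I would also take care that the bound stays nonasymptotic, i.e.\ holds for every fixed $\alpha>0$ and not merely in the limit $\alpha\to\infty$, which is precisely why the lower bound on the normalizing constant via the mass $\indivmeasure(B_{r}(\widetilde{x}^*))$ is needed rather than an appeal to the Laplace principle itself.
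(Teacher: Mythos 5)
Your proof is correct and follows essentially the same route as the paper's: the same Jensen step to reduce to the first moment of the Gibbs measure, the same lower bound $\N{\omega_\alpha^{\widetilde\CE}}_{L^1(\indivmeasure)}\geq e^{-\alpha(\underbar{\widetilde\CE}+\widetilde\CE_r)}\indivmeasure(B_r(\widetilde{x}^*))$ on the normalizing constant, and the same interplay between $q$, $r$, and the inverse continuity property. The only cosmetic difference is that you partition along the sublevel set $\{\widetilde\CE-\underbar{\widetilde\CE}\leq q+\widetilde\CE_r\}$ whereas the paper partitions on the ball $B_{\widetilde r}(\widetilde{x}^*)$ with $\widetilde r=(q+\widetilde\CE_r)^\nu/\eta$ that contains it; the inverse continuity property translates between the two sets (you invoke it on the good set, the paper in contrapositive form on the bad set), and both bookkeepings yield the identical final estimate.
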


\begin{proof}
	W.l.o.g.\@~we may assume $\underbar{\widetilde\CE}=0$ since a constant offset to $\widetilde\CE$ neither affects the definition of the consensus point in~\eqref{eq:consensus_point} nor the quantities appearing on the right-hand side of~\eqref{eq:prop:LaplacePrinciple}.
	
	By Markov's inequality it holds $\Nnormal{\exp(-\alpha\widetilde\CE)}_{L_1(\indivmeasure)} \geq a \indivmeasure\big(\big\{x\in\bbR^d : \exp(-\alpha \widetilde\CE(x)) \geq a\big\}\big)$ for any $a > 0$.
	With the choice $a = \exp(-\alpha\widetilde\CE_{r})$ and noting that
	\begin{align*}
		\indivmeasure\left(\left\{x \in \bbR^d: \exp(-\alpha \widetilde\CE(x)) \geq \exp(-\alpha\widetilde\CE_{r})\right\}\right)
		&= \indivmeasure\left(\left\{ x \in \bbR^d: \widetilde\CE(x) \leq \widetilde\CE_{r} \right\}\right)
		\geq \indivmeasure(B_{r}(\widetilde{x}^*)),
	\end{align*}
	we obtain $\Nnormal{\exp(-\alpha\widetilde\CE)}_{L_1(\indivmeasure)} \geq \exp(-\alpha \widetilde\CE_{r})\indivmeasure(B_{r}(\widetilde{x}^*))$.
	Now let $\widetilde r \geq r > 0$.
	With the definition of the consensus point in~\eqref{eq:consensus_point} and by Jensen's inequality we can decompose 
	\begin{align*}
		\Nbig{\conspointE{\widetilde\CE}{\indivmeasure} - \widetilde{x}^*}_2
		&\leq \int_{B_{\widetilde r}(\widetilde{x}^*)} \N{x-\widetilde{x}^*}_2\frac{\exp\big(\!-\!\alpha\widetilde\CE(x)\big)}{\Nbig{\exp(-\alpha\widetilde\CE)}_{L_1(\indivmeasure)}}d\indivmeasure(x)\\
		&\quad\;\!+ \int_{\left(B_{\widetilde r}(\widetilde{x}^*)\right)^c} \N{x-\widetilde{x}^*}_2\frac{\exp\big(\!-\!\alpha\widetilde\CE(x)\big)}{\Nbig{\exp(-\alpha\widetilde\CE)}_{L_1(\indivmeasure)}}d\indivmeasure(x).
	\end{align*}
	The first term is bounded by $\widetilde r$ since $ \Nnormal{x-\widetilde{x}^*}_2\leq \widetilde r$ for all $x \in B_{\widetilde r}(\widetilde{x}^*)$.
	For the second term we use the formerly derived $\Nnormal{\exp(-\alpha\widetilde\CE)}_{L_1(\indivmeasure)} \geq \exp(-\alpha \widetilde\CE_{r})\indivmeasure(B_{r}(\widetilde{x}^*))$ to get
	\begin{align*}
		&\int_{\left(B_{\widetilde r}(\widetilde{x}^*)\right)^c} \N{x-\widetilde{x}^*}_2\frac{\exp\big(\!-\!\alpha\widetilde\CE(x)\big)}{\Nbig{\exp(-\alpha\widetilde\CE)}_{L_1(\indivmeasure)}}d\indivmeasure(x) \\
		&\qquad\qquad\qquad\leq \frac{1}{\exp(-\alpha \widetilde\CE_{r})\indivmeasure(B_{r}(\widetilde{x}^*))}\int_{(B_{\widetilde r}(\widetilde{x}^*))^c} \N{x-\widetilde{x}^*}_2 \exp\big(\!-\!\alpha\widetilde\CE(x)\big)\,d\indivmeasure(x)\\
		&\qquad\qquad\qquad\leq \frac{\exp\left(-\alpha \left(\inf_{x \in (B_{\widetilde r}(\widetilde{x}^*))^c}\widetilde\CE(x)-\widetilde\CE_{r}\right)\right)}{\indivmeasure(B_{r}(\widetilde{x}^*))} \int\N{x-\widetilde{x}^*}_2d\indivmeasure(x).
	\end{align*}	
	Thus, for any $\widetilde r \geq r > 0$ we obtain
	\begin{align} \label{eq:aux_laplace_1}
		\Nbig{\conspointE{\widetilde\CE}{\indivmeasure} - \widetilde{x}^*}_2
		\leq \widetilde r + \frac{\exp\left(-\alpha \left(\inf_{x \in (B_{\widetilde r}(\widetilde{x}^*))^c}\widetilde\CE(x)-\widetilde\CE_{r}\right)\right)}{\indivmeasure(B_{r}(\widetilde{x}^*))} \int\N{x-\widetilde{x}^*}_2d\indivmeasure(x).
	\end{align}		
	We now choose $\widetilde r = \big(q+\widetilde\CE_{r}\big)^{\nu}/\eta$, which satisfies $\widetilde r  \geq r$, since \eqref{eq:ICP} with $\widetilde\minobj = 0$ implies
	\begin{align*}
		\widetilde r
		= \frac{\big(q+\widetilde\CE_{r}\big)^{\nu}}{\eta}
		\geq \frac{\widetilde\CE_{r}^{\nu}}{\eta}
		= \frac{\left(\sup_{x \in B_{r}(\widetilde{x}^*)}\widetilde\CE(x)\right)^{\nu}}{\eta} 
		\geq \sup_{x \in B_{r}(\widetilde{x}^*)}\N{x-\widetilde{x}^*}_2
		= r.
	\end{align*}
	Using again \eqref{eq:ICP} with $\widetilde\minobj = 0$ we thus have
	\begin{align*}
		\inf_{x \in (B_{\widetilde r}(\widetilde{x}^*))^c}\widetilde\CE(x) - \widetilde\CE_{r}
		\geq (\eta \widetilde r)^{1/\nu} - \widetilde\CE_{r}
		= q + \widetilde\CE_r - \widetilde\CE_{r}
		= q.
	\end{align*}
	Inserting this and the definition of $\widetilde r$ into \eqref{eq:aux_laplace_1} gives the statement.
\end{proof}

\subsection{The auxiliary function~$\widetilde\CE_{k}$}
	\label{subsec:appendix:proof:CE_k}

Let us now show that the function
\begin{equation}
    \widetilde\CE_{k}(x) := \frac{1}{2\tau} \N{x^{\mathrm{CH}}_{k-1} - x}_2^2 + \CE(x),
\end{equation}
which appears later in the proofs of Proposition~\ref{prop:relaxation_CH_GF} and Theorem~\ref{thm:relaxation_CH_GF}, satisfies the $\ell^2$-inverse continuity property in form of Definition~\ref{def:ICP} if $\CE$ is $\Lambda$-convex and the parameter $\tau$ sufficiently small.
As we discuss in Remark~\ref{rem:ICPifconvex} below, the condition on the parameter~$\tau$ vanishes if $\CE$ is convex, i.e., $\Lambda\geq0$.

\begin{lemma}[{$\widetilde\CE_{k}$ satisfies the $\ell^2$-inverse continuity property}]
\label{lem:widetildeCEhasICP}
	Let $\widetilde\CE_{k}$ be defined as above with $\tau>0$ and with $\CE\in\CC(\bbR^d)$ satisfying \ref{asm:lambda-convex}.
	Moreover, if $\Lambda<0$, assume further that $\tau<1/(-\Lambda)$.
	Then, $\widetilde\CE_{k}$ satisfies the $\ell^2$-inverse continuity property~\eqref{eq:ICP} with parameters
		\begin{align*}
			\nu = \frac{1}{2}
			\quad\text{ and }\quad
			\eta = \sqrt{\frac{1}{2\tau} + \frac{\Lambda}{2}}.
		\end{align*}
		I.e., denoting the unique global minimizer of  $\widetilde\CE_{k}$ by $\widetilde{x}^{\mathrm{CH}}_{k}$, it holds
		\begin{align}
			\label{eq:ICP_widetildeCE}
			\N{x-\widetilde{x}^{\mathrm{CH}}_{k}}_2
			&\leq \frac{1}{\eta}\left(\widetilde\CE_{k}(x)-\widetilde\CE_{k}(\widetilde{x}^{\mathrm{CH}}_{k})\right)^{\nu} \quad \text{ for all } x \in \bbR^d.
		\end{align}	
\end{lemma}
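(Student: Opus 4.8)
The plan is to recognize that under Assumption~\ref{asm:lambda-convex} the modulated objective $\widetilde\CE_k$ is in fact \emph{strongly} convex, and then to read off the $\ell^2$-inverse continuity property as the elementary quadratic growth estimate that strong convexity guarantees around its minimizer. First I would observe that the penalty $x\mapsto\frac{1}{2\tau}\N{x^{\mathrm{CH}}_{k-1}-x}_2^2$ is $\frac{1}{\tau}$-strongly convex, since subtracting $\frac{1}{2\tau}\N{x}_2^2$ leaves an affine function of $x$. As $\CE$ is $\Lambda$-convex by \ref{asm:lambda-convex}, the sum $\widetilde\CE_k$ is then $\mu$-convex with $\mu:=\frac{1}{\tau}+\Lambda$: indeed $\widetilde\CE_k-\frac{\mu}{2}\N{\dummy}_2^2$ splits as the sum of the two convex functions $\frac{1}{2\tau}\N{x^{\mathrm{CH}}_{k-1}-\dummy}_2^2-\frac{1}{2\tau}\N{\dummy}_2^2$ and $\CE-\frac{\Lambda}{2}\N{\dummy}_2^2$. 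The hypothesis $\tau<1/(-\Lambda)$ when $\Lambda<0$ (and no restriction when $\Lambda\geq0$) is exactly what guarantees $\mu>0$, so that $\widetilde\CE_k$ is genuinely strongly convex; being continuous and coercive it attains a minimum, and strict convexity forces the minimizer $\widetilde{x}^{\mathrm{CH}}_{k}$ to be unique.

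The second step is to establish the quadratic growth bound $\widetilde\CE_k(x)-\widetilde\CE_k(\widetilde{x}^{\mathrm{CH}}_{k})\geq\frac{\mu}{2}\N{x-\widetilde{x}^{\mathrm{CH}}_{k}}_2^2$ for all $x\in\bbR^d$. Since here we only assume $\CE\in\CC(\bbR^d)$ (and not $\CE\in\CC^1$), I would phrase the argument through the subdifferential rather than the gradient. Writing $h:=\widetilde\CE_k-\frac{\mu}{2}\N{\dummy}_2^2$, which is convex, the optimality of $\widetilde{x}^{\mathrm{CH}}_{k}$ for $\widetilde\CE_k$ translates into $-\mu\widetilde{x}^{\mathrm{CH}}_{k}\in\partial h(\widetilde{x}^{\mathrm{CH}}_{k})$. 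Inserting this subgradient into the defining inequality $h(x)\geq h(\widetilde{x}^{\mathrm{CH}}_{k})+\langle p,x-\widetilde{x}^{\mathrm{CH}}_{k}\rangle$ and expanding the quadratic terms $\frac{\mu}{2}\N{x}_2^2-\frac{\mu}{2}\N{\widetilde{x}^{\mathrm{CH}}_{k}}_2^2$ yields exactly the displayed lower bound, once the linear and quadratic contributions recombine into $\frac{\mu}{2}\N{x-\widetilde{x}^{\mathrm{CH}}_{k}}_2^2$. Alternatively, one obtains the same estimate directly from the midpoint form of $\mu$-strong convexity.

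Finally, rearranging the quadratic growth bound gives $\N{x-\widetilde{x}^{\mathrm{CH}}_{k}}_2\leq\sqrt{2/\mu}\,\big(\widetilde\CE_k(x)-\widetilde\CE_k(\widetilde{x}^{\mathrm{CH}}_{k})\big)^{1/2}$, which is precisely \eqref{eq:ICP_widetildeCE} with $\nu=\frac{1}{2}$ and $\eta=\sqrt{\mu/2}=\sqrt{\frac{1}{2\tau}+\frac{\Lambda}{2}}$, since $\sqrt{2/\mu}=1/\eta$. The only genuine subtlety is the bookkeeping around the sign of $\mu$ and the non-smoothness of $\CE$: one must ensure $\mu>0$ both for $\eta$ to be real and for the minimizer to exist and be unique, and one must argue via subgradients because $\CE$ need not be differentiable. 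Everything else is the standard strong-convexity calculation, so I expect no real obstacle beyond keeping the constant $\eta$ aligned with $\mu=\frac{1}{\tau}+\Lambda$.
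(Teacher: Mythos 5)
Your proposal is correct and follows essentially the same route as the paper: both identify $\widetilde\CE_k$ as strongly convex with modulus $\frac{1}{\tau}+\Lambda=2\eta^2$ (positive precisely under the stated restriction on $\tau$), deduce existence and uniqueness of the minimizer, and convert the resulting quadratic growth around $\widetilde{x}^{\mathrm{CH}}_{k}$ into the $\ell^2$-inverse continuity property with $\nu=1/2$. The only cosmetic difference is how the quadratic growth is extracted from strong convexity\,---\,you use the subgradient inequality at the minimizer, while the paper uses the convex-combination defect inequality and lets the interpolation parameter tend to zero\,---\,which is the ``midpoint form'' alternative you yourself mention.
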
	
	
\begin{proof}
	We first notice that $\widetilde\CE_{k}$ is $2\eta^2\!=\!\left(\frac{1+\Lambda\tau}{\tau}\right)$-strongly convex ($2\eta^2>0$ by assumption), since
	\begin{align*}
		\widetilde\CE_{k}(x) - \frac{1}{2}\left(\frac{1+\Lambda\tau}{\tau}\right)\N{x}_2^2
		&= \frac{1}{2\tau} \left(\N{x^{\mathrm{CH}}_{k-1} - x}_2^2 - \N{x}_2^2\right) + \CE(x) - \frac{\Lambda}{2}\N{x}_2^2 \\
		&= \underbrace{\frac{1}{2\tau} \left(\N{x^{\mathrm{CH}}_{k-1}}_2^2 - 2\left\langle x^{\mathrm{CH}}_{k-1}, x\right\rangle\right)}_{\text{convex since linear}} + \underbrace{\CE(x) - \frac{\Lambda}{2}\N{x}_2^2}_{\text{convex by  \ref{asm:lambda-convex}}}
	\end{align*}
	is convex by being the sum of two convex functions.
	By strong convexity of $\widetilde\CE_{k}$, $\widetilde{x}^{\mathrm{CH}}_{k}$ exists, is unique and for all $\xi\in[0,1]$ it holds
	\begin{align*}
		\frac{1}{2}\left(\frac{1+\Lambda\tau}{\tau}\right)\xi(1-\xi)\N{x-\widetilde{x}^{\mathrm{CH}}_{k}}_2^2
		&\leq \xi\widetilde\CE_{k}(x) + (1-\xi)\widetilde\CE_{k}(\widetilde{x}^{\mathrm{CH}}_{k}) - \widetilde\CE_{k}(\xi x + (1-\xi)\widetilde{x}^{\mathrm{CH}}_{k}) \\
		&\leq \xi\left(\widetilde\CE_{k}(x) - \widetilde\CE_{k}(\widetilde{x}^{\mathrm{CH}}_{k})\right)\!,
	\end{align*}
	where we used in the last inequality that $\widetilde{x}^{\mathrm{CH}}_{k}$ minimizes $\widetilde\CE_{k}$.
	Dividing both sides by $\xi$, letting $\xi\rightarrow0$ and reordering the inequality gives the result.	
\end{proof}

\begin{remark}
	\label{rem:ICPifconvex}
	In the case that $\CE$ is $\Lambda$-convex with $\Lambda<0$ (i.e., potentially nonconvex), Lemma~\ref{lem:widetildeCEhasICP} requires that the parameter~$\tau$ is sufficiently small, in order to ensure that $\widetilde\CE_{k}$ is strongly convex and therefore has a unique global minimizer~$\widetilde{x}^{\mathrm{CH}}_{k}$.
	On the other hand, if $\CE$ is convex, i.e., $\Lambda\geq0$, $\widetilde\CE_{k}$ is strongly convex and therefore such constraint is not necessary, i.e., $\tau$ can be chosen arbitrarily.
\end{remark}

Next, let us provide rather technical estimates on the quantities~$(\widetilde\CE_{k})_r$, $\nu_k\big(B_r(\widetilde{x}^{\mathrm{CH}}_{k})\big)$ and $\int \N{x-\widetilde{x}^{\mathrm{CH}}_{k}}_2 d\nu_k(x)$, which appear when applying Proposition~\ref{prop:LaplacePrinciple} in the setting of the function~$\widetilde\CE_k$ and the probability measure~$\nu_k = \CN\!\left(x^{\mathrm{CH}}_{k-1},2\widetilde{\sigma}^2\Id\right)$.
This allows to keep the proof of Proposition~\ref{prop:relaxation_CH_GF} more concise.

\begin{lemma}
	\label{lem:auxiliary_estimates_Laplace}
	Let $\widetilde\CE_k\in\CC(\bbR^d)$ be as defined above with $\CE\in\CC(\bbR^d)$ satisfying \ref{asm:local_Lipschitz_quadratic_growth}.
	Then for the expressions~$(\widetilde\CE_{k})_r$, $\nu_k\big(B_r(\widetilde{x}^{\mathrm{CH}}_{k})\big)$ and $\int \N{x-\widetilde{x}^{\mathrm{CH}}_{k}}_2 d\nu_k(x)$ appearing in Equation~\eqref{eq:prop:LaplacePrinciple} the following estimates hold.
	Namely,
%
%
	\begin{align*}
		(\widetilde\CE_{k})_r
		&\leq \left(\frac{1}{2\tau} \Big(r + 4\tau C_1\left(\N{x^{\mathrm{CH}}_{k-1}}_2 + \Nbig{\widetilde{x}^{\mathrm{CH}}_{k}}_2\right)\!\Big) + C_1 \left(1+r+2\Nbig{\widetilde{x}^{\mathrm{CH}}_{k}}_2\right) \right)r, \\
		\nu_k\big(B_r(\widetilde{x}^{\mathrm{CH}}_{k})\big)
		&\geq \frac{1}{(2\widetilde\sigma)^{d}} \exp\left(-\frac{1}{2\widetilde\sigma^2}\left(r^2\!+\!12\tau^2 C_1^2\!\left(1\!+\!\N{x^{\mathrm{CH}}_{k-1}}_2^2 \!+\! \Nbig{\widetilde{x}^{\mathrm{CH}}_{k}}_2^2\right)\right)\!\right)\! \frac{r^d}{\Gamma\left(\frac{d}{2}\!+\!1\right)}, \\
		\int \N{x-\widetilde{x}^{\mathrm{CH}}_{k}}_2 d\nu_k(x)
		&\leq 2\tau C_1\left(1+\N{x^{\mathrm{CH}}_{k-1}}_2 + \Nbig{\widetilde{x}^{\mathrm{CH}}_{k}}_2\right) + \sqrt{2d}\widetilde{\sigma}.
	\end{align*}
\end{lemma}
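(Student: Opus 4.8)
The plan is to reduce all three estimates to a single preparatory bound on the displacement $\N{\widetilde{x}^{\mathrm{CH}}_{k}-x^{\mathrm{CH}}_{k-1}}_2$, after which each inequality follows from elementary manipulations. For the preparatory bound I would exploit that $\widetilde{x}^{\mathrm{CH}}_{k}$ minimizes $\widetilde\CE_k$, so that comparing the minimal value with the value at $x^{\mathrm{CH}}_{k-1}$ (where the quadratic term vanishes) gives $\frac{1}{2\tau}\N{\widetilde{x}^{\mathrm{CH}}_{k}-x^{\mathrm{CH}}_{k-1}}_2^2 \leq \CE(x^{\mathrm{CH}}_{k-1})-\CE(\widetilde{x}^{\mathrm{CH}}_{k})$, exactly as in the boundedness arguments of Appendix~\ref{sec:appendix:BoundednessNumericalSchemes}. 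Bounding the right-hand side via the local Lipschitz condition~\eqref{asm:local_Lipschitz_quadratic_growth:1} and cancelling one factor of $\N{\widetilde{x}^{\mathrm{CH}}_{k}-x^{\mathrm{CH}}_{k-1}}_2$ yields the key estimate
\begin{align*}
    \N{\widetilde{x}^{\mathrm{CH}}_{k}-x^{\mathrm{CH}}_{k-1}}_2 \leq 2\tau C_1\big(\N{x^{\mathrm{CH}}_{k-1}}_2 + \N{\widetilde{x}^{\mathrm{CH}}_{k}}_2\big),
\end{align*}
which is the source of the $4\tau C_1(\dots)$ and $8\tau^2 C_1^2(\dots)$ terms appearing in all three claimed bounds.

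For $(\widetilde\CE_{k})_r = \sup_{x\in B_r(\widetilde{x}^{\mathrm{CH}}_{k})}\widetilde\CE_k(x)-\widetilde\CE_k(\widetilde{x}^{\mathrm{CH}}_{k})$ (using that $\widetilde{x}^{\mathrm{CH}}_{k}$ is the unique minimizer by Lemma~\ref{lem:widetildeCEhasICP}), I would split $\widetilde\CE_k(x)-\widetilde\CE_k(\widetilde{x}^{\mathrm{CH}}_{k})$ into its quadratic part and its $\CE$-part. For the quadratic part I apply the polarization identity $\N{a}_2^2-\N{b}_2^2 = \langle a-b,a+b\rangle$ with $a=x^{\mathrm{CH}}_{k-1}-x$ and $b=x^{\mathrm{CH}}_{k-1}-\widetilde{x}^{\mathrm{CH}}_{k}$, followed by Cauchy--Schwarz together with $\N{x-\widetilde{x}^{\mathrm{CH}}_{k}}_2\leq r$ and the preparatory bound, producing a factor $r\,(r+2\N{\widetilde{x}^{\mathrm{CH}}_{k}-x^{\mathrm{CH}}_{k-1}}_2)$. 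The $\CE$-part is controlled directly by~\eqref{asm:local_Lipschitz_quadratic_growth:1} and $\N{x}_2\leq\N{\widetilde{x}^{\mathrm{CH}}_{k}}_2+r$, giving $C_1(r+2\N{\widetilde{x}^{\mathrm{CH}}_{k}}_2)r$. Summing the two contributions reproduces the stated inequality verbatim.

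For the first-moment bound I would insert $x^{\mathrm{CH}}_{k-1}$ and use the triangle inequality, so that $\int\N{x-\widetilde{x}^{\mathrm{CH}}_{k}}_2\,d\nu_k(x) \leq \N{\widetilde{x}^{\mathrm{CH}}_{k}-x^{\mathrm{CH}}_{k-1}}_2 + \int\N{x-x^{\mathrm{CH}}_{k-1}}_2\,d\nu_k(x)$; the first summand is the preparatory bound, while the second is the mean absolute deviation of a centred Gaussian of covariance $2\widetilde\sigma^2\Id$, which by Jensen's inequality is at most $(\int\N{x-x^{\mathrm{CH}}_{k-1}}_2^2\,d\nu_k(x))^{1/2}=\sqrt{2d}\,\widetilde\sigma$.

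The remaining and most delicate estimate is the lower bound on $\nu_k(B_r(\widetilde{x}^{\mathrm{CH}}_{k}))$, where the bookkeeping of constants is the real obstacle. I would lower bound the Gaussian density $p(x)=(4\pi\widetilde\sigma^2)^{-d/2}\exp(-\N{x-x^{\mathrm{CH}}_{k-1}}_2^2/(4\widetilde\sigma^2))$ uniformly over the ball: for $x\in B_r(\widetilde{x}^{\mathrm{CH}}_{k})$ the triangle inequality and $(a+b)^2\leq 2a^2+2b^2$ give $\N{x-x^{\mathrm{CH}}_{k-1}}_2^2\leq 2r^2+2\N{\widetilde{x}^{\mathrm{CH}}_{k}-x^{\mathrm{CH}}_{k-1}}_2^2$, and the preparatory bound converts the displacement into the $8\tau^2C_1^2(\dots)$ term inside the exponential. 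Multiplying this pointwise density lower bound by the Euclidean ball volume $\pi^{d/2}r^d/\Gamma(\tfrac{d}{2}+1)$ and simplifying $\pi^{d/2}(4\pi\widetilde\sigma^2)^{-d/2}=(2\widetilde\sigma)^{-d}$ yields the claim. The main care needed is to track the normalisation of $\nu_k$ (variance $2\widetilde\sigma^2$, hence $1/(4\widetilde\sigma^2)$ in the exponent) against the factor-of-two losses, so that the final constants collapse exactly to $1/(2\widetilde\sigma^2)$ and $(2\widetilde\sigma)^{-d}$ as stated.
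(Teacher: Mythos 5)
Your proposal is correct and follows essentially the same route as the paper's proof: the displacement bound $\N{\widetilde{x}^{\mathrm{CH}}_{k}-x^{\mathrm{CH}}_{k-1}}_2\leq 2\tau C_1(\N{x^{\mathrm{CH}}_{k-1}}_2+\N{\widetilde{x}^{\mathrm{CH}}_{k}}_2)$ obtained from the minimizer comparison and \ref{asm:local_Lipschitz_quadratic_growth}, the split of $(\widetilde\CE_k)_r$ into quadratic and $\CE$ parts, the uniform Gaussian-density lower bound over the ball via $\N{x-x^{\mathrm{CH}}_{k-1}}_2^2\leq 2r^2+2\N{\widetilde{x}^{\mathrm{CH}}_{k}-x^{\mathrm{CH}}_{k-1}}_2^2$, and the triangle-plus-Jensen argument for the first moment all match the paper's computations. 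The only difference is organizational: you front-load the displacement estimate, whereas the paper derives the three bounds in terms of $\N{x^{\mathrm{CH}}_{k-1}-\widetilde{x}^{\mathrm{CH}}_{k}}_2$ and substitutes the displacement bound in a concluding step.
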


\begin{proof}
	We investigate the expressions~$(\widetilde\CE_{k})_r$, $\nu_k\big(B_r(\widetilde{x}^{\mathrm{CH}}_{k})\big)$ and $\int \N{x-\widetilde{x}^{\mathrm{CH}}_{k}}_2 d\nu_k(x)$ individually.
	
	\textbf{Term~$(\widetilde\CE_{k})_r$:}
	By definition (see Proposition~\ref{prop:LaplacePrinciple}) and under \ref{asm:local_Lipschitz_quadratic_growth} it holds
	\begin{align*}
		(\widetilde\CE_{k})_r
		&= \sup_{x\in B_r(\widetilde{x}^{\mathrm{CH}}_{k})} \widetilde\CE_{k}(x) - \widetilde\CE_{k}(\widetilde{x}^{\mathrm{CH}}_{k}) \\
		&\leq \frac{1}{2\tau} \sup_{x\in B_r(\widetilde{x}^{\mathrm{CH}}_{k})} \left(\N{x^{\mathrm{CH}}_{k-1} - x}_2^2 - \N{x^{\mathrm{CH}}_{k-1} - \widetilde{x}^{\mathrm{CH}}_{k}}_2^2\right) + \sup_{x\in B_r(\widetilde{x}^{\mathrm{CH}}_{k})} \CE(x) - \CE(\widetilde{x}^{\mathrm{CH}}_{k}) \\
		&\leq \frac{1}{2\tau} \left(r + 2\N{x^{\mathrm{CH}}_{k-1} - \widetilde{x}^{\mathrm{CH}}_{k}}_2\right)r + C_1 \left(1+r+2\N{\widetilde{x}^{\mathrm{CH}}_{k}}_2\right) r \\
		&\leq \left(\frac{1}{2\tau} \left(r + 2\N{x^{\mathrm{CH}}_{k-1} - \widetilde{x}^{\mathrm{CH}}_{k}}_2\right) + C_1 \left(1+r+2\N{\widetilde{x}^{\mathrm{CH}}_{k}}_2\right) \right)r.
	\end{align*}

	\textbf{Term~$\nu_k\big(B_r(\widetilde{x}^{\mathrm{CH}}_{k})\big)$:}
	Using the density of the multivariate normal distribution~$\nu_k = \CN\!\left(x^{\mathrm{CH}}_{k-1},2\widetilde{\sigma}^2\Id\right)$ we can directly compute
	\begin{align*}
		\nu_k\big(B_r(\widetilde{x}^{\mathrm{CH}}_{k})\big)
		&= \frac{1}{(4\pi\widetilde\sigma^2)^{d/2}} \int_{B_r(\widetilde{x}^{\mathrm{CH}}_{k})} \exp\left(-\frac{1}{4\widetilde\sigma^2}\N{x-x^{\mathrm{CH}}_{k-1}}_2^2\right) d\lambda(x) \\
		&\geq \frac{1}{(4\pi\widetilde\sigma^2)^{d/2}} \int_{B_r(\widetilde{x}^{\mathrm{CH}}_{k})} \exp\left(-\frac{1}{2\widetilde\sigma^2}\left(\N{x-\widetilde{x}^{\mathrm{CH}}_{k}}_2^2+\N{\widetilde{x}^{\mathrm{CH}}_{k}-x^{\mathrm{CH}}_{k-1}}_2^2\right)\right) d\lambda(x) \\
		&\geq \frac{1}{(4\pi\widetilde\sigma^2)^{d/2}} \exp\left(-\frac{1}{2\widetilde\sigma^2}\left(r^2+\N{\widetilde{x}^{\mathrm{CH}}_{k}-x^{\mathrm{CH}}_{k-1}}_2^2\right)\right) \int_{B_r(\widetilde{x}^{\mathrm{CH}}_{k})} d\lambda(x) \\
		&= \frac{1}{(2\widetilde\sigma)^{d}} \exp\left(-\frac{1}{2\widetilde\sigma^2}\left(r^2+\N{\widetilde{x}^{\mathrm{CH}}_{k}-x^{\mathrm{CH}}_{k-1}}_2^2\right)\right) \frac{1}{\Gamma\left(\frac{d}{2}+1\right)}r^d,
	\end{align*}
	where we used in the last step that the volume of a $d$-dimensional unit ball is $\pi^{d/2}/\Gamma\left(\frac{d}{2}+1\right)$.
	Here, $\Gamma$ denotes Euler's gamma function. 
	We recall for the readers' convenience that by Stirling's approximation $\Gamma\left(x+1\right) \sim \sqrt{2\pi x}\left(x/e\right)^x$ as $x\rightarrow\infty$.
	
	\textbf{Term~$\int \N{x-\widetilde{x}^{\mathrm{CH}}_{k}}_2 d\nu_k(x)$:}
	A straightforward computation gives
	\begin{align*}
		\int \N{x-\widetilde{x}^{\mathrm{CH}}_{k}}_2 d\nu_k(x)
		&= \int \N{x - \widetilde{x}^{\mathrm{CH}}_{k}}_2 d\CN\!\left(x^{\mathrm{CH}}_{k-1},2\widetilde{\sigma}^2\Id\right)(x) \\
		&= \int \N{x + x^{\mathrm{CH}}_{k-1} - \widetilde{x}^{\mathrm{CH}}_{k}}_2 d\CN\!\left(0,2\widetilde{\sigma}^2\Id\right)(x) \\
		&\leq \N{x^{\mathrm{CH}}_{k-1}-\widetilde{x}^{\mathrm{CH}}_{k}}_2 + \int \N{x}_2 d\CN\!\left(0,2\widetilde{\sigma}^2\Id\right)(x) \\
		&\leq \N{x^{\mathrm{CH}}_{k-1}-\widetilde{x}^{\mathrm{CH}}_{k}}_2 + \sqrt{2d}\widetilde{\sigma}.
	\end{align*}
	
	\textbf{Concluding step:}
	To conclude the proof, we further observe that since $\widetilde{x}^{\mathrm{CH}}_{k}$ is the minimizer of $\widetilde\CE_{k}$, see \eqref{eq:CH_aux}, a comparison with $x^{\mathrm{CH}}_{k-1}$ yields
	\begin{align*}
		\frac{1}{2\tau} \N{x^{\mathrm{CH}}_{k-1} - \widetilde{x}^{\mathrm{CH}}_{k}}_2^2 + \CE(\widetilde{x}^{\mathrm{CH}}_{k})
		\leq \CE(x^{\mathrm{CH}}_{k-1}).
	\end{align*}
	With \ref{asm:local_Lipschitz_quadratic_growth} it therefore holds
	\begin{align*}
		\N{x^{\mathrm{CH}}_{k-1} - \widetilde{x}^{\mathrm{CH}}_{k}}_2^2
		&\leq 2\tau\left(\CE(x^{\mathrm{CH}}_{k-1}) - \CE(\widetilde{x}^{\mathrm{CH}}_{k})\right) 
		\leq 2\tau C_1\left(1+\N{x^{\mathrm{CH}}_{k-1}}_2 + \Nbig{\widetilde{x}^{\mathrm{CH}}_{k}}_2\right)\N{x^{\mathrm{CH}}_{k-1}-\widetilde{x}^{\mathrm{CH}}_{k}}_2,
	\end{align*}
	or rephrased
	\begin{align*}
		\N{x^{\mathrm{CH}}_{k-1} - \widetilde{x}^{\mathrm{CH}}_{k}}_2
		&\leq 2\tau C_1\left(1+\N{x^{\mathrm{CH}}_{k-1}}_2 + \Nbig{\widetilde{x}^{\mathrm{CH}}_{k}}_2\right).
	\end{align*}
	Exploiting this estimate in the former bounds, gives the statements.
\end{proof}

\subsection{Proof of Theorem~\ref{thm:relaxation_CH_GF}}
	\label{subsec:appendix:proof:thm:relaxation_CH_GF}
	
We now have all necessary tools at hand to present the detailed proof of Theorem~\ref{thm:relaxation_CH_GF}.

\begin{proof}[Proof of Theorem~\ref{thm:relaxation_CH_GF}]
	We combine in what follows Proposition~\ref{prop:relaxation_CH_GF} with a stability argument for the MMS~\eqref{eq:MMS}.
    
    To obtain the probabilistic formulation of the statement, let us denote, as in the proof of Proposition~\ref{prop:relaxation_CH_GF}, the underlying probability space by $(\Omega,\CF,\bbP)$ (note that we can use the same probability space as in Section~\ref{sec:appendix:thm:relaxation_CBO_CH} since the stochasticity of the three schemes~\eqref{eq:CH}, \eqref{eq:CH_aux} and \eqref{eq:MMS} is solely coming from the initialization) and introduce the subset~$\widetilde\Omega_M$ of $\Omega$ of suitably bounded random variables according to
	\begin{align*}
		\widetilde\Omega_M
		:= \left\{\omega\in\Omega : \max_{k=0,\dots,K} \max\left\{\N{x^{\mathrm{CH}}_{k}}_2,\Nbig{\widetilde{x}^{\mathrm{CH}}_{k}}_2\right\} \leq M \right\}.
	\end{align*}	
	For the associated cutoff function (random variable) we write $\mathbbm{1}_{\widetilde\Omega_M}$.
	
	We can decompose the expected squared discrepancy $\bbE\N{x^{\mathrm{MMS}}_{k}-x^{\mathrm{CH}}_{k}}_2^2\mathbbm{1}_{\widetilde\Omega_M}$ between the MMS~\eqref{eq:MMS} and the CH scheme~\eqref{eq:CH} for any $\vartheta\in(0,1)$ as
	\begin{align} \label{eq:proof:thm:error_decomposition_MMS_CH:3}
		\bbE\N{x^{\mathrm{MMS}}_{k}-x^{\mathrm{CH}}_{k}}_2^2\mathbbm{1}_{\widetilde\Omega_M}
		\leq
		(1+\vartheta)\,\bbE\N{x^{\mathrm{MMS}}_{k}-\widetilde{x}^{\mathrm{CH}}_{k}}_2^2\mathbbm{1}_{\widetilde\Omega_M} + (1+\vartheta^{-1})\,\bbE\N{\widetilde{x}^{\mathrm{CH}}_{k}-x^{\mathrm{CH}}_{k}}_2^2\mathbbm{1}_{\widetilde\Omega_M}.
	\end{align}
	In what follows we individually estimate the two terms on the right-hand side of~\eqref{eq:proof:thm:error_decomposition_MMS_CH:3}.
	
	\textbf{First term:}
	Let us first bound the term $\bbE\N{x^{\mathrm{MMS}}_{k}-\widetilde{x}^{\mathrm{CH}}_{k}}_2^2\mathbbm{1}_{\widetilde\Omega_M}$.
	By definition of $x^{\mathrm{MMS}}_{k}$ and $\widetilde{x}^{\mathrm{CH}}_{k}$ as minimizers of \eqref{eq:MMS} and \eqref{eq:CH_aux}, respectively,
	and with the definition $\CE_\Lambda(x) := \CE(x) - \frac{\Lambda}{2}\!\N{x}_2^2$
	it holds
	\begin{align*}
		\frac{(1+\tau\Lambda)x^{\mathrm{MMS}}_{k} - x^{\mathrm{MMS}}_{k-1}}{\tau} \in - \partial\CE_\Lambda(x^{\mathrm{MMS}}_{k})
		\quad\text{ and }\quad
		\frac{(1+\tau\Lambda)\widetilde{x}^{\mathrm{CH}}_{k} - x^\mathrm{CH}_{k-1}}{\tau} \in - \partial\CE_\Lambda(\widetilde{x}^{\mathrm{CH}}_{k}).
	\end{align*}
	Since $\CE_\Lambda$ is convex due to \ref{asm:lambda-convex} and as consequence of the properties of the subdifferential we have
	\begin{align*}
		\left\langle -\frac{(1+\tau\Lambda)x^{\mathrm{MMS}}_{k} - x^{\mathrm{MMS}}_{k-1}}{\tau} + \frac{(1+\tau\Lambda)\widetilde{x}^{\mathrm{CH}}_{k} - x^\mathrm{CH}_{k-1}}{\tau}, x^{\mathrm{MMS}}_{k}-\widetilde{x}^{\mathrm{CH}}_{k}\right\rangle \geq 0,
	\end{align*}
	which allows to obtain by means of Cauchy-Schwarz inequality
	\begin{align*}
		(1+\tau\Lambda)\N{x^{\mathrm{MMS}}_{k}\!-\!\widetilde{x}^{\mathrm{CH}}_{k}}_2^2
		&\leq \left\langle x^{\mathrm{MMS}}_{k-1}\!-\!x^\mathrm{CH}_{k-1},x^{\mathrm{MMS}}_{k}\!-\!\widetilde{x}^{\mathrm{CH}}_{k}\right\rangle 
		\leq \N{x^{\mathrm{MMS}}_{k-1}\!-\!x^\mathrm{CH}_{k-1}}_2 \N{x^{\mathrm{MMS}}_{k}\!-\!\widetilde{x}^{\mathrm{CH}}_{k}}_2
	\end{align*}
	or, equivalently,
	\begin{align} \label{eq:proof:thm:error_decomposition_MMS_CH:9}
		\N{x^{\mathrm{MMS}}_{k}-\widetilde{x}^{\mathrm{CH}}_{k}}_2
		\leq \frac{1}{1+\tau\Lambda} \N{x^{\mathrm{MMS}}_{k-1}-x^\mathrm{CH}_{k-1}}_2.
	\end{align}
	
	\textbf{Second term:}
	For the term~$\bbE\N{\widetilde{x}^{\mathrm{CH}}_{k}-x^{\mathrm{CH}}_{k}}_2^2\mathbbm{1}_{\widetilde\Omega_M}$ we obtained in \eqref{eq:proof:thm:error_decomposition_MMS_CH:33} in the proof of Proposition~\ref{prop:relaxation_CH_GF}, for suitable choices of $\widetilde\sigma$ and $\alpha$, the bound
     \begin{align}
		\label{eq:proof:thm:error_decomposition_MMS_CH:33_aux}
	\begin{aligned}
		\bbE\N{x^{\mathrm{CH}}_{k} - \widetilde{x}^{\mathrm{CH}}_{k}}_2^2\mathbbm{1}_{\widetilde\Omega_M}
        &\leq c\tau^2
	\end{aligned}
	\end{align}
    with a constant $c=c(C_1,M)$.
	
	\textbf{Concluding step:}
	Combining this with the estimate~\eqref{eq:proof:thm:error_decomposition_MMS_CH:9} yields for \eqref{eq:proof:thm:error_decomposition_MMS_CH:3} the bound
	\begin{align}
	\begin{aligned}
		\bbE\N{x^{\mathrm{MMS}}_{k}-x^{\mathrm{CH}}_{k}}_2^2\mathbbm{1}_{\widetilde\Omega_M}
		&\leq
		\frac{1+\vartheta}{\left(1+\tau\Lambda\right)^2} \bbE\N{x^{\mathrm{MMS}}_{k-1}-x^\mathrm{CH}_{k-1}}_2^2 \mathbbm{1}_{\widetilde\Omega_M} + c(1+\vartheta^{-1})\,\tau^2.
	\end{aligned}
	\end{align}
	An application of the discrete variant of Gr\"onwall's inequality~\eqref{eq:gronwall_inequality} shows that
	\begin{align}
		\label{eq:proof:thm:error_decomposition_MMS_CH:35}
	\begin{aligned}
		\bbE\N{x^{\mathrm{MMS}}_{k}-x^{\mathrm{CH}}_{k}}_2^2\mathbbm{1}_{\widetilde\Omega_M}
		&\leq
		c(1+\vartheta^{-1})\,\tau^2 \sum_{\ell=0}^{k-1} \left(\frac{1+\vartheta}{\left(1+\tau\Lambda\right)^2}\right)^\ell
	\end{aligned}
	\end{align}
	for all $k=1,\dots,K$, where we used that both schemes are initialized by the same~$x_0$.
	
	\textbf{Probabilistic formulation:}
	We first note that with Markov's inequality we have the estimate
	\begin{align*}
	\begin{aligned}
		\bbP\big(\widetilde\Omega_M^c\big)
		&= \bbP\left(\max_{k=0,\dots,K} \max\left\{\N{x^{\mathrm{CH}}_{k}}_2,\Nbig{\widetilde{x}^{\mathrm{CH}}_{k}}_2\right\}   > M \right) \\
		&\leq \frac{1}{M^4} \left(\bbE\max_{k=0,\dots,K} \N{x^{\mathrm{CH}}_{k}}_2^4+\bbE\max_{k=0,\dots,K}\Nbig{\widetilde{x}^{\mathrm{CH}}_{k}}_2^4\right)\\ 
		&\leq \frac{1}{M^4} \big(\CM^{\mathrm{CH}}+\widetilde{\CM}^{\mathrm{CH}}\;\!\big),
	\end{aligned}
	\end{align*}
	where the last inequality is due to Lemmas~\ref{lem:boundedness_CH} and~\ref{lem:boundedness_auxiliaryMMS}.
	Thus, for any $\delta\in(0,1/2)$, a sufficiently large choice $M = M(\delta^{-1},\CM^{\mathrm{CH}},\widetilde{\CM}^{\mathrm{CH}})$ allows to ensure $\bbP\big(\widetilde\Omega_M^c\big)\leq \delta$.
	To conclude the proof, let us denote by $\widetilde K_\varepsilon\subset\Omega$ the set, where~\eqref{eq:bound_CH_GF} does not hold and abbreviate
	\begin{align*}
		\epsilon
		= \varepsilon^{-1} c(1+\vartheta^{-1})\,\tau^2 \sum_{\ell=0}^{k-1} \left(\frac{1+\vartheta}{\left(1+\tau\Lambda\right)^2}\right)^\ell.
	\end{align*}
	For the probability of this set we can estimate
	\begin{align}
	\begin{aligned}
		\bbP\big(\widetilde K_\varepsilon\big)
		&= \bbP\big(\widetilde K_\varepsilon\cap\widetilde\Omega_M\big) + \bbP\big(\widetilde K_\varepsilon\cap\widetilde\Omega_M^c\big)
		\leq \bbP\big(\widetilde K_\varepsilon\,\big|\,\widetilde\Omega_M\big)\,\bbP\big(\widetilde\Omega_M\big) + \bbP\big(\widetilde\Omega_M^c\big) \\
		&\leq \bbP\big(\widetilde K_\varepsilon\,\big|\,\widetilde\Omega_M\big) + \delta
		\leq \epsilon^{-1}\,\bbE\left[\N{x^{\mathrm{MMS}}_{k}-x^{\mathrm{CH}}_{k}}_2^2 \,\Big|\, \widetilde\Omega_M\right] + \delta,
	\end{aligned}
	\end{align}
	where the last step is due to Markov's inequality.
    By definition of the conditional expectation we further have
	\begin{align*}
	\begin{aligned}
		\bbE\left[\N{x^{\mathrm{MMS}}_{k}-x^{\mathrm{CH}}_{k}}_2^2 \,\Big|\, \widetilde\Omega_M\right]
		\leq \frac{1}{\bbP\big(\widetilde\Omega_M\big)} \bbE\N{x^{\mathrm{MMS}}_{k}-x^{\mathrm{CH}}_{k}}_2^2\mathbbm{1}_{\widetilde\Omega_M}
		\leq 2\bbE\N{x^{\mathrm{MMS}}_{k}-x^{\mathrm{CH}}_{k}}_2^2\mathbbm{1}_{\widetilde\Omega_M}.
	\end{aligned}
	\end{align*}
	Inserting now the expression from \eqref{eq:proof:thm:error_decomposition_MMS_CH:35} concludes the proof.
\end{proof}

\newpage
\section{Additional numerical experiments} \label{sec:appendix:additional_numerics}

\subsection{Comparison of the CH scheme~\eqref{eq:CH} for different sampling widths~$\widetilde\sigma$} 

To complement Figure~\ref{fig:CH_GrandCanyon3noisy}, we visualize in Figure~\ref{fig:comparison_CH} the influence of the sampling width~$\widetilde\sigma$ on the behavior of the CH scheme~\eqref{eq:CH}.

\begin{figure}[ht]
	\centering
	\subcaptionbox{The CH scheme~\eqref{eq:CH} with sampling width~$\widetilde\sigma=0.4$ gets stuck in a local minimum of $\CE$. \label{fig:comparison_CH_1}}{\includegraphics[trim=28 209 31 200,clip,width=0.29\textwidth]{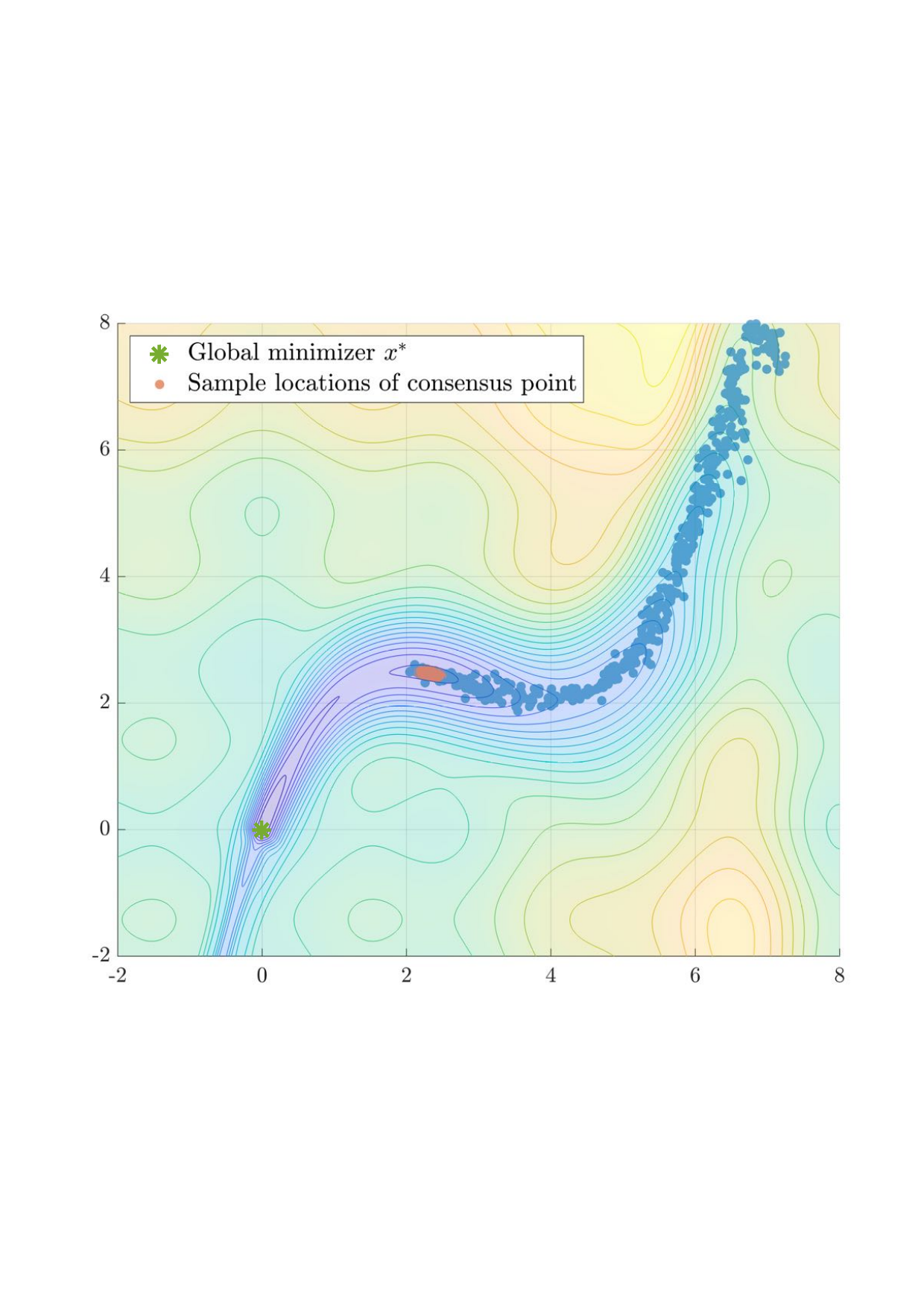}}
	\hspace{0.8em}
	\subcaptionbox{The CH scheme~\eqref{eq:CH} with sampling width~$\widetilde\sigma=0.6$ can occasionally escape local minima of $\CE$. \label{fig:comparison_CH_2}}{\includegraphics[trim=28 209 31 200,clip,width=0.29\textwidth]{img/low_resolution/CH/CH_GrandCanyon3noisy_sigma0_6}}
	\hspace{0.8em}
	\subcaptionbox{The CH scheme~\eqref{eq:CH} with sampling width~$\widetilde\sigma=0.7$ can escape local minima of $\CE$. \label{fig:comparison_CH_3}}{\includegraphics[trim=28 209 31 200,clip,width=0.29\textwidth]{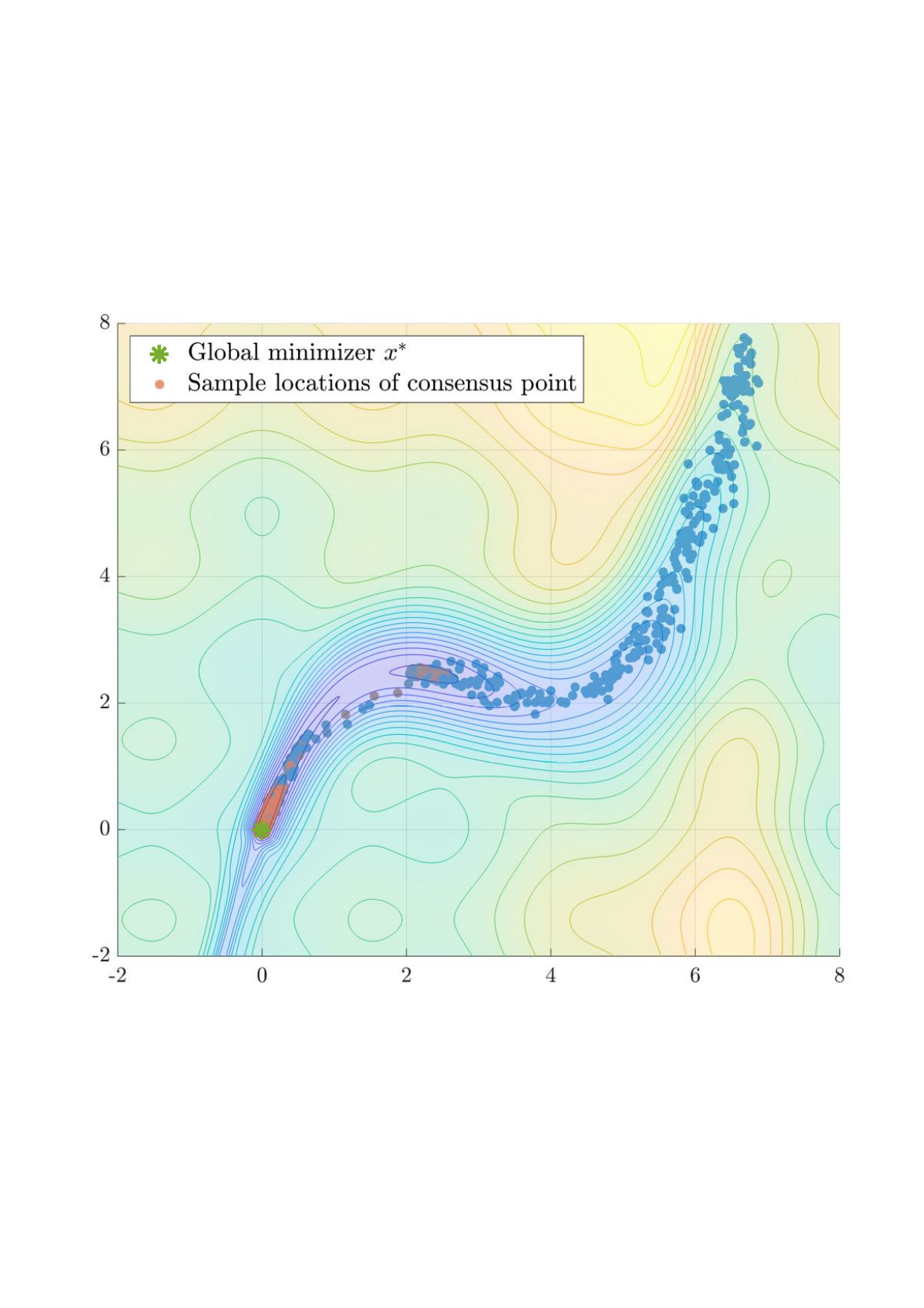}}
	\caption{A visual comparison of the CH scheme~\eqref{eq:CH} for different sampling widths~$\widetilde\sigma$.
	We depict the positions of the consensus hopping scheme~\eqref{eq:CH} for different values of $\widetilde\sigma$ ($0.4$ in (a), $0.6$ in (b) and $0.7$ in (c)) in the setting of Figure~\ref{fig:CH_GrandCanyon3noisy}.
	While for small $\widetilde\sigma$ the numerical scheme gets stuck in a local minimum of the objective, the ability to escape such critical points improves with larger $\widetilde\sigma$.
	Notice that (b) coincides with Figure~\ref{fig:CH_GrandCanyon3noisy}.}
	\label{fig:comparison_CH}
\end{figure}

\newpage
\subsection{The numerical experiments of Figures~\ref{fig:intuitionGiAyN} and~\ref{fig:comparison_algorithms} for a different objective} 

In the style of Figures~\ref{fig:intuitionGiAyN} and~\ref{fig:comparison_algorithms} we provide in Figure~\ref{fig:additional_experiments} an additional set of illustrations of the behavior of the different algorithms analyzed in this work for a noisy Canyon function with a valley shaped as a second degree polynomial.
\begin{figure}[h!]
	\centering
	\subcaptionbox{A noisy Canyon function~$\CE$ with a valley shaped as a second degree polynomial~\label{fig:GrandCanyon2noisy}}{\includegraphics[trim=91 251 79 250,clip,width=0.45\textwidth]{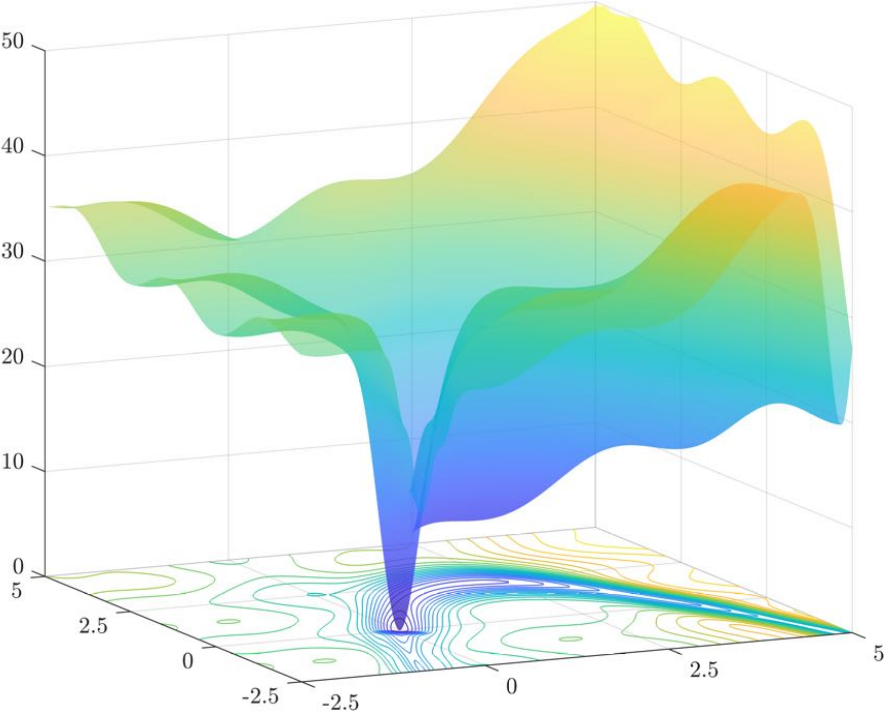}}
	\hspace{1.5em}
	\subcaptionbox{The CBO scheme~\eqref{eq:CBO} (sampled over several runs) follows on average the valley while passing over local minima. \label{fig:CBO_GrandCanyon2noisy}}{\includegraphics[trim=28 209 31 200,clip,width=0.45\textwidth]{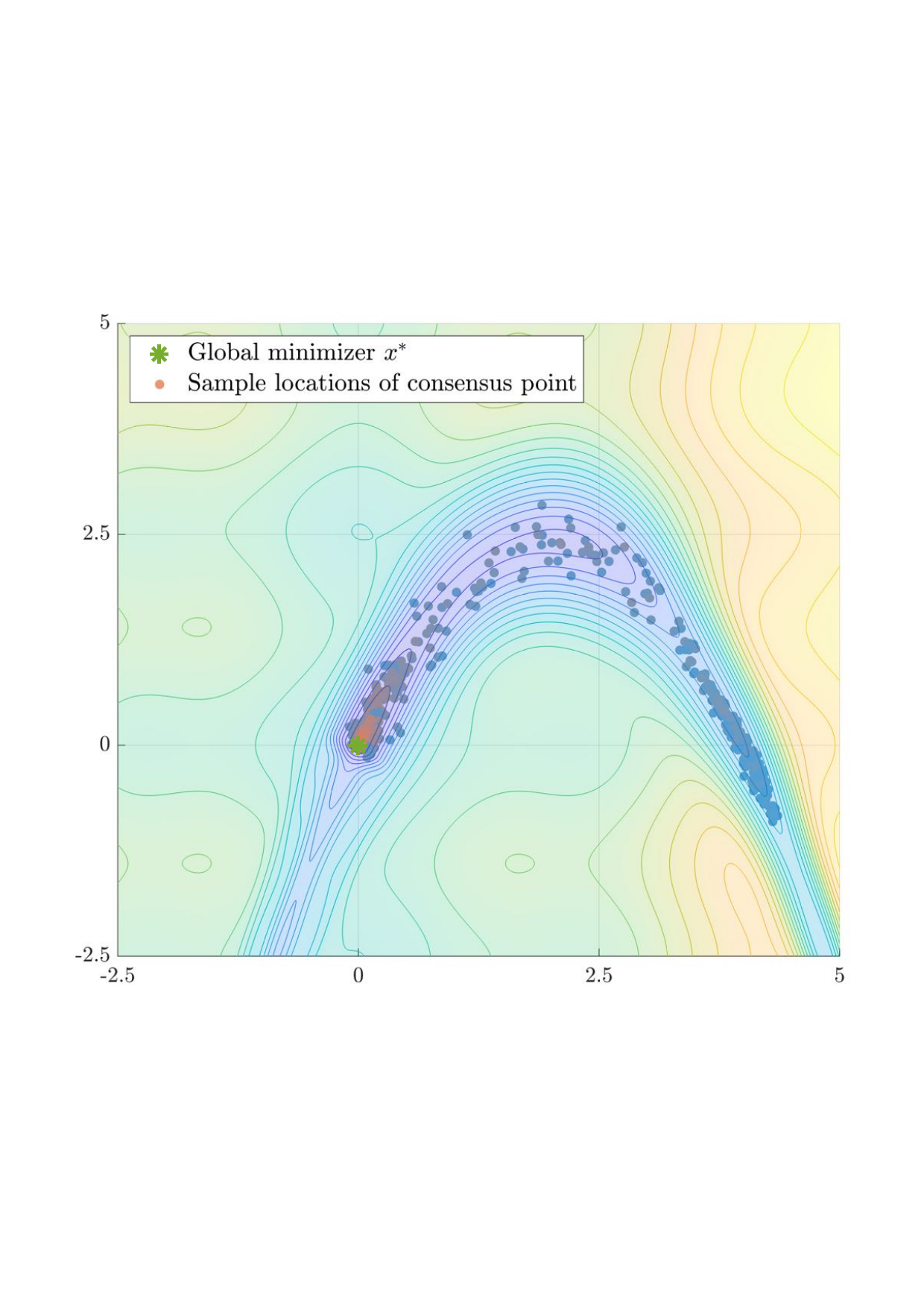}}
	\\ \vspace{1em}
	\subcaptionbox{The CH scheme~\eqref{eq:CH} (sampled over several runs) follows on average the valley of $\CE$ and can occasionally escape local minima. \label{fig:CH_GrandCanyon2noisy}}{\includegraphics[trim=28 209 31 200,clip,width=0.29\textwidth]{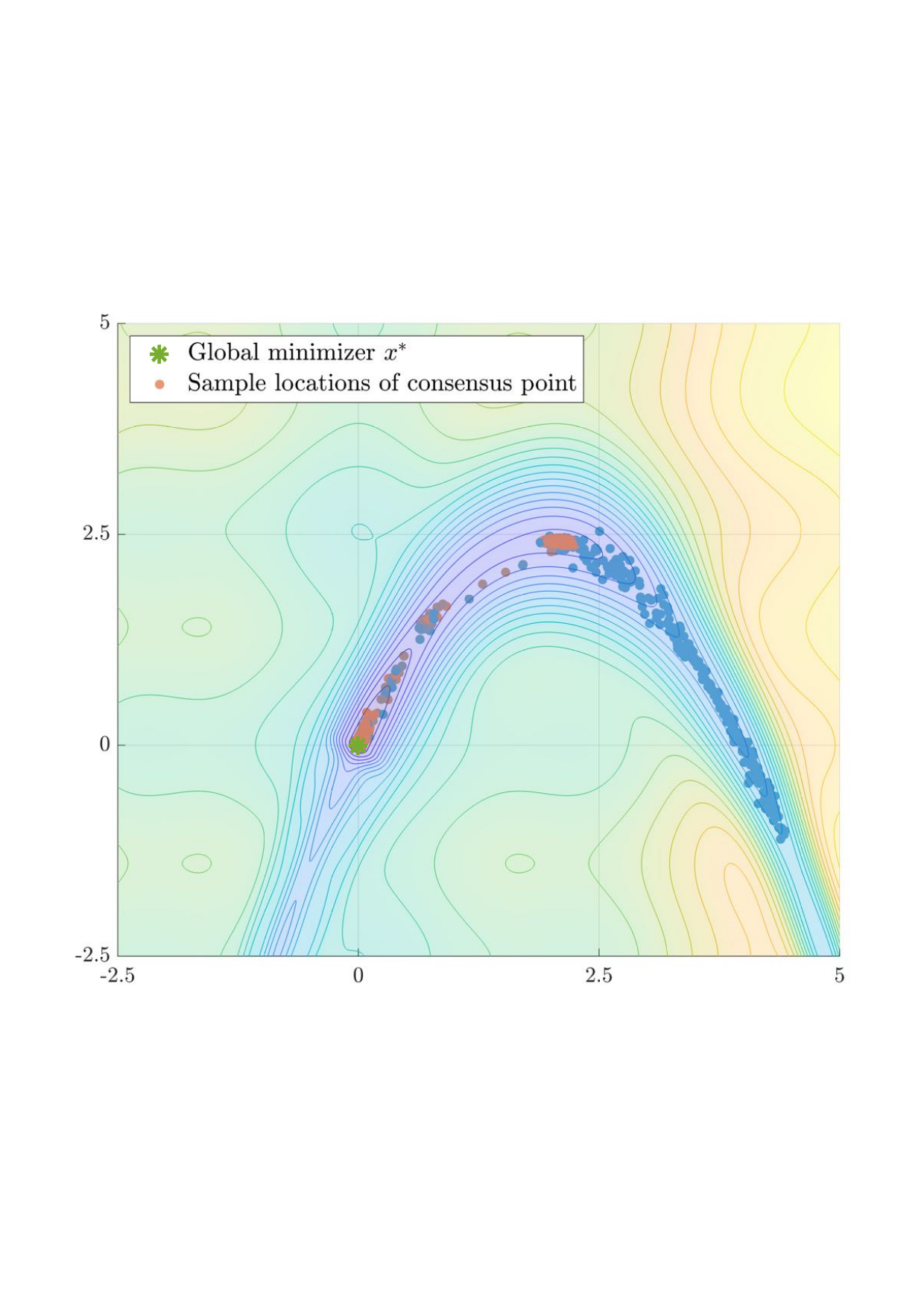}}
	\hspace{0.75em}
	\subcaptionbox{GD gets stuck in a local minimum of $\CE$. \label{fig:GD_GrandCanyon2noisy}}{\includegraphics[trim=28 209 31 200,clip,width=0.29\textwidth]{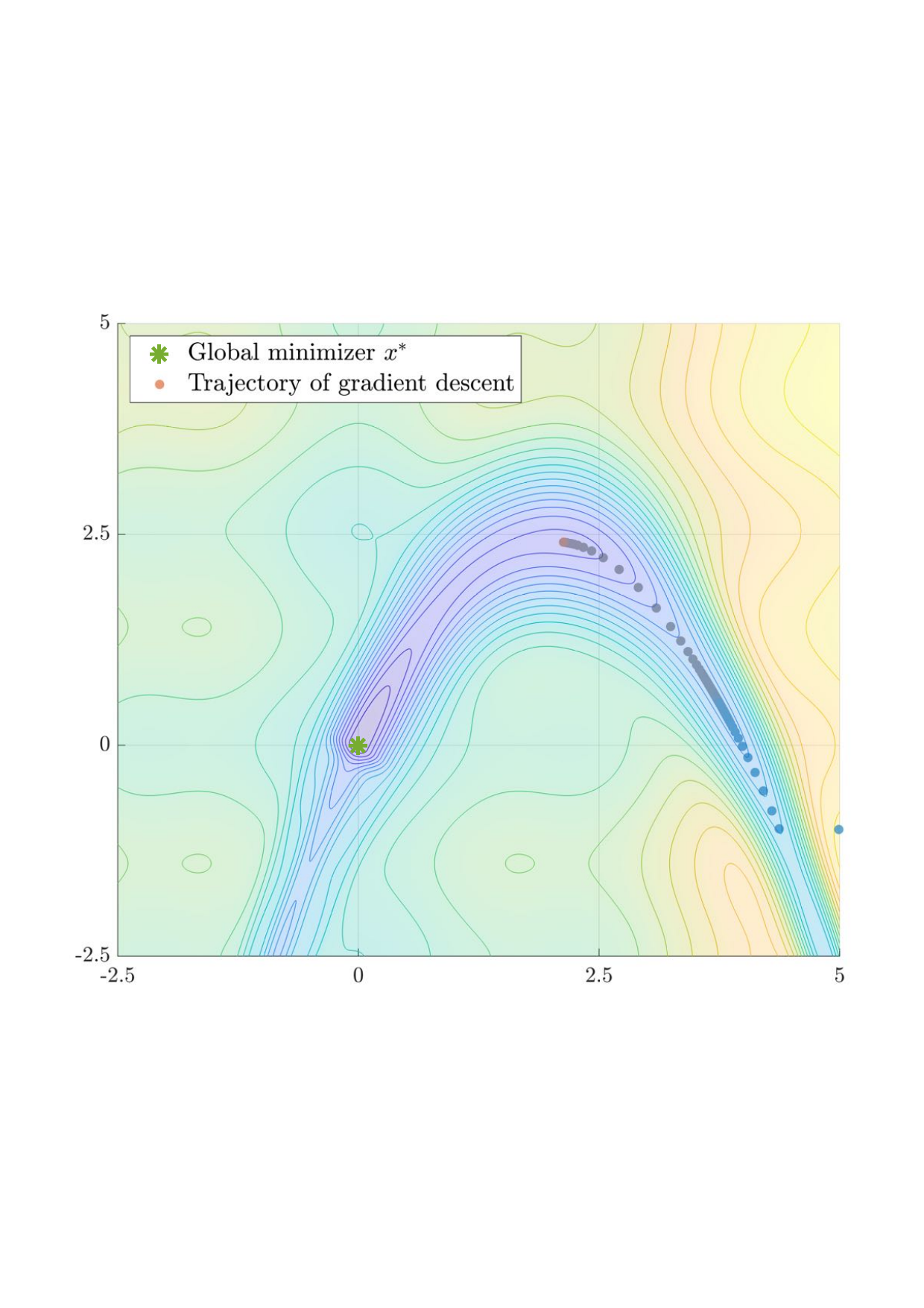}}
	\hspace{0.75em}
	\subcaptionbox{The Langevin dynamics (sampled over several runs) follows on average the valley of $\CE$ and escapes local minima. \label{fig:Langevin_GrandCanyon2noisy}}{\includegraphics[trim=28 209 31 200,clip,width=0.29\textwidth]{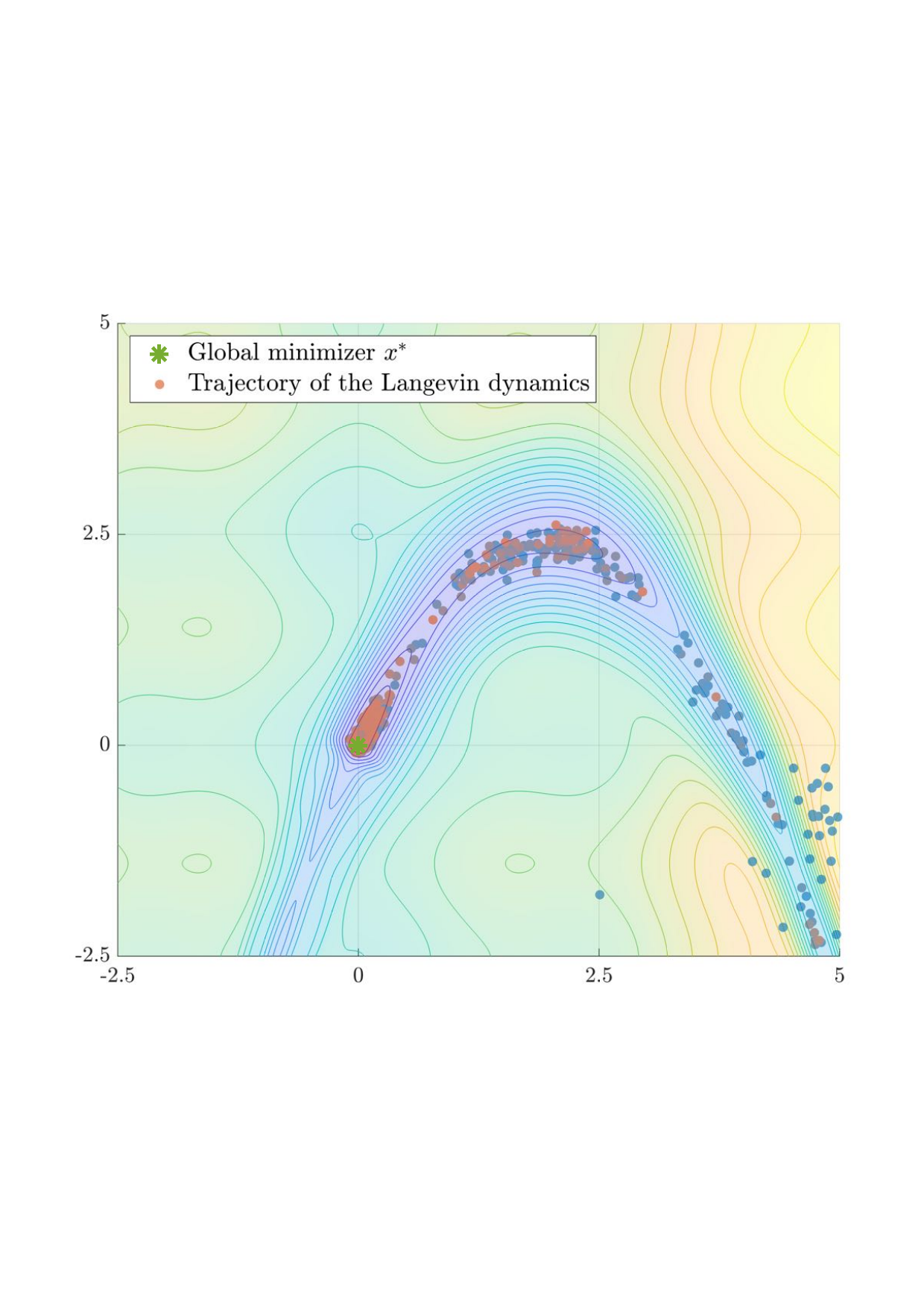}}
	\caption{An additional numerical experiment illustrating the behavior of the CBO scheme~\eqref{eq:CBO} (see~(b)), the consensus hopping scheme~\eqref{eq:CH} (see~(c)), GD (see~(d)) and the overdamped Langevin dynamics (see~(e)) in search of the global minimizer~$\globmin$ of the nonconvex objective function~$\CE$ depicted in (a).
	The experimental setting is the one of Figures~\ref{fig:intuitionGiAyN} and~\ref{fig:comparison_algorithms} with the only difference of the particles being initialized around~$(5,-1)$.}
	\label{fig:additional_experiments}
\end{figure}


\end{document}